%% file: 2026_NeurIPS_flowce_main.tex
\documentclass{article}

\usepackage[main,final]{neurips_2026}

\usepackage[utf8]{inputenc} 
\usepackage[T1]{fontenc}    
\usepackage{hyperref}       
\usepackage{url}            
\usepackage{booktabs}       
\usepackage{amsfonts}       
\usepackage{nicefrac}       
\usepackage{microtype}      
\usepackage[table]{xcolor}  
\usepackage{enumitem}

\usepackage{algorithm}
\usepackage{algorithmic}
\usepackage{multirow}
\usepackage{array}
\usepackage{caption} 
\usepackage{wrapfig}

  \usepackage{amsmath}

  \usepackage{graphicx}
  \usepackage{bbm}
  \usepackage{booktabs}        
  \usepackage[table]{xcolor}
  \usepackage{colortbl}

  \usepackage{booktabs}
  \usepackage[table]{xcolor}
  \usepackage{siunitx}
\definecolor{newcolor}{RGB}{255, 102, 0}

\definecolor{urlred}{rgb}{0.9, 0.29, 0.0}

\hypersetup{
  colorlinks=true,
  linkcolor=black,
  citecolor=black,
  urlcolor=urlred
}

\usepackage{wrapfig}
  \usepackage{float}

  \usepackage{pifont}
  \newcommand{\cmark}{\ding{51}} 
  \newcommand{\xmark}{\ding{55}} 

  \DeclareMathOperator*{\argmin}{argmin}

  \newcommand{\bfX}{\mathbf{X}}
  
  \newcommand{\bfmu}{\boldsymbol{\mu}}
  \newcommand{\bfSigma}{\boldsymbol{\Sigma}}
  \newcommand{\bfL}{\mathbf{L}}

  \newcommand{\indic}{{\mathbbm{1}}}
  \newcommand{\Var}{{\rm Var}\,}
  \newcommand{\Exp}{{\rm I\hspace{-0.8mm}E}\,}
  \newcommand{\Prob}{{\mathbb{P}}\,}
  \newcommand{\calN}{{\mathcal{N}}\,}

  \newcommand{\x}{{\mathbf{x}}}
  \newcommand{\bfx}{{\mathbf{x}}}
  \newcommand{\bfw}{{\mathbf{w}}}
  \newcommand{\X}{{\mathbf{X}}}

  \newcommand{\bfS}{\mathbf{S}}

  \usepackage{amsthm}
  \newcommand{\bftheta}{\boldsymbol{\theta}}

  \usepackage{amsfonts}

      \newtheorem{theorem}{Theorem}
      
      \newtheorem{proposition}{Proposition}

\title{Robust Importance Sampling for Rare Events    via Constrained Gaussian Mixtures}

\author{%
\textbf{Paweł Lorek}\textnormal{\textsuperscript{1,2}}\thanks{Corresponding author: \texttt{pawel.lorek@math.uni.wroc.pl}}\quad
\textbf{Rafał Nowak}\textnormal{\textsuperscript{1,2}}\quad
\textbf{Rafał Topolnicki}\textnormal{\textsuperscript{1,2,3}}\\
\textbf{Tomasz Trzciński}\textnormal{\textsuperscript{2,4,5}}\quad
\textbf{Maciej Zięba}\textnormal{\textsuperscript{2,6}}\\[0.35em]
{\normalfont\small \textsuperscript{1}University of Wrocław\quad
\textsuperscript{2}Tooploox\quad
\textsuperscript{3}TRAILS University of Warsaw}\\
{\normalfont\small \textsuperscript{4}Warsaw University of Technology\quad
\textsuperscript{5}IDEAS Research Institute}\\
{\normalfont\small \textsuperscript{6}Wrocław University of Science and Technology}
}

\begin{document}

\maketitle

\begin{abstract}
We study estimating rare-event probabilities $I = \mathbb{P}(g(\mathbf{X}) > \gamma)$ with $\mathbf{X} \sim \mathcal{N}(\bfmu, \bfSigma)$ and general $g : \mathbb{R}^d \to \mathbb{R}$.
We address this problem through importance sampling,
and propose a framework that substantially improves efficiency and robustness over baselines such as crude Monte Carlo, adaptive cross-entropy, variational-inference-based methods (including reverse- and forward-KL approaches), as well as Safe-ICE, Subset Simulation, and Sequential Monte Carlo,
\textit{drawing}
on ideas from both rare-event estimation and cross-entropy optimization.
 The key contribution has two parts: first, we separate the problem into \emph{coverage}, to overcome the cold-start barrier, and \emph{fitting}, to refine proposals once a meaningful signal is available;  second,  we constrain the final GMM proposal so that it has finite importance-sampling variance (since coverage alone is not sufficient -- without safeguards, importance sampling may still suffer from infinite variance).
Together, these ingredients yield expressive proposals; finite variance does
not by itself guarantee practical stability at a fixed sampling budget. Extensive experiments demonstrate substantial variance reduction, strong robustness across diverse benchmarks, and favorable cost--efficiency trade-offs, with the proposed approach often outperforming these baselines, particularly in high-dimensional and multimodal settings where competing methods frequently become unstable or fail.
 Our code is available at \url{https://github.com/lorek/robust-cfi-is}.
\end{abstract}

 \section{Introduction}
  We consider events
  $\{g(\bfX)>\gamma\}$ with $\bfX\!\sim\!\calN(\bfmu,\bfSigma)$ and possibly nonconvex $g:\mathbb{R}^d\!\to\!\mathbb{R}$; let $f$ denote the density of $\bfX$.
  Our target is:
  \begin{equation}\label{eq:general_problem}
  I \;=\; \Prob\!\big(g(\bfX)>\gamma\big),
  \end{equation}
  in regimes where $I$ is tiny (e.g., $10^{-6}$–$10^{-3}$), rendering crude Monte Carlo impractical.

  A long line of work has sought to overcome this difficulty with importance sampling, cross-entropy (CE) methods, and variational inference (VI).
  While in some cases these approaches can reduce variance dramatically, in others they fail completely, since they face two recurring obstacles in rare-event regimes.
  First, \emph{cold start}: when the event probability is extremely small, the learning signal obtained from naive sampling is nearly absent, leading to unstable or stalled optimization.
  Second, \emph{lack of variance control}: minimizing KL divergence alone does not prevent proposals that yield infinite-variance estimators, even when optimization appears successful.
  These issues are well documented in reliability engineering, rare-event simulation, and, more recently, in machine learning.
  For example  Owen \cite{owen_var_is} noted: ``\textsl{Importance sampling can bring enormous gains, making an otherwise infeasible problem amenable to Monte Carlo. It can also backfire, yielding an estimate
with infinite variance when simple Monte Carlo would have had a finite variance.
It is the hardest variance reduction method to use well.}''

  We address these challenges by introducing a pipeline that cleanly separates coverage from fitting and builds robustness into the proposal family. The key idea is to first construct a proposal that is designed to provide  overlap with the rare set, ensuring that subsequent training receives a meaningful signal.
  We then refine this proposal within a constrained Gaussian mixture family whose covariance structure precludes the infinite-variance pathology often encountered in importance sampling.
  Finally, we optionally add a normalizing-flow layer to capture additional complexity. The finite-variance guarantee applies when the final proposal remains a constrained GMM and does not automatically extend to the flow-refined variant (although our experiments show that flow refinement often improves variance reduction while remaining stable across the tested benchmarks).

  In summary, our contributions are as follows: (i) We propose a two-stage, coverage-first framework for rare-event estimation that overcomes cold-start issues.  (ii) We extend a sufficient finite-variance condition from Gaussian to Gaussian mixture proposals and make it enforceable during proposal learning. (iii) We demonstrate strong variance reduction and robustness through extensive experiments across diverse benchmarks.

\section{Background}\label{sec:background}
We consider the problem of estimating rare probabilities, given by eq. \eqref{eq:general_problem}. To tackle the problem, we utilize the importance sampling (IS) technique:
\begin{equation}
  I=\Exp_{\X\sim h}[k_1(\X)],
  \quad
  k_1(\x)=\indic(g(\x)>\gamma)\,\frac{f(\x)}{h(\x)},
\end{equation}
where $h(\x)$ is the proposal distribution. Please note that zero-variance choice is $h^{\mathrm{opt}}(\x)=\indic(g(\x)>\gamma)f(\x)/I$, but to sample from $h^{\mathrm{opt}}$, knowledge\footnote{MCMC may (in some cases) sample from unnormalized targets, but breaks i.i.d.\ draws and standard unbiased IS.} of $I$ itself is required. We therefore restrict to a parametric family $h\equiv p_{\bftheta}$ and fit $p_{\bftheta}$ to approximate $h^{\mathrm{opt}}$.

 Baseline approaches tackle the problem by minimizing a KL divergence, either $\mathrm{KL}(p_{\bftheta}\|h^{\text{opt}})$
(the so-called Reverse KL (\textbf{RKL})), or $\mathrm{KL} (h^{\text{opt}}\|p_{\bftheta})$  (the so-called  forward KL (\textbf{FKL})).
More explicitly, minimizing the FKL is equivalent to maximizing
\begin{equation}
\label{eq:fkl_background}
\max_{\bftheta}\;
\Exp_{\X\sim f}
\!\left[
\indic(g(\X)>\gamma)\,\log p_{\bftheta}(\X)
\right],
\end{equation}
which corresponds to the classical cross-entropy (CE) objective used in rare-event
simulation.
In contrast, directly minimizing the reverse KL divergence
$\mathrm{KL}(p_{\bftheta}\|h^{\rm opt})$
is ill-posed due to indicator -- which in practice we replace by a  smooth gate
$s_\tau(g(\x),\gamma)=\sigma((g(\x)-\gamma)/\tau)\in(0,1),$ where
$\sigma(z)=({1+e^{-z}})^{-1}$.
This leads to the surrogate reverse-KL objective
\begin{equation}
\label{eq:rkl_background}
\max_{\bftheta}\;
\Exp_{\X\sim p_{\bftheta}}
\!\left[
\log s_\tau\!\bigl(g(\X),\gamma\bigr)-
\log\frac{p_{\bftheta}(\X)}{f(\X)}
\right],
\end{equation}
which is commonly used in variational formulations of rare-event estimation
(see Appendix~\ref{app:baselines})
\color{black}

Although these techniques are widely used to find surrogates for  $h^{\rm opt}$  , they are prone to two major issues while estimating rare probability regions. First, they are sensitive to cold start issues, where a significant number of samples is required to satisfy $(g(\x)>\gamma)$.
Second,
even when the reverse or forward KL problem is solved exactly,
 the resulting  $p_{\bftheta}$ may yield
  an IS estimator with high -- even
\emph{infinite} -- variance.
 Consequently,
 KL minimization alone (reverse/forward) does not ensure a safe proposal. In order to eliminate those limitations, we propose a novel, two-stage (optionally refined further by a normalizing flow model)
 technique that learns a proposal $p_{\bftheta}$ robust to these issues.

\section{Related Work}
Estimating rare-event probabilities has a long history across simulation,
reliability engineering, and machine learning. Au and Beck introduced Subset
Simulation, which uses a sequence of intermediate events to estimate a rare
probability progressively~\cite{au2001estimation}. Sequential importance
sampling methods adapt proposals over stages to concentrate samples in the
rare-event region~\cite{papaioannou2016sequential}; related particle methods
include adaptive multilevel splitting and SMC-based rare-event
estimators~\cite{cerou2012smc}. Rubinstein formalized the cross-entropy (CE)
method~\cite{rubinstein1999cross}, which Rubinstein and Kroese developed
further~\cite{rubinstein2004cross}. Subsequent work introduced adaptive CE updates and mixture proposals
for rare-event simulation~\cite{tito2002rare,deboer2005tutorial,wang2016cross}.

Safe and defensive importance sampling control unstable likelihood ratios by
mixing an adaptive proposal with a fixed component. Owen and Zhou
developed safe mixture sampling~\cite{owen2000safe}, while Hesterberg's
defensive mixture includes the nominal density as a fixed component and thus
bounds the corresponding likelihood ratio~\cite{hesterberg1995weighted}.
Delyon and Portier studied adaptive mixtures with a heavy-tailed safe component
and established asymptotic guarantees~\cite{delyon2021safe}. Our safeguard
uses a different mechanism: we constrain every covariance matrix in a learned
Gaussian mixture so that all components can move toward the rare-event region
while retaining sufficiently heavy Gaussian tails. The resulting guarantee is
a finite-variance statement for the final constrained GMM proposal.
The half-variance threshold has classical Gaussian precedents: Owen's
Examples~9.1 and~9.3 derive it for a scalar Gaussian moment problem and a
$d$-dimensional standard Gaussian with isotropic proposal covariance $\sigma^2\mathbf I$,
respectively~\cite{owen_var_is}. The corresponding full-covariance condition
for a single Gaussian proposal   was established
in~\cite{lorek2025lectures}. Here, we extend this condition to
Gaussian mixture proposals and incorporate it as an enforceable covariance
safeguard during proposal learning. Safe-ICE
instead modifies CE through a problem-dependent safe construction
~\cite{gao2025safeice}; we include it in the empirical comparison.

Adaptive mixture importance sampling also updates the weights and parameters
of a proposal mixture online. Douc et al. established convergence results for
such schemes and studied Rao--Blackwellized updates
~\cite{douc2007convergence}. These methods address the validity and efficiency
of repeated adaptation. Our covariance constraint addresses a complementary
failure mode: a learned Gaussian proposal can fit its target while producing
an infinite-variance importance-sampling estimator.

Variational inference (VI) offers a flexible approach to approximate inference.
Classical black-box VI uses REINFORCE-style gradient
estimators~\cite{williams1992simple,ranganath2014black}, which are unbiased but
often have high variance. Work on variance reduction combines VI with
overdispersed proposals~\cite{ruiz2016overdispersed}, adaptive schemes
~\cite{li2018variance}, and importance-weighted objectives
~\cite{agrawal2020advances}. Jerfel et al.~\cite{jerfel2021variational} and Su
and Chen~\cite{su2021variational} studied forward-KL proposal refinement for
importance sampling, linking variational objectives to sampling efficiency.
We focus on KL objectives because they yield direct Monte Carlo estimators for
the rare-event target; Appendix~\ref{app:divergence_measures} explains why
Wasserstein and KSD objectives are less convenient here.

Generative models provide more flexible proposal families. Rezende and
Mohamed introduced normalizing flows~\cite{rezende2015variational}, followed by
tractable architectures including NICE~\cite{DinhKB14}, RealNVP
~\cite{DinhSB17}, and MAF~\cite{PapamakariosMP17}. Gao et
al.~\cite{gao2023rare} and Gibson et al.~\cite{gibson2023flow} applied flows to
rare-event simulation. Large-deviation methods offer a different route: Tong
and Stadler combine large-deviation theory with adaptive importance sampling,
using it to initialize a Gaussian biasing distribution and identify an
informative low-dimensional subspace in high-dimensional problems
~\cite{tong2023large}. Our method instead uses progressive CE-style discovery, without requiring
large-deviation rate-function optimization or a problem-specific
dominating-point calculation, and then fits an explicit constrained mixture
with optional flow refinement. The next section describes this procedure. Appendix
~\ref{app:alternative_sampling_methods} discusses further large-deviation,
neural SMC, and adjoint-based samplers.

\section{Our Approach}
We first construct a coverage-first proposal, then refine it by
constrained KL fitting, optionally followed by normalizing-flow refinement.
The finite-variance guarantee applies when the final proposal is a
constrained GMM satisfying Theorem~\ref{thm:gmm_finite_var}.

\subsection{Coverage-first initializer (CFI)}
We first construct a proposal distribution $p_{\rm CFI}$ that allocates a
prescribed fraction $1-\rho$ of its mass inside or near
$\{g(\x)>\gamma\}$. This provides overlap with the rare-event region and a
nontrivial training signal for subsequent RKL and FKL refinement, even in
ultra-rare regimes.
Like adaptive CE
~\cite{rubinstein2004cross,deboer2005tutorial,wang2016cross}, CFI draws samples from the current
proposal and progresses through a sequence of intermediate levels. However,
CFI performs an unweighted fit to the elite samples and does not evaluate the
likelihood-ratio weights $f/p_{\bftheta_{t-1}}$ used by full adaptive CE.
\begin{wrapfigure}{r}{0.52\textwidth}
\vspace{-0.1em}
\centering
\begin{minipage}{0.50\textwidth}
\captionof{algorithm}{CFI: Coverage-First Initialization}
\label{alg:PACE}
\begin{algorithmic}[1]
\REQUIRE Family $\{p_{\bftheta}\}$ with $\bftheta_0$, function $g$, level $\gamma$, quantile $\rho$, batch $M$, smoothing $\alpha\in[0,1)$
\STATE $t=0$
\REPEAT
  \STATE $t\gets t+1$; \ sample $\bfX_{1:M} \sim p_{\bftheta_{t-1}}$
  \STATE Set $\eta_i=g(\bfX_i)$, sort: $\eta_{(1)} \leq \ldots\leq \eta_{(M)}$
  \STATE $(1-\rho)$-quantile: $\gamma_t \gets \eta_{(\lceil(1-\rho)M\rceil)}$
  \STATE \textbf{CE update:}
  \[
  {\small
  \bftheta'_t
  =
  \arg\max_{\bftheta}
  \frac{1}{M}
  \sum_{i=1}^M
  \indic\!\big(g(\bfX_i)\ge \gamma_t\big)
  \log p_{\bftheta}(\bfX_i)
  }
  \]
  \STATE $\bftheta_t \gets \alpha\,\bftheta'_t + (1-\alpha)\,\bftheta_{t-1}$
  \label{alg:cfi_smoothing}
\UNTIL{$\gamma_t \ge \gamma$}
\STATE \textbf{Return} $p_{\rm CFI}\equiv p_{\bftheta_t}$
\end{algorithmic}
\end{minipage}
\vspace{-1em}
\end{wrapfigure}CFI
is therefore used as a coverage-first initialization rather than as the final
importance-sampling adaptation procedure. The full procedure is given in
Alg.~\ref{alg:PACE}.

At each iteration, CFI samples from the current proposal, updates the
intermediate level using the empirical $(1-\rho)$-quantile, and refits
to elite samples with parameter smoothing.

\paragraph{CFI vs gradient-based mode seeking.}
A natural question is whether the coverage stage of CFI can be replaced by gradient-based optimization of $g(\x)$, e.g., via multiple gradient ascent restarts. In Appendix~\ref{sec:grad_appendix} we show that this approach is unreliable, as it can systematically miss relevant modes or diverge, failing to provide adequate coverage even in simple settings.

\subsection{Distribution proposals}

A crucial aspect of CFI is the selection of the proposal distribution family. In our approach, we propose using Gaussian Mixture Models (GMMs). Since our method relies on gradient-based optimization, we first introduce an adjusted parameterization of the model, which we refer to as the unconstrained GMM.
However,  the unconstrained GMM can lead to infinite variance in importance sampling (IS),
see e.g., a toy example in Section \ref{sec:motivating_example}. To address this issue, we propose a parameterization of a constrained GMM variant.

\paragraph{Unconstrained GMM.}
  The standard proposal is  a  Gaussian mixture
 $
  p_{\bftheta}(\x)=\sum_{k=1}^K \pi_k\,\calN(\x\mid \bfmu_k,\bfSigma_k),
  $
  with parameters $\bftheta=\{\bfmu_k,\bfL_k,\ell_k\}_{k=1}^K$ optimized by gradient methods.
  For the sake of gradient updates, weights and covariances are parameterized as:

   \begin{equation}
  \pi_k=\frac{\exp(\ell_k)}{\sum_{j=1}^K \exp(\ell_j)},
  \qquad \bfSigma_k=\bfL_k\bfL_k^\top.
  \end{equation}
 
Unconstrained Gaussian proposals can lead to unstable or even invalid
importance-sampling estimators. The one-dimensional motivating example in
Section~\ref{sec:motivating_example} shows that, when both the mean and variance
are fitted by KL minimization, the optimizer may choose a variance below the
critical threshold required for finite IS variance. This is not merely a
one-dimensional artifact, but reflects a more general covariance-collapse
mechanism.
\begin{proposition}[KL-induced covariance collapse]
\label{propp:kl_collapse}
Let $\X \sim \mathcal{N}(\bfmu_0,\bfSigma_0)$ and consider
the $\mathrm{FKL}$ objective~(\ref{eq:fkl_background})
with an unconstrained Gaussian proposal
$p_{\bftheta}=\mathcal{N}(\bfmu,\bfSigma)$.
Then the optimizer satisfies
$\bfmu_{\rm opt}
=
\Exp[\X \mid g(\X)>\gamma]$
and
$\bfSigma_{\rm opt}
=
\operatorname{Cov}(\X \mid g(\X)>\gamma)$.
Moreover, if
$\lambda_{\min}(
\operatorname{Cov}(\X \mid g(\X)>\gamma))
\to 0$
as $\gamma\to\infty$,
then
$\lambda_{\min}(\bfSigma_{\rm opt})\to0$.
\end{proposition}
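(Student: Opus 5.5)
The plan is to turn the FKL objective~(\ref{eq:fkl_background}) into a Gaussian maximum-likelihood problem under the conditional law of $\X$ given the rare event, and then read off the limit claim. Write $A=\{\x: g(\x)>\gamma\}$ and assume $I=\Prob(\X\in A)>0$. Then
\[
\Exp_{\X\sim f}\bigl[\indic(g(\X)>\gamma)\log p_{\bftheta}(\X)\bigr]=I\cdot \Exp\bigl[\log p_{\bftheta}(\X)\mid \X\in A\bigr].
\]
The factor $I$ does not depend on $\bftheta$, so maximizing the FKL objective is the same as maximizing the expected Gaussian log-likelihood under the conditional law $f_A=\indic_A f/I$. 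This conditional law has finite first and second moments, since $\X$ is Gaussian.

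Write $m=\Exp[\X\mid \X\in A]$ and $C=\operatorname{Cov}(\X\mid \X\in A)$. Expanding $\log\mathcal N(\x\mid\bfmu,\bfSigma)$ and using the bias--variance decomposition, the objective up to additive constants and the factor $I$ becomes
\[
-\tfrac12\log\det\bfSigma-\tfrac12\operatorname{tr}(\bfSigma^{-1}C)-\tfrac12(m-\bfmu)^\top\bfSigma^{-1}(m-\bfmu).
\]
\begin{enumerate}
\item For any fixed $\bfSigma\succ0$, the last term is maximized uniquely at $\bfmu=m$. This gives $\bfmu_{\rm opt}=\Exp[\X\mid g(\X)>\gamma]$.
\item Substituting $\bfmu=m$, the problem reduces to maximizing $\phi(\bfSigma)=-\log\det\bfSigma-\operatorname{tr}(\bfSigma^{-1}C)$ over $\bfSigma\succ0$.
\item Put $P=\bfSigma^{-1}$. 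The map $P\mapsto \log\det P-\operatorname{tr}(PC)$ is strictly concave, and its stationarity condition $P^{-1}=C$ yields $\bfSigma_{\rm opt}=C$, provided $C\succ0$.
\end{enumerate}
Equivalently, one can use the fact that $\phi(C)-\phi(\bfSigma)=2\,\mathrm{KL}\bigl(\mathcal N(0,C)\|\mathcal N(0,\bfSigma)\bigr)\ge0$.

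The one point needing care, and the main (mild) obstacle, is showing $C\succ0$, so that the optimizer exists and is attained rather than degenerate. Suppose $v^\top C v=0$ for some $v\neq0$. Then $v^\top\X$ would be almost surely constant on $A$, which means $A$ lies in a hyperplane up to a null set. This contradicts $\Prob(\X\in A)>0$ when $\bfSigma_0\succ0$, because a nondegenerate Gaussian gives zero mass to hyperplanes. So $C\succ0$ and the maximizer is unique. If instead $C$ were allowed to be singular, the supremum would be approached only by degenerate covariances, which is the collapse phenomenon itself.

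The final claim is then immediate. Because $\bfSigma_{\rm opt}(\gamma)=\operatorname{Cov}(\X\mid g(\X)>\gamma)$ holds identically in $\gamma$, we have $\lambda_{\min}(\bfSigma_{\rm opt})=\lambda_{\min}(\operatorname{Cov}(\X\mid g(\X)>\gamma))$, and this tends to $0$ by hypothesis. I would close by noting the typical case where the hypothesis holds, a linear $g(\x)=a^\top\x$. There the conditional variance along $a$ behaves like that of a truncated Gaussian tail, which is of order $\gamma^{-2}$. This links the proposition to the half-variance threshold discussed next.
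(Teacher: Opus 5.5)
Your proposal is correct and takes essentially the paper's route: you factor out $I=\Prob(g(\X)>\gamma)$, which reduces the FKL objective to Gaussian maximum likelihood under the conditional law, read off moment matching, and pass to the limit via the identity $\bfSigma_{\rm opt}=\operatorname{Cov}(\X\mid g(\X)>\gamma)$. Beyond the paper, you derive the moment-matching step explicitly and check $\operatorname{Cov}(\X\mid g(\X)>\gamma)\succ 0$ (a nondegenerate Gaussian puts zero mass on hyperplanes), which makes the optimizer's existence and uniqueness rigorous; the paper takes these details for granted.
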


The proof is provided in Appendix~\ref{app_sec:proofs}.
Proposition~\ref{propp:kl_collapse} shows that forward-KL fitting may inherit
the concentration of the rare-event conditional distribution, causing the
proposal covariance to collapse. This is problematic for importance sampling,
since the second moment contains the term
$\int_{\{g(\x)>\gamma\}} f(\x)^2/p(\x)\,d\x$,
which may diverge if the proposal decays too quickly in relevant tail
directions. Thus, a well-optimized KL objective does not necessarily yield a
safe proposal.

Fig.~\ref{fig:g0} illustrates the same variance pathology for
$\Prob(|X|>4)$ with $X\sim\mathcal{N}(0,1)$; see also
Section~\ref{sec:motivating_example}. This motivates imposing an explicit
covariance floor. Theorem~\ref{thm:gmm_finite_var} shows that requiring
$\bfSigma_k \succ \tfrac12\,\bfSigma$ for every mixture component guarantees
finite variance of the resulting importance-sampling estimator.

\begin{theorem}\label{thm:gmm_finite_var}[Finite Variance for Constrained GMMs]
Let $f(\bfx)=\calN(\bfx\mid\bfmu,\bfSigma)$
be the nominal density, and let $p(\x)=\sum_k \pi_k\,\calN(\x\mid \bfmu_k,\bfSigma_k)$
be a GMM proposal. For i.i.d.\ $\bfX_i\sim p$ and the IS estimator
\[
\hat{I}^{\rm IS}_R = \frac{1}{R}\sum_{i=1}^R \indic(g(\bfX_i)>\gamma)\,\frac{f(\bfX_i)}{p(\bfX_i)},
\]
if $\bfSigma_k \succ \tfrac{1}{2}\,\bfSigma$ for all $k$, then $\Var(\hat{I}^{\rm IS}_R)$ is finite.
\end{theorem}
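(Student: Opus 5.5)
Since the samples are i.i.d., $\Var(\hat{I}^{\rm IS}_R)=\frac{1}{R}\bigl(\Exp_{p}[k_1(\X)^2]-I^2\bigr)$, so it suffices to show that the second moment is finite:
\[
\Exp_{p}[k_1(\X)^2]=\int \indic(g(\x)>\gamma)\frac{f(\x)^2}{p(\x)}\,d\x\le \int \frac{f(\x)^2}{p(\x)}\,d\x .
\]
Dropping the indicator discards all dependence on $g$, which matches the statement's lack of assumptions on $g$. The plan is to reduce the mixture to a single Gaussian component and then use the Gaussian computation that underlies the single-Gaussian condition of \cite{lorek2025lectures}.

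\textbf{Step 1 (reduction to one component).} Every term of the mixture is nonnegative, so for any fixed $k$ with $\pi_k>0$ we have $p(\x)\ge \pi_k\,\calN(\x\mid\bfmu_k,\bfSigma_k)$. Hence
\[
\int \frac{f^2}{p}\,d\x \le \frac{1}{\pi_k}\int \frac{\calN(\x\mid\bfmu,\bfSigma)^2}{\calN(\x\mid\bfmu_k,\bfSigma_k)}\,d\x .
\]
Even a single component satisfying the constraint suffices. The hypothesis holds for all $k$, so any $k$ with positive weight will do, for instance the one with the largest $\pi_k$.

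\textbf{Step 2 (Gaussian integral).} Write $f^2/q_k$ with $q_k=\calN(\cdot\mid\bfmu_k,\bfSigma_k)$ as a constant times $\exp(-\tfrac12 Q(\x))$, where
\[
Q(\x)=2(\x-\bfmu)^\top\bfSigma^{-1}(\x-\bfmu)-(\x-\bfmu_k)^\top\bfSigma_k^{-1}(\x-\bfmu_k).
\]
The quadratic part of $Q$ has matrix $A=2\bfSigma^{-1}-\bfSigma_k^{-1}$, and the integral is finite if and only if $A\succ 0$. In that case, completing the square gives a Gaussian integral times $\exp$ of a finite constant, namely a finite closed form in $\det A$, $\det\bfSigma$, $\det\bfSigma_k$ and $\bfmu-\bfmu_k$.

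\textbf{Step 3 (the key matrix inequality).} We must show that $\bfSigma_k\succ\tfrac12\bfSigma$ implies $\bfSigma_k^{-1}\prec 2\bfSigma^{-1}$, i.e.\ that matrix inversion is order-reversing on positive definite matrices. I expect this to be the only genuinely nontrivial step. I would prove it by congruence. Set $B=\bfSigma^{-1/2}\bfSigma_k\bfSigma^{-1/2}$; the hypothesis says $B\succ\tfrac12 \mathbf{I}$, so every eigenvalue of $B$ exceeds $\tfrac12$. Then every eigenvalue of $B^{-1}$ is less than $2$, so $B^{-1}\prec 2\mathbf{I}$. Conjugating back by $\bfSigma^{-1/2}$ gives $\bfSigma_k^{-1}\prec 2\bfSigma^{-1}$, that is, $A\succ0$.

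Combining the three steps, the second moment is bounded by $\pi_k^{-1}$ times a finite Gaussian integral. Therefore $\Var(\hat{I}^{\rm IS}_R)<\infty$ for every $R$.
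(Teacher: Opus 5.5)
Your proposal is correct and follows essentially the same route as the paper's proof. You drop the indicator, use $p(\x)\ge\pi_k\,\mathcal{N}(\x\mid\bfmu_k,\bfSigma_k)$ for a component with $\pi_k>0$, and reduce finiteness to $2\bfSigma^{-1}-\bfSigma_k^{-1}\succ 0$; the only difference is that you justify the order reversal $\bfSigma_k\succ\tfrac12\bfSigma\Rightarrow\bfSigma_k^{-1}\prec 2\bfSigma^{-1}$ explicitly via the congruence $\bfSigma^{-1/2}\bfSigma_k\bfSigma^{-1/2}$, which the paper states without proof.
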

The proof is provided in Appendix~\ref{app_sec:proofs}.
The theorem guarantees only finiteness of the variance; it does not imply that
the variance is practically small, or guarantee good contribution ESS,
representative finite-budget estimates, or accurate and stable behavior at a
fixed budget $R$. Accordingly, the suffix -\textbf{c} denotes the covariance
constraint, not a promise of good finite-budget performance.
The factor $1/2$ matches the Gaussian thresholds in Owen's
Examples~9.1 and~9.3~\cite{owen_var_is}. These examples treat a scalar moment
problem and a $d$-dimensional standard Gaussian with an isotropic proposal,
respectively. The full-covariance version for a
single Gaussian proposal was established
in~\cite{lorek2025lectures}.
Theorem~\ref{thm:gmm_finite_var} extends this condition to full-covariance Gaussian mixture proposals.

The componentwise condition in Theorem~\ref{thm:gmm_finite_var} is sufficient
but not necessary for a general GMM. In particular, one component with positive
mixture weight and sufficiently broad covariance can control the tails of the
full mixture even when other components have smaller covariances. Necessity
also depends on the geometry of the rare-event set, specifically on whether it
is unbounded in directions in which the proposal is too light-tailed. We impose
the stronger componentwise condition because it is independent of the geometry
of $g$, directly checkable, and enforceable during optimization.

As a consequence of this theorem, we propose a constrained variant of GMM.

\paragraph{Constrained GMM.}
The constrained proposal has the same mixture form, with covariance
parameterization
$\displaystyle \bfSigma_k=\bfL_k\bfL_k^\top+\beta\,\bfSigma.$
Theorem~\ref{thm:gmm_finite_var} applies to the actual final proposal whenever
$\bfSigma_k\succ\tfrac12\,\bfSigma$ for every component. For
$\beta>\tfrac12$, this strict inequality follows from the parameterization;
for $\beta=\tfrac12$, it additionally requires
$\bfL_k\bfL_k^\top\succ0$ for every component. We use $\beta=0.5$ except
during the CFI-c annealing phase described below.
At $\beta=0.5$, the strict condition must be verified for the final
proposal; a numerically nonzero $\bfL_k\bfL_k^\top$ is insufficient.

Throughout, suffixes -\textbf{c} and -\textbf{u} denote \textbf{c}onstrained and \textbf{u}nconstrained models, respectively.

  \paragraph{Adaptive $\beta$ annealing in CFI-c.}
  During the CE iterations, the proposal must remain sufficiently flexible to localize rare-event regions.
  A fixed $\beta=0.5$ can be overly conservative at this stage, leading to premature saturation of the level sequence.
  Therefore, in CFI-c only, we adapt $\beta$ during training:
  when progress of the level sequence stagnates, $\beta$ is temporarily reduced (from initial 0.5) to relax the covariance floor;
  when progress resumes, $\beta$ is gradually increased.
  This adaptive annealing improves coverage and prevents early plateaus.
    Once the target level $\gamma$ is reached, we set $\beta=0.5$ and terminate the
  CE iterations. A final refitting step at the target level is performed with
  $\beta=0.5$; the finite-variance guarantee applies when the resulting final
  proposal satisfies $\bfSigma_k\succ\tfrac12\,\bfSigma$ for all $k$.

  \paragraph{Parameter smoothing.}
In line~\ref{alg:cfi_smoothing} of Alg.~\ref{alg:PACE}, we exponentially smooth the means,
mixture weights, and covariance matrices using $\alpha$.
For constrained GMMs, we smooth only the free covariance component
$\bfS_k=\bfL_k\bfL_k^\top$, preserving the floor $\beta\bfSigma$.
Details are given in Appendix~\ref{app:hyperparameters}.

\subsection{Constrained-KL fitting}

Starting from the coverage-first proposal  -- denote it by
$p_{\rm CFI}$ -- we refine the proposal within the constrained Gaussian mixture
family, either by forward--KL (FKL-c) or reverse--KL (RKL-c).
Both variants enforce the covariance floor $\bfSigma_k\succ\tfrac12\,\bfSigma$,
which guarantees finite variance of the resulting importance sampling estimator.

\paragraph{FKL-c (constrained forward KL).}
As shown in Section~\ref{sec:background}, minimizing
$\mathrm{KL}(h^{\rm opt}\|p_{\bftheta})$ is equivalent to maximizing the objective
in~\eqref{eq:fkl_background}.
Since the corresponding expectation under $f$ cannot be evaluated directly in the
rare-event regime, we estimate it by importance sampling using the CFI proposal
$p_{\rm CFI}$ as the sampling distribution.
This yields the constrained forward-KL fitting problem

\begin{equation}\label{eq:fkl-c_fitting}
\max_{\bftheta}\;
\Exp_{\X\sim p_{\rm CFI}}
\!\left[
\indic\bigl(g(\X)>\gamma\bigr)\,
\frac{f(\X)}{p_{\rm CFI}(\X)}\,
\log p_{\bftheta}(\X)
\right],
\end{equation}
which is optimized over $\bftheta$ within the constrained family.

\paragraph{RKL-c (constrained reverse KL).}
Similarly, the reverse-KL surrogate introduced in~\eqref{eq:rkl_background}
defines an optimization problem over $\bftheta$.
To evaluate the corresponding expectation in a stable manner, we again rely on
the CFI proposal $p_{\rm CFI}$ as the sampling distribution, yielding
\begin{equation}\label{eq:rkl-c_fitting}
\max_{\bftheta}\;
\Exp_{\bfX\sim p_{\rm CFI}}
\!\left[
\frac{p_{\bftheta}(\bfX)}{p_{\rm CFI}(\bfX)}
\Bigl(
\log s_\tau\!\bigl(g(\bfX),\gamma\bigr)
-
\log\frac{p_{\bftheta}(\bfX)}{f(\bfX)}
\Bigr)
\right].
\end{equation}
Both FKL-c and RKL-c operate within the same constrained (-\textbf{c}) family, are initialized from $p_{\rm CFI}$ to ensure sufficient coverage,
and optimize the corresponding KL objectives by Monte Carlo approximation of the
expectations and gradient-based updates under the covariance constraint.

\paragraph{Adaptive selection of the smoothing parameter $\tau$.}
RKL-based approaches rely on the smoothed gate $s_\tau(g(\x),\gamma)$ and are therefore
  sensitive to the choice of the smoothing parameter~$\tau$. To avoid manual tuning, we set
$\tau_t=c\,\mathrm{Std}(g(\bfX)-\gamma)$,
where the standard deviation is estimated from samples drawn from the
current proposal and $c>0$ is shared across experiments.
We clip $\tau_t$ to a fixed interval $[\tau_{\min},\tau_{\max}]$ to avoid excessively sharp
or flat gates; see Appendix~\ref{app:hyperparameters} for details.

\subsection{Refinement with normalizing flows}
As a final stage, we refine constrained model $p^{\rm KL}$ obtained via RKL or FKL,
using   a more expressive proposal $    p_{\bftheta}^{\rm flow}$.
To be more specific: we train the flow model to minimize $\mathrm{KL}(p_{\bftheta}^{\rm flow}||h^{\rm opt})$
or $\mathrm{KL}(h^{\rm opt}||p_{\bftheta}^{\rm flow}$), which yields
optimizing \eqref{eq:rkl_background} or  \eqref{eq:fkl_background}  with
$p_{\bftheta}\equiv p^{\rm flow}_{\bftheta}$.  We estimate the expectations
by  importance sampling using $p^{\rm KL}$ proposals, yielding
maximization problem for FKL-c (cf.~\eqref{eq:fkl-c_fitting})
  \[
   \Exp_{\X\sim p^{\rm KL}}\!\left[
  \indic\big(g(\X)>\gamma\big)\,\frac{f(\X)}{p^{\rm KL}(\X)}\,\log p^{\rm flow}_{\bftheta}(\X)\right].
  \]
and maximization problem for RKL-c (cf.~\eqref{eq:rkl-c_fitting})
\[
\Exp_{\bfX\sim p^{\rm KL}}
\!\left[
\frac{p^{\rm flow}_{\bftheta}(\bfX)}{p^{\rm KL}(\bfX)}
\Bigl(
\log s_\tau\!\bigl(g(\bfX),\gamma\bigr)
-
\log\frac{p^{\rm flow}_{\bftheta}(\bfX)}{f(\bfX)}
\Bigr)
\right].
\]
In other words, the flow is trained to better capture the geometry of the rare-event set
while remaining anchored to the stable constrained-GMM backbone $p^{\rm KL}$.
This improves robustness empirically (note, however, that because the final proposal is a flow, the finite-variance guarantee of Theorem~\ref{thm:gmm_finite_var} does not formally apply).

  A normalizing flow~\cite{rezende2015variational} defines $p^{\rm flow}_{\bftheta}$ via an invertible
  transformation $T_{\bftheta}$:
  $\displaystyle
  p^{\rm flow}_{\bftheta}(\bfX) = p_{\mathbf{Z}}(T_{\bftheta}(\bfX)) \cdot \left| \det J_{\bftheta}(\bfX) \right|,
  $
  where $p_\mathbf{Z}$ is a simple base density ($p^{\rm KL}$, i.e., the GMM, in our case) and $J_{\bftheta}$ is the Jacobian of $h_{\bftheta}$.   To ensure tractability of $\det J_{\bftheta}$ we use RealNVP~\cite{DinhSB17}, which provides sufficient expressiveness at low computational cost.

\subsection{Motivating toy example: $\Prob(X>4), X\sim\calN(0,1)$}
\label{sec:motivating_example}
\begin{figure}[htbp]
\centering
\begin{minipage}[t]{0.40\textwidth}
\vspace{0pt}
\centering
\includegraphics[width=0.98\linewidth]{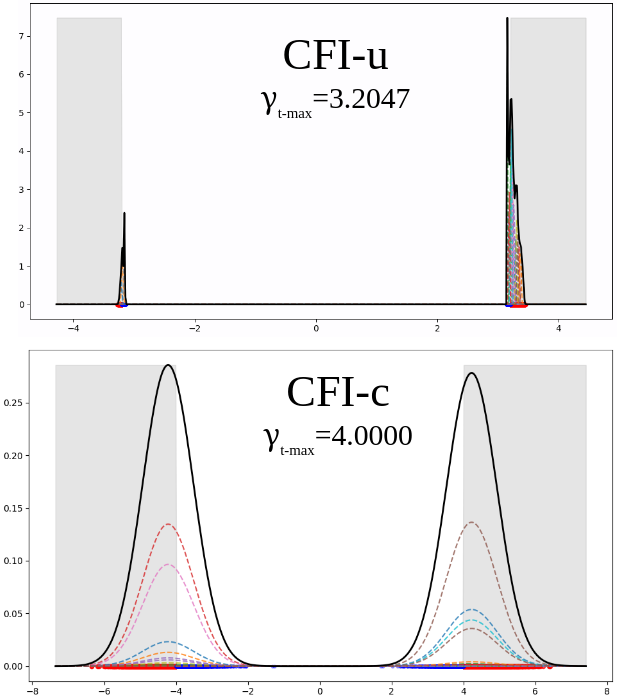}
\caption{CFI-u vs CFI-c for estimating $\Prob(|X|>4)$,
$X\sim\mathcal{N}(0,1)$. GMM with $K=20$ components. Red: samples with
$g(x_i)>\gamma_{\mathrm{t_{max}}}$; blue: samples with
$g(x_i)\leq\gamma_{\mathrm{t_{max}}}$; gray area: $|x|>4$.}
\label{fig:g0}
\end{minipage}\hfill
\begin{minipage}[t]{0.57\textwidth}
\vspace{0pt}
\input{tables/table_new_k20_g0_neurips_v3}
\end{minipage}
\end{figure}
We use the one-dimensional problem $I=\Prob(X>4)$, $X\sim\calN(0,1)$, to
illustrate why unconstrained KL fitting can be unsafe for importance sampling;
full derivations are given in Appendix~\ref{app:motivating}.

For fixed-variance Gaussian proposals $p_{\bftheta}=\calN(\mu,1)$, both FKL and
RKL yield stable proposals, with optima
$\mu^*_{\rm FKL}=4.2256$ and $\mu^*_{\rm RKL}=4.2036$ (for $\tau=0.1$).
However, once the variance is also optimized,
$p_{\bftheta}=\calN(\mu,\sigma^2)$, both objectives choose overly concentrated
proposals:
$\sigma^2_{\rm FKL}=0.0466$ and $\sigma^2_{\rm RKL}=0.4306$, below the critical
threshold $\sigma^2=\tfrac12$ required for finite IS variance. Thus, optimizing
the KL objective can still produce an infinite-variance estimator.

This pathology extends to Gaussian mixtures: unconstrained KL fitting may
collapse individual covariance matrices. This is the covariance-collapse
mechanism formalized in Proposition~\ref{propp:kl_collapse}. To prevent it,
Theorem~\ref{thm:gmm_finite_var} imposes the multivariate analogue
$\bfSigma_k \succ \tfrac12\bfSigma$, which is sufficient for finite variance.
On the related problem $\Prob(|X|>4)$ with a $K=20$ GMM proposal,
Fig.~\ref{fig:g0} and Table~\ref{tab:res_k20_g0} show that CFI-u suffers severe
variance pathologies (\xmark\ means it failed), whereas the constrained CFI-c proposal remains stable and
accurate.
Among the evaluated methods,
CFI-c+FKL-c+F achieves the best empirical variance reduction, log-error, and
CoV, while the theorem-covered CFI-c+FKL-c variant ranks second on all three
metrics.
See the definition of evaluation metrics in Section \ref{sec:exp_setup}.

\paragraph{Takeaway.}
KL fitting alone does not guarantee reliable importance sampling: proposals can
appear well optimized while losing the tail overlap needed for finite variance.
This motivates the covariance constraint and coverage-first initialization used
below.

\section{Experimental results}

\subsection{Experimental setup}\label{sec:exp_setup}
We report results for the following methods.
CFI-u denotes unconstrained coverage-first initialization and already exhibits
numerical instabilities and estimator failures across most considered problems;
as a consequence, further refinements built on CFI-u are omitted from the main
comparison.
CFI-c refers to constrained coverage-first initialization.
Starting from CFI-c, we consider constrained KL refinement using either reverse
or forward KL (CFI-c+RKL-c and CFI-c+FKL-c).
Finally, these constrained methods can be further refined using normalizing
flows (CFI-c+RKL-c+F and CFI-c+FKL-c+F).

\paragraph{External baselines.}
We additionally compare against three external rare-event baselines: Subset
Simulation (SS), Adaptive Multilevel Splitting (AMS), and
Safe-ICE~\cite{gao2025safeice,safeice_github}. SS is implemented using
\texttt{UQpy}, following the original method of Au and Beck~\cite{au2001estimation}.
AMS is implemented following the SMC framework of
Cérou et al.~\cite{cerou2012smc}. For SS and AMS, we use a matched computational
budget measured by the number of evaluations of $g$. Safe-ICE is discussed in
Appendix~\ref{app:safeice}; because the available implementation is
computationally expensive, this comparison is restricted to representative
low-, moderate-, and high-dimensional benchmarks.
We also consider a Hesterberg-style defensive counterpart of fRKL~\cite{hesterberg1995weighted}, denoted
fRKL-d (d stands for \textsl{defensive}). It uses the full defensive proposal
\[
p_{\bftheta}^{\mathrm{def}}(\bfx)
=
\lambda f(\bfx)
+
(1-\lambda)p_{\mathrm{GMM},\bftheta}(\bfx),
\]
which ensures
$f(\bfx)/p_{\bftheta}^{\mathrm{def}}(\bfx)\leq 1/\lambda$.
Unlike our covariance constraint, this construction bounds the importance
weights through a fixed nominal component.

\paragraph{Additional KL baselines.}
For completeness, Appendix~\ref{app:full_results} additionally reports results
for the classical unconstrained reverse- and forward-KL baselines (fRKL-u and
fFKL-u, where $f$ stands for \textbf{f}ull/original baseline). These methods are
representative of classical KL-based importance-sampling approaches and serve
primarily to illustrate the instability addressed by our constrained framework:
although they can occasionally perform well, they frequently fail catastrophically
due to infinite or extremely large variance.

\paragraph{IS budget.}
The sample size required for importance sampling can grow exponentially
with proposal--target mismatch~\cite{chatterjee2018sample}, motivating
our coverage-first initialization and KL refinement.
We use $R=10^5$ samples per evaluation.

\paragraph{Evaluation metrics.}
For each method, we report the rare-event probability estimate $\hat I_R$
together with complementary measures of accuracy, variability, and efficiency.
We compare $\hat{I}_R$ against $I_{\rm true}=\Prob(g(\bfX)>\gamma)$, denoting
either an analytically available value or a numerical reference obtained from
a large-scale CMC run with $5\times 10^7$ samples. For the ResNet benchmark
$g_7$, the probability is not analytically accessible; following
\cite{gao2023rare}, the high-budget CMC estimate is used only as a numerical
reference for reporting log-error and is not used by FlowCE. Empirical
variance, CoV, and variance reduction remain applicable without an analytic
value; log-error provides an additional numerical sanity check.
In the main body, we report  the logarithmic error
$\mathrm{log\text{-}err}=|\log I_{\rm true}-\log \hat I_R|$
and the  coefficient of variation (CoV),
whereas variance and variance reduction (compared to CMC method) 
are reported in full tables in Appendix \ref{app:full_results}. 
For estimators with comparable means, variance scales approximately with the
squared CoV. Thus, halving the CoV corresponds to approximately fourfold lower
variance, or four times fewer samples for the same relative precision.
For CMC and FlowCE methods evaluated under this protocol, we use 20
independent evaluation runs. For each non-CMC FlowCE estimator evaluated under
this protocol, the reported estimate $\hat I_R$, mean log-error, and CoV are
computed using all 20 runs, including runs in which the corresponding CMC
estimate is zero. The only metric-specific exclusion
concerns variance reduction relative to CMC. If CMC observes no rare event in
a given run, its empirical variance is zero and the corresponding
variance-reduction ratio is not informative; we therefore average variance
reduction only over runs in which the CMC estimate is positive. Accordingly,
the ``\#seeds used'' entry reports the number of runs used for the
variance-reduction metric and does not apply to $\hat I_R$, log-error, or CoV.
For example, $4/20$ means that CMC observed at least one rare event in four of
the 20 runs, whereas all 20 corresponding FlowCE runs were retained for the
other reported metrics.

\subsection{Benchmark problems}
We benchmark the proposed pipeline consisting of CFI initialization,
followed by FKL-c or RKL-c refinement and optional flow refinement (+F),
on a collection of rare--event problems of the form
$\Prob\!\bigl(g(\bfX)>\gamma\bigr)$ with $\bfX\sim\calN(\bfmu,\bfSigma)$.
The target probabilities range from $10^{-6}$ to $10^{-3}$; full function
definitions and experimental settings are given in
Appendix~\ref{app:benchmark_settings}.
\input{tables/tab_main_chat_v2}
We consider two covariance structures:
$\bfSigma=\mathbf{I}$ for all examples besides $g_{10}$ and 
$\bfSigma_{ii}^{g_{10}}=i$ and $\bfSigma_{ij}^{g_{10}}=0.5\sqrt{ij}$ for $i\neq j$.
Two mean vectors $\bfmu$ are
$\bfmu^0=(0,\ldots,0)$ and
$\bfmu^{g_{10}}=(1-d,\,2-d,\,\ldots,\,d-d)$, respecively.

\paragraph{Low--dimensional benchmarks.}
For $g_1$, $g_2$, and $g_3$ we consider the two--dimensional case ($d=2$) with
$\gamma=0$.
These benchmarks capture complementary geometric structures, including a
two--mode configuration ($g_1$), a radial benchmark with piecewise boundary
($g_2$), and an anisotropic problem with a linear tilt ($g_3$).
Function $g_1$ corresponds to the ``Leaf'' function from \cite{gao2023rare},
while $g_2$ and $g_3$ are structural-reliability benchmarks studied using
sequential importance sampling in~\cite{papaioannou2016sequential}.

\paragraph{Higher--dimensional benchmarks.}
We additionally consider higher--dimensional problems commonly used in the
rare--event literature.
These include the $40$--dimensional Powell function $g_5$, a standard nonconvex
benchmark considered in~\cite{gao2023rare}, and a linear function in $d=100$
dimensions ($g_6$), which isolates high-dimensional scaling in a geometrically
simple setting and was studied in~\cite{papaioannou2016sequential}. For consistency with
those papers, we set  $\gamma=-403.6636$ for $g_5$ and $\gamma=3.5$.

\begin{wrapfigure}{r}{0.50\textwidth}
  \centering
  \vspace{-2 ex}
  \setlength{\tabcolsep}{2pt}
  \begin{tabular}{cc}
  \includegraphics[width=0.47\linewidth]{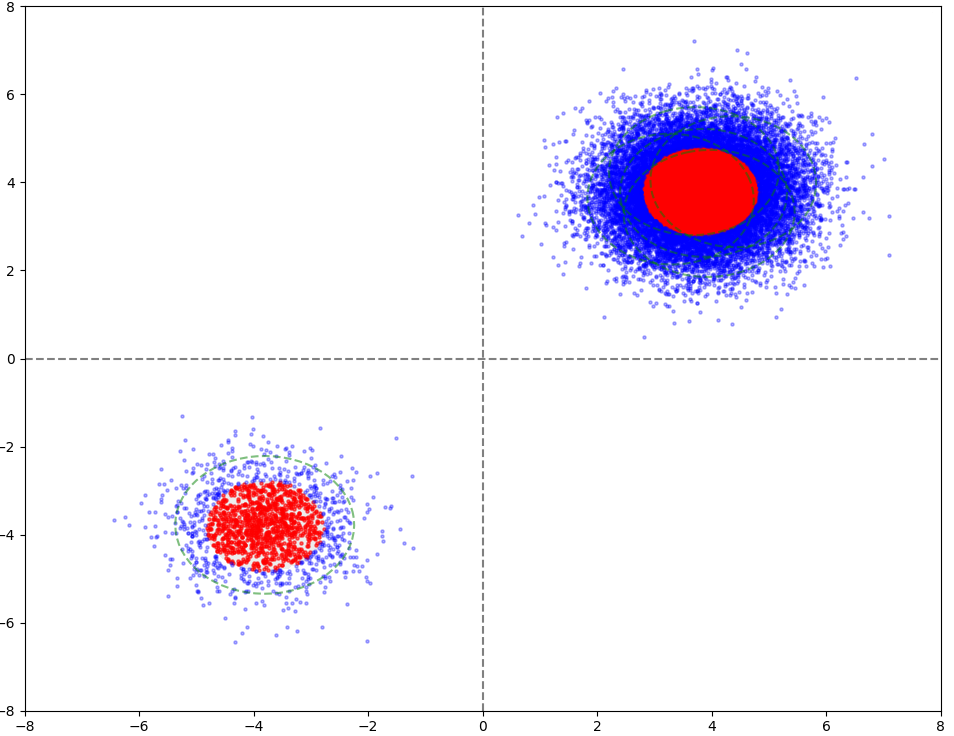} &
  \includegraphics[width=0.47\linewidth]{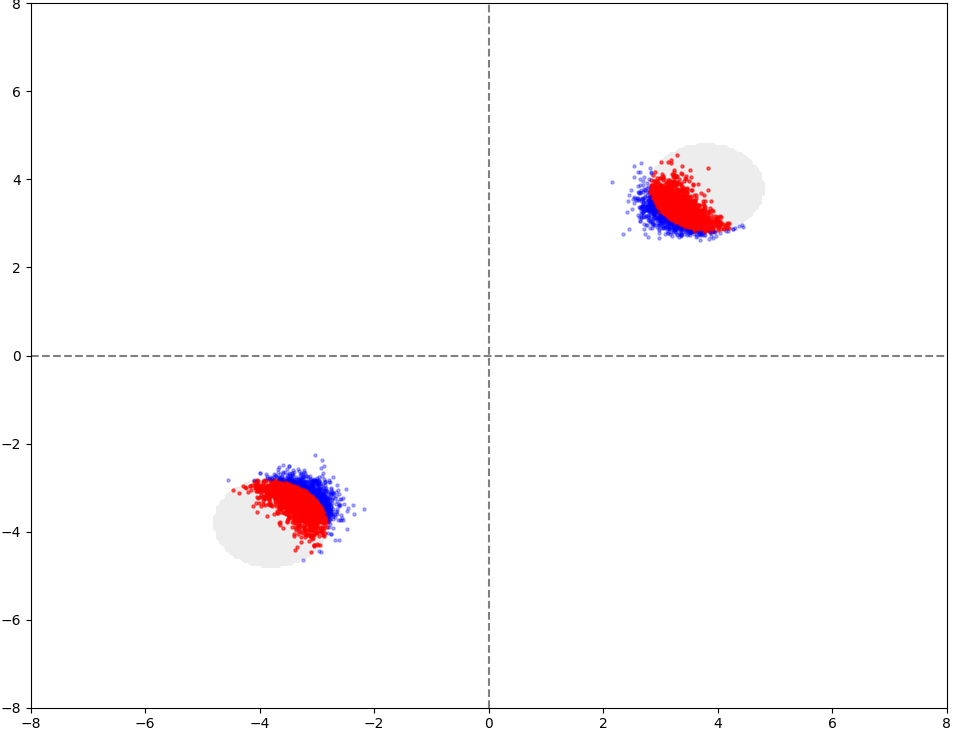} \\
  {\scriptsize CFI-c} &
  {\scriptsize CFI-c+FKL-c+F}
  \end{tabular}
  \caption{Snapshots for $\Prob\!\bigl(g_1(\bfX)>0\bigr)$ with
  $\bfX\sim\calN(\mathbf{0},\bf I)$ and $K=2$.}
  \label{fig:g1-quick}
  \vspace{-4ex}
\end{wrapfigure}

Function $g_7$ denotes the ResNet-based rare-event model introduced
in~\cite{gao2023rare}, where rare events arise from stochastic perturbations of
network parameters. We consider a ResNet-18 model subjected to a 62-dimensional
Gaussian perturbation applied in a layerwise manner, with rare events defined by
exceeding a prescribed loss threshold; implementation details are provided in
Appendix~\ref{app:benchmark_settings}. To further assess scalability, we also
report a higher-dimensional ResNet variant with $d=174$ perturbation variables
in Appendix~\ref{app:resnet174}

\paragraph{Lognormal sum benchmark.}
For $g_{10}$ we consider $d=10$ dimensional $\bfX\sim\calN(\bfmu^{g_{10}},\bfSigma^{g_{10}})$ and estimate
$\Prob\left(\sum_{i=1}^{10} \exp(X_i) > \gamma\right),$
corresponding to the tail probability of a sum of correlated lognormal random
variables.
This problem arises naturally in financial risk analysis, where $\exp(X_i)$
represents the price of the $i$th asset in a correlated Black--Scholes model.
We consider $\gamma=40{,}000$. In Appendix \ref{app:full_results} we report also values for $\gamma=20{,}000$ and $\gamma=500{,}000$ following the setup from~\cite{asmussen2011efficient}.

\paragraph{Ablations and sensitivity.}
Unless stated otherwise, all experiments use a shared PyTorch implementation
and common hyperparameters. The reverse-KL smoothing parameter $\tau$ is
selected automatically, while $\beta$-annealing allows CFI-c to reach the
target level before restoring the covariance floor; implementation details are
given in Appendix~\ref{app:hyperparameters}. The full $K=16$ comparisons in
Tables~\ref{tab:res_k16_g1g2g3}--\ref{tab:res_k16_s10} expose the pipeline
ablations: CFI provides coverage, the covariance constraint prevents collapse,
and subsequent KL or flow refinement may improve efficiency.
Tables~\ref{tab:rho_sensitivity} and~\ref{tab:alpha_sensitivity} show only mild
sensitivity to $\rho$ and $\alpha$ on $g_5$ ($d=40$). We treat $M$ separately
as a simulation budget and keep it fixed across methods and benchmarks.

\paragraph{Visualization ($d=2$, $g_1$).}
Fig.~\ref{fig:g1-quick} shows representative snapshots for
$\Prob\!\big(g_1(\bfX)>0\big)$ with $\bfX\sim\calN(\bf 0,\bf I)$
using Gaussian mixtures    with $K=2$.
The panels illustrate the CFI initialization and final CFI-c+FKL-c+F with normalizing flow. Additional visualizations, with alternative mixture sizes and RKL-c
 trajectories, are provided in Appendix~\ref{app:graph_g1}.

 \subsection{Summary of main results}
 \input{tables/tab_ratios_chat}
For $K=16$, Table~\ref{tab:headline_results} summarizes estimates,
log-errors, and CoVs, while Table~\ref{tab:failure_rates} reports
failure counts over 20 evaluations.
A run fails if its IS estimate is non-finite or zero because no positive
event contribution is observed. These counts measure empirical stability;
zero failures for flow-refined proposals do not establish finite variance.
For fFKL-c on $g_6$ and $g_7$, all 20 estimates are positive and finite
but vary substantially across seeds, especially at $K=16$, thus
theorem coverage and numerical success do not ensure small finite-budget
variance. Appendix~\ref{app:full_results} provides full results for
$K=6$ and $K=16$, including unconstrained baselines.

Unconstrained CFI-u frequently collapses. Likewise, fFKL-u and fRKL-u
can perform well but often fail, particularly in high-dimensional or
anisotropic settings. In contrast, CFI-c provides a stable starting
point for constrained KL refinement. FKL-c often achieves greater
variance reduction, notably on $g_1$ and $g_3$, while RKL-c remains
competitive.

Flow refinement (+F) often further reduces variance, especially for multimodal
and high-dimensional benchmarks
$g_5$ and $g_7$. These results support constrained CFI initialization
followed by KL refinement, with optional flows when empirically beneficial.

\paragraph{Computational cost.}
Representative timings from our reported setup illustrate the computational
cost trade-off. In these measurements, CFI is comparatively inexpensive,
whereas full adaptive KL fitting costs substantially more because it repeats
proposal optimization across intermediate levels. On $g_7$, the optional flow
stage adds approximately $38$--$42$ seconds and yields a moderate CoV
improvement. The implementations were not aggressively optimized, and
wall-clock timings are setup- and hardware-dependent; we therefore use them to
illustrate trade-offs rather than establish a universal runtime ranking.
Appendix~\ref{app:running_times} reports detailed training times for $g_1$,
$g_6$, and $g_7$.

\paragraph{Defensive importance sampling.}
We additionally evaluate fRKL-d on $g_1$ using a $K=16$ trainable GMM and the
full defensive proposal throughout adaptation. Following the guidance of
Hesterberg~\cite[Sec.~6.2]{hesterberg1995weighted} that defensive-mixture results
are not unduly sensitive when $\lambda$ is bounded away from zero and that values
between $0.1$ and $0.5$ are typically used, Table~\ref{tab:defensive_g1} reports
the representative choice $\lambda=0.1$.
\input{tables/tab_defensive_g1}
We also evaluated
$\lambda\in\{0.1,0.2,0.3,0.4,0.5\}$; the full sensitivity results are reported
in Appendix~\ref{app:full_results}. All 20 fRKL-d evaluation estimates were
nonzero. Relative to fRKL-d, fRKL-c reduces CoV by approximately $76\%$,
increases variance reduction by a factor of $17$, and reduces log-error by a
factor of $3.2$; CFI and optional flow refinement improve these metrics further.

\section*{Limitations}
The finite-variance guarantee assumes a Gaussian nominal distribution and applies only when the final proposal remains a constrained GMM; it does not extend automatically to the optional flow refinement. For non-Gaussian nominal distributions, analogous guarantees would require a separate tail analysis. CFI and a finite GMM may miss modes or approximate narrow and irregular rare-event regions poorly, while the covariance floor can limit concentration around the rare-event set. Full-covariance mixtures require \(O(Kd^2)\) parameters and enough elite
samples to estimate them; the covariance floor does not remove this sample
complexity, and low-rank or structured covariances are a natural extension. CFI reduces cold-start failures but may require many intermediate levels when the rare-event region is difficult to reach. The method is therefore most useful when cold start and covariance collapse dominate, and less attractive when crude Monte Carlo is adequate or the available budget cannot support full-covariance estimation.

\section{Conclusions}
We proposed a rare-event importance-sampling framework combining
coverage-first initialization, constrained KL refinement, and optional
normalizing flows.
Theorem~\ref{thm:gmm_finite_var} guarantees finite variance for final
constrained GMM proposals satisfying its strict covariance condition,
but not necessarily small finite-budget variance.
Across the tested benchmarks, the framework achieves favorable empirical
accuracy and stability compared with KL baselines, Subset Simulation,
AMS, and selected Safe-ICE comparisons, while unconstrained KL proposals
frequently collapse.

\begin{ack}
We gratefully acknowledge Polish high-performance computing infrastructure PLGrid (HPC Center: WCSS) for providing computer facilities and support within computational grant no. PLG/2026/019627. 
Rafał Topolnicki acknowledges the support of the project "Center for Trustworthy Artificial Intelligence for Life Sciences" implemented under the International Research Agendas programme of the Foundation for Polish Science co-financed by the European Union under the European Funds for Smart Economy 2021-2027 (FENG). 
The work of Tomasz Trzciński was supported by the National Science Centre,
Poland (NCN), under grant no.~2022/45/B/ST6/02817.

\end{ack}

  \bibliographystyle{abbrv}  
  \bibliography{flow_ce_library}

 \newpage
\appendix
\begin{center}
\hrule
\vspace{1em}
{\Large\bfseries
Robust Importance Sampling for Rare Events\\
via Constrained Gaussian Mixtures:\\[0.5em]
Appendix
}

\vspace{1em}
\hrule
\end{center}

\input{2026_NeurIPS_flowce_appendix_theory}
\input{2026_NeurIPS_flowce_appendix_proofs}
\input{2026_NeurIPS_flowce_appendix_completeresults}

\input{2026_NeurIPS_flowce_appendix_times}
\input{2026_NeurIPS_flowce_appendix_scalability}
\input{2026_NeurIPS_flowce_appendix_sensitivity}
\input{2026_NeurIPS_flowce_appendix_alternative}
\input{2026_NeurIPS_flowce_appendix_safeice}
\input{2026_NeurIPS_flowce_appendix_benchmark}
\input{2026_NeurIPS_flowce_appendix_other}

\end{document}

%% file: tables/table_new_k20_g0_neurips_v3.tex
\begingroup
\centering
\scriptsize
\captionof{table}{Toy: estimating $\Prob(|X|>4)$, $X\sim\mathcal{N}(0,1)$, $K=20$.
FG marks methods whose final proposal is a constrained GMM covered by Theorem~\ref{thm:gmm_finite_var}.
 \textbf{Bold}, \textbf{\emph{bold italic}}, and \emph{italic} mark the first, second, and third result, respectively.}
\label{tab:res_k20_g0}
\setlength{\tabcolsep}{2pt}
\rowcolors{3}{gray!11}{white}
\begin{tabular}{l c | r r r r}
\toprule
 & \multicolumn{1}{r|}{$I_{\rm true}=$} & \multicolumn{4}{c}{6.33425e-05} \\
\midrule
 \multicolumn{1}{c}{Method} & \multicolumn{1}{c|}{\textbf{FG}}
& \multicolumn{1}{c}{$\hat{I}_R$}
& \multicolumn{1}{c}{Var red.}
& \multicolumn{1}{c}{log-err}
& \multicolumn{1}{c}{CoV} \\
\midrule

\multicolumn{2}{c|}{\emph{Baselines}} & \multicolumn{4}{c}{} \\
 \hspace{0.1em}CMC &
 & 6.55e-05 & 1 & 2.65e-01 & 4.06e-01 \\
 \hspace{0.1em}fRKL-u &
 & 6.30e-05 & 2.96e+02 & 1.88e-02 & 2.37e-02 \\
 \hspace{0.1em}fFKL-u &
 & 0 & \xmark & \xmark & \xmark \\
 \hspace{0.1em}fRKL-c & \textcolor{green!60!black}{\cmark}
 & 6.32e-05 & 5.89e+02 & 1.38e-02 & 1.67e-02 \\
 \hspace{0.1em}fFKL-c & \textcolor{green!60!black}{\cmark}
 & 6.36e-05 & 3.67e+03 & 6.41e-03 & 6.65e-03 \\
 \hspace{0.1em}AMS &
 & 6.33e-05 & 4.19e-01 & 1.04e-01 & 5.98e-01 \\
 \hspace{0.1em}Subset &
 & 6.33e-05 & 6.81e-01 & 5.61e-02 & 5.23e-01 \\
 \hspace{0.1em}Safe-ICE &
 & 6.17E-05 & 5.25e-02 & 2.93e-02 & 5.45e-01 \\

\multicolumn{2}{c|}{\emph{Our methods}} & \multicolumn{4}{c}{} \\
 \hspace{0.1em}CFI-u &
 & 0 & \xmark & \xmark & \xmark \\

 \hspace{0.1em}CFI-c & \textcolor{green!60!black}{\cmark}
 & 6.34e-05 & 2.94e+03 & 5.17e-03 & 7.44e-03 \\
 \hspace{0.1em}CFI-c+RKL-c & \textcolor{green!60!black}{\cmark}
 & 6.33e-05 & 2.90e+03 & 5.72e-03 & 7.51e-03 \\
 \hspace{0.1em}CFI-c+FKL-c & \textcolor{green!60!black}{\cmark}
 & 6.33e-05 & \textbf{\emph{5.56e+03}} & \textbf{\emph{4.49e-03}} & \textbf{\emph{5.42e-03}} \\
 \hspace{0.1em}CFI-c+RKL-c+F &
 & 6.32e-05 & \emph{3.82e+03} & \emph{4.62e-03} & \emph{6.54e-03} \\
 \hspace{0.1em}CFI-c+FKL-c+F &
 & 6.34e-05 & \textbf{5.81e+03} & \textbf{4.39e-03} & \textbf{5.29e-03} \\
\bottomrule
\end{tabular}
\par
\endgroup

%% file: tables/tab_main_chat_v2.tex
\begin{table*}[t]
\centering
\footnotesize
\setlength{\tabcolsep}{2.3pt}
\caption{
Best methods by CoV at $R=10^5$ simulator evaluations: baseline
(CMC, fFKL-c, fRKL-c, CFI-u, AMS, Subset), theorem-covered proposed
method with a final constrained GMM, and proposed method overall.
Bold marks the overall lowest CoV.
}
\label{tab:headline_results}
\begin{tabular}{llr lrr lrr lrr}
\toprule
$g$
& $d$
& $I_{\rm true}$
& \multicolumn{3}{c}{Best baseline by CoV}
& \multicolumn{3}{c}{Best finite-var. method}
& \multicolumn{3}{c}{Best proposed overall} \\
\cmidrule(lr){4-6}
\cmidrule(lr){7-9}
\cmidrule(lr){10-12}
& &
& Method & log-err $\downarrow$ & CoV $\downarrow$
& Method & log-err $\downarrow$ & CoV $\downarrow$
& Method & log-err $\downarrow$ & CoV $\downarrow$ \\
\midrule
$g_1$
& 2
& 4.79e-6
& CMC & 1.43e+00 & \textbf{3.92e-03}
& CFI-c+RKL-c & 6.26e-03 & 7.24e-03
& CFI-c+FKL-c+F & 3.16e-03 & \textbf{3.92e-03} \\

$g_2$
& 2
& 2.23e-3
& CMC & 4.14e-02 & 2.46e-03
& CFI-c+FKL-c & 3.39e-03 & 4.85e-03
& CFI-c+RKL-c+F & 2.87e-03 & \textbf{1.86e-03} \\

$g_3$
& 2
& 4.21e-3
& CMC & 4.25e-02 & \textbf{3.31e-03}
& CFI-c+FKL-c & 3.25e-03 & 4.35e-03
& CFI-c+FKL-c+F & 2.80e-03 & \textbf{3.31e-03} \\

$g_5$
& 40
& 3.15e-5
& fRKL-c & 7.31e-02 & 3.19e-02
& CFI-c+RKL-c & 7.28e-02 & 4.30e-02
& CFI-c+RKL-c+F & 6.45e-02 & \textbf{2.99e-02} \\

$g_6$
& 100
& 2.33e-4
& fRKL-c & 5.90e-03 & 6.08e-03
& CFI-c+RKL-c & 7.02e-03 & 6.12e-03
& CFI-c+RKL-c+F & 5.94e-03 & \textbf{5.43e-03} \\

$g_7$
& 62
& 6.00e-5
& fRKL-c & 5.26e-02 & 8.88e-03
& CFI-c+RKL-c & 5.44e-02 & 6.80e-03
& CFI-c+RKL-c+F & 5.49e-02 & \textbf{6.24e-03} \\

$g_{10}$
& 10
& 4.68e-4
& fFKL-c & 1.24e-02 & \textbf{6.01e-03}
& CFI-c+FKL-c & 1.14e-02 & 6.80e-03
& CFI-c+FKL-c+F & 1.61e-02 & 6.22e-03 \\
\bottomrule
\end{tabular}
\end{table*}

%% file: tables/tab_ratios_chat.tex
\begin{wraptable}{r}{0.5\textwidth}
\centering
\vspace{-3ex}
\setlength{\tabcolsep}{3pt}
\caption{
Failure counts over 20 independent runs. 
This table measures reliability rather than conditional accuracy on successful runs.}
\label{tab:failure_rates}
\begin{tabular}{lrrrrrrr}
\toprule
Method
& $g_1$
& $g_2$
& $g_3$
& $g_5$
& $g_6$
& $g_7$
& $g_{10}$ \\
\midrule
CMC
& 16 & 0 & 0 & 1 & 0 & 0 & 0 \\
fRKL-c
& 0 & 0 & 0 & 0 & 0 & 0 & 0 \\
fFKL-c
& 0 & 0 & 0 & 0 & 0 & 0 & 0 \\
CFI-u
& 20 & 20 & 19 & 0 & 20 & 20 & 0 \\
\midrule
CFI-c
& 0 & 0 & 0 & 0 & 0 & 0 & 0 \\
CFI-c+RKL-c
& 0 & 0 & 0 & 0 & 0 & 0 & 0 \\
CFI-c+FKL-c
& 0 & 0 & 0 & 0 & 0 & 0 & 0 \\
CFI-c+RKL-c+F
& 0 & 0 & 0 & 0 & 0 & 0 & 0 \\
CFI-c+FKL-c+F
& 0 & 0 & 0 & 0 & 0 & 0 & 0 \\
\bottomrule
\end{tabular}
\vspace{-2ex}
\end{wraptable}

%% file: tables/tab_defensive_g1.tex
\begin{table}[H]
\centering
\small
\caption{Comparison with Hesterberg-style defensive importance sampling on
$g_1$ using $K=16$, $R=10^5$, and 20 independent evaluation runs. The
non-defensive rows are reproduced from Table~\ref{tab:res_k16_g1g2g3}.}
\label{tab:defensive_g1}
\setlength{\tabcolsep}{5pt}
\begin{tabular}{lrrrr}
\toprule
Method & Estimate & Var. reduction & log-error & CoV \\
\midrule
fRKL-d, $\lambda=0.1$
& 4.773e-06 & 1.73e+03 & 5.06e-02 & 7.53e-02 \\
fRKL-c
& \underline{4.80e-06} & \underline{2.95e+04}
& \underline{1.59e-02} & \underline{1.82e-02} \\
CFI-c+RKL-c
& \textbf{4.79e-06} & \textbf{1.87e+05}
& \textbf{6.26e-03} & \textbf{7.24e-03} \\
CFI-c+RKL-c+F
& \textbf{4.79e-06} & \textbf{7.77e+05}
& \textbf{3.55e-03} & \textbf{3.95e-03} \\
\bottomrule
\end{tabular}
\end{table}

%% file: 2026_NeurIPS_flowce_appendix_theory.tex
\section{Additional background and baseline methods}\label{app:baselines}

This appendix summarizes classical Monte Carlo and KL-based baselines for
rare-event probability estimation. These methods serve as reference points for
the constrained and coverage-first approaches introduced in the main body.

\paragraph{Crude Monte Carlo (CMC).}
The target probability can be written as
\[
I=\Exp_{\X\sim f}\big[\indic(g(\X)>\gamma)\big],
\]
and estimated by the empirical fraction of $R$ samples falling in the rare set.
The variance of the estimator equals $I(1-I)/R$, which becomes prohibitive in the
rare-event regime unless $R=\mathcal{O}(1/I)$.

\paragraph{Importance Sampling and KL fitting.}
Importance sampling introduces a proposal distribution $h$ and rewrites
\[
I=\Exp_{\X\sim h}\!\left[
\indic(g(\X)>\gamma)\,\frac{f(\X)}{h(\X)}
\right].
\]
The zero-variance proposal is
\[
h^{\rm opt}(\x)=\frac{\indic(g(\x)>\gamma)\,f(\x)}{I},
\]
which is not directly usable since it depends on the unknown $I$
(except some speciatl cases where MCMC may be employed --  but breaks i.i.d.\ draws and standard unbiased IS). We therefore restrict to a parametric family $h\equiv p_{\bftheta}$ and fit
$p_{\bftheta}$ to approximate $h^{\rm opt}$ by minimizing a KL divergence.
Due to the asymmetry of KL, this leads to two classical baselines depending on its
orientation.

\paragraph{Reverse KL (RKL -- variational inference).}
Minimizing $\mathrm{KL}(p_{\bftheta}\|h^{\rm opt})$ is ill-posed because
$h^{\rm opt}$ contains an indicator.
A standard remedy replaces the indicator with a smooth gate
\[
s_\tau(g(\x),\gamma)=\sigma\!\left(\frac{g(\x)-\gamma}{\tau}\right), \qquad
\sigma(z)=\frac{1}{1+e^{-z}},
\]
leading to the surrogate target
\[
h^{\rm opt}_{s}(\x)
=\frac{s_\tau(g(\x),\gamma)\,f(\x)}{I_s},
\qquad
I_s=\int s_\tau(g(\x),\gamma)f(\x)\,d\x .
\]
The resulting reverse-KL objective is
\begin{equation}\label{eq:vi-obj}
\Exp_{\bfX\sim p_{\bftheta}}\!\Big(
\log p_{\bftheta}(\bfX)
-\log s_\tau(g(\bfX),\gamma)
-\log f(\bfX)
\Big).
\end{equation}
Using samples $\bfX_i\sim p_{\bftheta}$ yields the empirical surrogate
\[
\hat L(\bftheta)=
\frac{1}{M}\sum_{i=1}^M
\Big(
\log p_{\bftheta}(\bfX_i)
-\log f(\bfX_i)
-\log s_\tau(g(\bfX_i),\gamma)
\Big).
\]
Gradients are estimated via the log-derivative (REINFORCE) estimator
\cite{williams1992simple}.
Writing
$\eta_{\bftheta}(\x)=\log p_{\bftheta}(\x)-\log f(\x)-\log s_\tau(g(\x),\gamma)$,
\[
\nabla_{\bftheta} L(\bftheta)
=
\Exp_{p_{\bftheta}}\!\big[
(1+\eta_{\bftheta}(\bfX)-b)\,
\nabla_{\bftheta}\log p_{\bftheta}(\bfX)
\big],
\]
where $b$ is a control variate used to reduce variance.
This yields the practical update summarized in
Alg.~\ref{alg:vi_reinforce}.

\begin{algorithm}[t]
\caption{RKL -- Reverse-KL (Variational Inference)}
\label{alg:vi_reinforce}
\begin{algorithmic}[1]
\REQUIRE Family $\{p_{\bftheta}\}$, init $\bftheta_0$, base $f$, function $g$,
level $\gamma$, batch $M$, learning rates $\{\kappa_t\}$,
schedule $\{\tau_t\}$ (or fixed $\tau$),
smoothing $\alpha$, baseline momentum $\beta$
\STATE Initialize $\bftheta'_0=\bftheta_0$, $b_0=0$, $t=0$
\REPEAT
  \STATE $t\gets t+1$; $\tau\gets\tau_t$; sample $\bfX_{1:M}\sim p_{\bftheta'_{t-1}}$
  \STATE $z_i\gets(g(\bfX_i)-\gamma)/\tau$, \ $\log s_i\gets\log\sigma(z_i)$
  \STATE $\eta_i\gets\log p_{\bftheta'_{t-1}}(\bfX_i)-\log f(\bfX_i)-\log s_i$
  \STATE Compute
  \[
  \widehat{\nabla}_{\bftheta} L_t =
  \frac{1}{M}\sum_{i=1}^M
  (1+\eta_i-b_{t-1})\,
  \nabla_{\bftheta}\log p_{\bftheta}(\bfX_i)
  \Big|_{\bftheta=\bftheta'_{t-1}}
  \]
  \STATE Update:
  $\bftheta'_t \gets \bftheta'_{t-1}-\kappa_t\,\widehat{\nabla}_{\bftheta} L_t$;
  \ $b_t \gets (1-\beta)b_{t-1}+\beta\,\frac{1}{M}\sum_i\eta_i$
  \STATE Smoothing:
  $\bftheta_t\gets\alpha\,\bftheta'_t+(1-\alpha)\,\bftheta'_{t-1}$
\UNTIL{convergence (or $t=T_{\max}$)}
\STATE \textbf{Return} $p_{\bftheta_t}$
\end{algorithmic}
\end{algorithm}

\paragraph{Forward KL (FKL -- cross-entropy).}
Minimizing $\mathrm{KL}(h^{\rm opt}\|p_{\bftheta})$ yields
\begin{equation}\label{eq:theta_opt_exp}
\bftheta^*
=
\arg\max_{\bftheta}
\Exp_{\X\sim f}\!\big[
\indic(g(\X)>\gamma)\,\log p_{\bftheta}(\X)
\big].
\end{equation}
Using samples $\bfX_j\sim f$ leads to the surrogate
\begin{equation}\label{eq:ce_pure}
\bftheta^*
\approx
\arg\max_{\bftheta}
\frac{1}{M}\sum_{j=1}^{M}
\indic(g(\bfX_j)>\gamma)\,\log p_{\bftheta}(\bfX_j),
\end{equation}
corresponding to the standard non-adaptive cross-entropy method.
This baseline suffers from severe sample starvation when $I$ is small.

\paragraph{Adaptive forward KL (fFKL -- full cross-entropy).}
The adaptive CE method iteratively increases the threshold $\gamma_t$ and updates
$p_{\bftheta}$ using importance-weighted samples, ensuring that a fixed fraction
of mass lies above the current level.
While this approach alleviates cold-start issues, it does not guarantee finite
variance.

\begin{figure}[t]
\centering
\begin{minipage}[t]{0.48\linewidth}
\begin{algorithm}[H]
\caption{FKL-u: forward KL, unconstrained (baseline)}
\label{alg:ffkl_c}
\begin{algorithmic}[1]
\REQUIRE Family (GMM), batch $M$, epochs $T$, smoothing $\alpha$
\STATE Sample $\bfX_{1:M}\sim f$
\STATE Maximize
\[
\hat J(\bftheta)=
\frac{1}{M}\sum_{i=1}^M
\indic(g(\bfX_i)\ge\gamma)\log p_{\bftheta}(\bfX_i)
\]
by gradient steps for $T$ epochs (with smoothing)
\STATE \textbf{Return} $p_{\bftheta'_T}$
\end{algorithmic}
\end{algorithm}
\end{minipage}\hfill
\begin{minipage}[t]{0.48\linewidth}

\begin{algorithm}[H]
\caption{fFKL: full adaptive CE (baseline)}
\label{alg:ffkl_baseline}
\begin{algorithmic}[1]
\REQUIRE Family $\{p_{\bftheta}\}$ with $\bftheta_0$, nominal density $f$,
function $g$, target level $\gamma$, quantile $\rho$, batch $M$, smoothing
$\alpha\in[0,1)$
\STATE $t=0$
\REPEAT
  \STATE $t\gets t+1$; sample $\bfX_{1:M}\sim p_{\bftheta_{t-1}}$
  \STATE Set $\eta_i=g(\bfX_i)$, sort:
  $\eta_{(1)}\leq\cdots\leq\eta_{(M)}$
  \STATE $(1-\rho)$ quantile:  $\gamma_t\gets\eta_{(\lceil(1-\rho)M\rceil)}$
  \STATE Compute likelihood-ratio weights\par
  $\qquad \displaystyle w_i\gets f(\bfX_i)/p_{\bftheta_{t-1}}(\bfX_i)$
  \STATE \textbf{Weighted CE update:}
  \[
  {\footnotesize
  \bftheta'_t=\arg\max_{\bftheta}\frac{1}{M}
  \sum_{i=1}^M w_i\,
  \indic\!\bigl(g(\bfX_i)\geq\gamma_t\bigr)
  \log p_{\bftheta}(\bfX_i)
  }
  \]
  \STATE $\bftheta_t\gets
  \alpha\bftheta'_t+(1-\alpha)\bftheta_{t-1}$
\UNTIL{$\gamma_t\geq\gamma$}
\STATE \textbf{Return} $p_{\bftheta_t}$
\end{algorithmic}
\end{algorithm}
\end{minipage}
\end{figure}

%% file: 2026_NeurIPS_flowce_appendix_proofs.tex
\section{Proofs}\label{app_sec:proofs}

  \subsection{Proof of Proposition  \ref{propp:kl_collapse}}
  \begin{proof}
Let $A_\gamma=\{g(\bfX)>\gamma\}$ and $I=\Prob(A_\gamma)$, where
$\bfX\sim\mathcal{N}(\bfmu,\bfSigma)$, denote its density by $f$. Let the proposal be
$p_{\bftheta}=\mathcal{N}(\bfmu',\bfSigma')$. Then
\[
\Exp_f\left[
\mathbf{1}_{A_\gamma}\log p_{\bftheta}(\bfX)
\right]
=
I\,
\Exp\left[
\log p_{\bftheta}(\bfX)\mid A_\gamma
\right].
\]
Since $I$ does not depend on $\bftheta$, maximizing the left-hand side is
equivalent to maximum-likelihood fitting a Gaussian distribution to the
conditional law $\mathcal{L}(\bfX\mid A_\gamma)$.
The Gaussian maximum-likelihood solution matches the first two moments of this
conditional law. Hence
\[
\bfmu'_{\rm opt}
=
\Exp[\bfX\mid A_\gamma],
\qquad
\bfSigma'_{\rm opt}
=
\operatorname{Cov}(\bfX\mid A_\gamma).
\]
Consequently, if
\[
\lambda_{\min}\left(
\operatorname{Cov}(\bfX\mid A_\gamma)
\right)\to 0
\quad \text{as } \gamma\to\infty,
\]
then
\[
\lambda_{\min}(\bfSigma'_{\rm opt})\to 0.
\]
Thus, unconstrained KL fitting inherits the concentration of the rare-event
conditional distribution, which explains the covariance-collapse mechanism.
\end{proof}
As mentioned earlier, such concentration is typical in rare-event regimes:
as $\gamma$ increases, the event $\{g(\bfX)>\gamma\}$ is dominated by a small
subset of the space (e.g., near a boundary or a dominating point), leading to
effectively lower-dimensional support of the conditional distribution.

  \subsection{Proof of Theorem \ref{thm:gmm_finite_var} }\label{app:proof_gmm_finite_var}

  \begin{proof}
  Let $f(\bfx) = \calN(\bfx \mid \bfmu, \bfSigma)$ denote the target density, and let
  $p(\bfx) = \sum_k \pi_k \calN(\bfx \mid \bfmu_k, \bfSigma_k)$ be the proposal GMM.

  Define the importance weight:
  \[
  w(\bfx) = \indic(g(\bfx) > \gamma) \cdot \frac{f(\bfx)}{p(\bfx)}.
  \]

Since $\Exp_p[w(\bfX)]=I<\infty$, finiteness of the variance is
equivalent to finiteness of the second moment:
\[
\Var_p[w(\bfX)]<\infty
\quad\Longleftrightarrow\quad
\Exp_p[w(\bfX)^2]<\infty.
\]
Moreover,
\[
\Exp_p[w(\bfX)^2]
=
\int
\left(\frac{f(\bfx)}{p(\bfx)}\right)^2
\indic(g(\bfx)>\gamma)\,p(\bfx)\,d\bfx
=
\int_{\{g(\bfx)>\gamma\}}
\frac{f(\bfx)^2}{p(\bfx)}\,d\bfx.
\]

Choose any mixture component $k$ with $\pi_k>0$. Then
\[
p(\bfx)
=
\sum_j \pi_j \calN(\bfx\mid\bfmu_j,\bfSigma_j)
\geq
\pi_k\calN(\bfx\mid\bfmu_k,\bfSigma_k).
\]
Therefore,
\[
\int_{\{g(\bfx)>\gamma\}}
\frac{f(\bfx)^2}{p(\bfx)}\,d\bfx
\leq
\frac{1}{\pi_k}
\int_{\mathbb R^d}
\frac{f(\bfx)^2}
{\calN(\bfx\mid\bfmu_k,\bfSigma_k)}
\,d\bfx.
\]
Up to a positive multiplicative constant independent of $\bfx$,
\[
\frac{f(\bfx)^2}
{\calN(\bfx\mid\bfmu_k,\bfSigma_k)}
\propto
\exp\!\left(
-(\bfx-\bfmu)^\top\bfSigma^{-1}(\bfx-\bfmu)
+\frac12
(\bfx-\bfmu_k)^\top
\bfSigma_k^{-1}(\bfx-\bfmu_k)
\right).
\]
If
$\bfSigma_k\succ\tfrac12\bfSigma$, then
$\bfSigma_k^{-1}\prec2\bfSigma^{-1}$, and hence
\[
\bfSigma^{-1}-\frac12\bfSigma_k^{-1}\succ0.
\]
Consequently,
\[
(\bfx-\bfmu)^\top\bfSigma^{-1}(\bfx-\bfmu)
-\frac12
(\bfx-\bfmu_k)^\top
\bfSigma_k^{-1}(\bfx-\bfmu_k)
\longrightarrow +\infty
\]
as $\|\bfx\|\to\infty$.
Thus the ratio above has Gaussian-decaying tails and is integrable over
$\mathbb R^d$.

Thus $\Exp_p[w(\bfX)^2]<\infty$, and consequently
$\Var_p[w(\bfX)]<\infty$. Since $w(\bfX_1),\ldots,w(\bfX_R)$ are
i.i.d.,
\[
\Var_p(\hat I_R^{\rm IS})
=
\frac{1}{R}\Var_p[w(\bfX)]
<\infty.
\]
  \end{proof}

\section{Motivating example: full derivation}
\label{app:motivating}

We consider the rare-event probability
\[
I=\Prob(X>4), \qquad X\sim\calN(0,1).
\]
For crude Monte Carlo (CMC), the estimator variance equals
\[
\Var(\hat I^{\rm CMC})=\frac{I(1-I)}{R},
\]
which is prohibitive in rare-event regimes unless $R$ is of order $1/I$.
We compare the classical forward-KL approach (FKL, equivalently cross-entropy)
and the reverse-KL approach (RKL, variational inference with a smoothed gate
$s_\tau$) for two Gaussian proposal families:
\[
p_{\bftheta}=\calN(\mu,1)
\qquad\text{and}\qquad
p_{\bftheta}=\calN(\mu,\sigma^2).
\]

\paragraph{Fixed variance: $p_{\bftheta}=\calN(\mu,1)$.}
\begin{itemize}
\item \textbf{FKL (CE).}
From the forward-KL objective \eqref{eq:theta_opt_exp}, the optimal mean satisfies
\begin{equation}
\mu^*_{\rm FKL}
=
\argmin_{\mu}\;
\int_{4}^{\infty} (x-\mu)^2 e^{-x^2/2}\,dx
=
4.2256.
\end{equation}
The resulting importance sampling variance equals
$4.5162\times10^{-9}$.

\item \textbf{RKL (smoothed).}
For the reverse-KL objective with smoothed gate
\[
s_\tau(x,4)=\sigma((x-4)/\tau),
\]
the optimization reduces to minimizing
\[
r(\mu;\tau)
=
\tfrac12\mu^2
-
\Exp\!\big[\log\sigma((X-4)/\tau)\big].
\]
The stationarity condition yields
\[
\mu^*_{\rm RKL}
=
\tau^{-1}\Exp\!\big[1-\sigma((X-4)/\tau)\big].
\]
For $\tau=0.1$, this gives $\mu^*_{\rm RKL}=4.2036$, with IS variance
$4.5045\times10^{-9}$.
\end{itemize}
Thus, with fixed variance, both FKL and RKL lead to well-behaved proposals and
comparable variance reduction over crude Monte Carlo.

\paragraph{Free mean and variance: $p_{\bftheta}=\calN(\mu,\sigma^2)$.}

The behavior changes qualitatively once the proposal variance is also optimized.
\begin{itemize}
\item \textbf{FKL (CE).}
Optimizing \eqref{eq:theta_opt_exp} over $(\mu,\sigma^2)$ yields
\[
(\mu^*_{\rm FKL},\sigma^{2*}_{\rm FKL})=(4.2256,\,0.046671).
\]
The second moment of the IS estimator is finite only if
\[
\Exp_p\!\left[
\mathbf{1}(X>4)\frac{f(X)^2}{p(X)^2}
\right]
=
\int_4^\infty \frac{f(x)^2}{p(x)}\,dx
<\infty.
\]
For a Gaussian proposal $p=\calN(\mu,\sigma^2)$, a direct calculation gives
\[
\int_4^\infty \frac{f(x)^2}{p(x)}\,dx
=
\frac{\sigma}{\sqrt{2\pi}}
\exp\!\Big(\frac{\mu^2}{2\sigma^2}\Big)
\int_{4}^{\infty}
\exp\!\Big(\Big(\tfrac{1}{2\sigma^2}-1\Big)x^2-\mu x\Big)\,dx.
\]
This integral diverges whenever $\sigma^2<\tfrac12$
(and also for $\sigma^2=\tfrac12$ if $\mu\le0$).
Thus, the threshold $1/2$ is sharp in this one-dimensional example, up to the
boundary case $\sigma^2=1/2$, whose behavior depends on the proposal mean and
the geometry of the event. For multivariate GMM proposals, however, the
componentwise condition in Theorem~\ref{thm:gmm_finite_var} is deliberately
conservative rather than necessary.
Since $\sigma^{2*}_{\rm FKL}=0.046671<\tfrac12$, the resulting IS estimator has
infinite variance, despite optimizing the forward-KL objective.

\item \textbf{RKL (smoothed).}
For the reverse-KL objective, we minimize
\[
  r(\mu,\sigma^2;\tau)=
  \underbrace{\tfrac12\Big(\mu^2+\sigma^2-\log\sigma^2-1\Big)}_{\mathrm{KL}(\calN(\mu,\sigma^2)\,\|\,\calN(0,1))}
  -\Exp_{X\sim\calN(\mu,\sigma^2)}\!\big[\log\sigma((X-4)/\tau)\big].
\]
The first-order conditions are
\begin{align*}
\partial_\mu r
&=
\mu
-
\tau^{-1}
\Exp\!\Big[1-\sigma\!\Big(\tfrac{\mu+\sigma\varepsilon-4}{\tau}\Big)\Big]
=0,\\
\partial_{\sigma^2} r
&=
\tfrac12\Big(1-\tfrac{1}{\sigma^2}\Big)
-
\frac{1}{2\tau\sqrt{\sigma^2}}
\Exp\!\Big[\varepsilon
\Big(1-\sigma\!\Big(\tfrac{\mu+\sigma\varepsilon-4}{\tau}\Big)\Big)\Big]
=0,
\end{align*}
with $\varepsilon\sim\calN(0,1)$.
For small $\tau$, solutions with $\sigma^{2*}_{\rm RKL}<\tfrac12$ may occur;
for example, with $\tau=0.1$ one obtains
\[
(\mu^*_{\rm RKL},\sigma^{2*}_{\rm RKL})=(3.2782,\,0.4306),
\]
again implying infinite IS variance.
\end{itemize}
\paragraph{Numerical summary.}
Table~\ref{tab:motivating_example_details} summarizes the optimal proposals and
the corresponding IS variances for the four cases discussed above.
\begin{table}[h!]
\centering
\small
\caption{Motivating example: optimal proposals obtained by FKL and RKL
and the resulting IS variance for $\Prob(X>4)$.}
\label{tab:motivating_example_details}
\begin{tabular}{llcc}
\toprule
Model &  & Optimum & IS variance \\
\midrule
$\calN(\mu,1)$ & FKL & $\mu^*=4.2256$ & $4.5162\cdot 10^{-9}$ \\
$\calN(\mu,1)$ & RKL ($\tau=0.1$) & $\mu^*=4.2036$ & $4.5045\cdot 10^{-9}$ \\
$\calN(\mu,\sigma^2)$ & FKL & $(4.2256,\,0.0467)$ & $\infty$ \\
$\calN(\mu,\sigma^2)$ & RKL ($\tau=0.1$) & $(3.2782,\,0.4306)$ & $\infty$ \\
\bottomrule
\end{tabular}
\end{table}

\paragraph{Interpretation.}
The example shows that KL fitting can concentrate the proposal too aggressively
around the rare-event conditional distribution. Such concentration may improve
the KL objective, but it can destroy the tail overlap required for stable
importance sampling. In particular, when variance parameters are unconstrained,
both FKL and RKL can drive $\sigma^2$ below the critical threshold
$\tfrac12$, leading to infinite estimator variance.

This mechanism is not specific to the one-dimensional Gaussian example. In
higher dimensions and for Gaussian mixture proposals, individual components may
collapse onto rare-event modes, producing covariance matrices that are too small
in some directions. This is the covariance-collapse mechanism formalized in
Proposition~\ref{propp:kl_collapse}. 
The constrained proposal family rules out this particular infinite-variance mechanism by enforcing the covariance floor from Theorem~\ref{thm:gmm_finite_var},
namely
\[
\bfSigma_k \succ \tfrac12\,\bfSigma,
\]
where $\bfSigma$ is the covariance matrix of the original sampling distribution.

\paragraph{Empirical GMM illustration.}
We also test the related two-sided rare-event problem
\[
I=\Prob(|X|>4), \qquad X\sim\calN(0,1),
\]
using a Gaussian mixture proposal with $K=20$ components. This setting is shown
in Fig.~\ref{fig:g0}, and the corresponding quantitative results are reported
in Table~\ref{tab:res_k20_g0}. Consistent with the derivation above, the
unconstrained CFI-u proposal exhibits severe variance pathologies, whereas the
constrained CFI-c proposal remains stable and accurate.

\paragraph{Summary.}
This example illustrates two key points. First, minimizing reverse or forward KL
alone does not guarantee a finite-variance IS estimator. Second, the issue is
not specific to the one-dimensional Gaussian case: the same covariance-collapse
mechanism appears in Gaussian mixtures and higher-dimensional settings. This
motivates the covariance constraints imposed in the main body.

%% file: 2026_NeurIPS_flowce_appendix_completeresults.tex
\section{Complete numerical results}
\label{app:full_results}
This section reports the complete numerical results for all benchmark problems
and model variants. 
The main paper reports compact summaries to preserve the flow of the experimental
section; the tables below provide the full per-benchmark numerical results.
Compared to the main body, the tables below additionally include the unconstrained
baseline methods fRKL-u and fFKL-u, which are omitted from the primary comparison
due to their lack of robustness.

In all tables below, FG marks methods whose final proposal is a constrained GMM
covered by the finite-variance guarantee of Theorem~\ref{thm:gmm_finite_var}.

\paragraph{Reference values.}
For selected benchmarks previously studied in the literature, we additionally report
reference   values taken from the corresponding publications
(denoted as ``Ref.\ results${}^{(*)}$'' or ``CEVE${}^{(*)}$'' in the tables,
with sources specified in the captions).
These values are used solely as external calibration points and are not meant
to enable direct efficiency comparisons, as experimental protocols
(e.g., IS sample sizes, number of independent runs, or variance reporting conventions)
often differ across studies and are not always fully specified.
In contrast, CMC and FlowCE results evaluated under our protocol use a fixed IS
budget and 20 independent evaluation runs, allowing for a consistent assessment
of estimator stability and variability. As specified in
Section~\ref{sec:exp_setup}, only
the CMC-dependent variance-reduction ratio excludes runs with zero CMC
estimates; the corresponding FlowCE estimates, mean log-errors, and CoVs use
all 20 runs.

\paragraph{Results for $K=6$.}
For $K=6$, we report full results for all benchmarks:
\begin{itemize}
  \item $g_1, g_2, g_3$: Table~\ref{tab:res_k6_g1g2g3_full},
  \item $g_5, g_6, g_7$: Table~\ref{tab:res_k6_g5g6g7_full},
  \item $g_{10}$: Table~\ref{tab:res_k6_s10_full}.
\end{itemize}
These tables provide a comprehensive view of estimator performance, including
methods that occasionally fail numerically or produce degenerate finite-sample
estimates. Such observed failures are distinct from a theoretically established
infinite-variance estimator.

\paragraph{Results for $K=16$.}
For $K=16$, the appendix tables provide the detailed results underlying the
compact summaries in the main body and additionally include the unconstrained
baselines:
\begin{itemize}
  \item $g_1, g_2, g_3$: Table~\ref{tab:res_k16_g1g2g3},
  \item $g_5, g_6, g_7$: Table~\ref{tab:res_k16_g5g6g7},
  \item $g_{10}$: Table~\ref{tab:res_k16_s10}.
\end{itemize}

\paragraph{Finite-budget behavior of fFKL-c on $g_6$ and $g_7$.}
For each fFKL-c setting with $K\in\{6,16\}$, the learned final proposal uses
training seed 100; the reported metrics average 20 independent ordinary,
unnormalized IS evaluations with seeds $100,200,\ldots,2000$ and $R=10^5$
each, without pooling samples across runs.  All 20 evaluations completed with
positive event counts, but this does not imply small finite-sample variance.
In particular, the $K=16$
estimates range from $1.9560\times10^{-9}$ to $7.7778\times10^{-4}$ for $g_6$
and from $3.7277\times10^{-6}$ to $1.0757\times10^{-3}$ for $g_7$.

\paragraph{Ranking and highlighting.}
In contrast to the compact main-body comparison, which reports selected
baselines and CFI-based methods, the appendix tables rank \emph{all} methods
with reportable, nondegenerate numerical estimates, including fRKL-u and
fFKL-u.
As a result, the highlighted entries (best, second-best, and third-best rows)
may differ from those shown in the main body.
This presentation is intended to provide a complete quantitative record, while
the main paper focuses on the robustness--efficiency trade-off.

\input{tables/table_k16_g1_g2_g3_v2}
\input{tables/table_k16_g5_g6_g7_v2}
\input{tables/table_k16_s10_v2}

\input{tables/table_new_k6_g1_g2_g3_avg_full_v2}
\input{tables/table_new_k6_g5_g6_g7_avg_full_v2}
\input{tables/table_new_k6_s10_avg_full_v2}

\clearpage
\paragraph{Defensive importance sampling and sensitivity to $\lambda$.}
For the $g_1$ benchmark, we additionally evaluate the Hesterberg-style
defensive counterpart of fRKL, denoted fRKL-d.  We use a $K=16$ trainable GMM
and the full defensive proposal
\[
p_{\bftheta}^{\mathrm{def}}(\bfx)
=
\lambda f(\bfx)+(1-\lambda)p_{\mathrm{GMM},\bftheta}(\bfx)
\]
throughout adaptation and evaluation.  The fixed nominal component implies
$f(\bfx)/p_{\bftheta}^{\mathrm{def}}(\bfx)\leq 1/\lambda$.
Following Hesterberg~\cite[Sec.~6.2]{hesterberg1995weighted}, we evaluate
$\lambda\in\{0.1,0.2,0.3,0.4,0.5\}$.  For each value, one trained proposal is
held fixed and evaluated using 20 independent seeds; all reported defensive-IS
estimates are nonzero.

Table~\ref{tab:defensive_lambda_sweep} shows that the defensive baseline is not
highly sensitive to $\lambda$ over this range, and the conclusions of the main
comparison remain unchanged.  The fixed component bounds the importance
weights, but in this experiment it does not prevent concentration of the
adaptive GMM component; this is an empirical observation for $g_1$, not a
general limitation of defensive importance sampling.

\input{tables/tab_defensive_lambda_sweep}

%% file: tables/table_k16_g1_g2_g3_v2.tex
\begin{table*}[t!]
\centering
\caption{Estimating $\Prob(g_i(\bfX)>\gamma), i=1,2,3, \bfX\sim\mathcal{N}(\bfmu^{(A)},\bfSigma^{(A)})$, $K=16$, $R=10^5$,  $d=2$. \\
Ref. results${}^{(*)}$: results from \cite{gao2023rare} for $g_1$ and \cite{papaioannou2016sequential} for $g_2,g_3$.}
\label{tab:res_k16_g1g2g3}
\setlength{\tabcolsep}{2pt}
\rowcolors{3}{gray!11}{white}
\makebox[\textwidth][c]{%
\resizebox{1.15\textwidth}{!}{%
\begin{tabular}{c l c |
*{5}{r}
*{5}{r} |
*{5}{r}}
\toprule
 & & & \multicolumn{5}{c}{$g_1$ \cite{gao2023rare}} & \multicolumn{5}{c}{$g_2$ \cite{papaioannou2016sequential}} & \multicolumn{5}{c}{$g_3$ \cite{papaioannou2016sequential}} \\
 & \multicolumn{1}{r}{$I_{\rm true}=$} & & \multicolumn{5}{c}{4.7934e-06} & \multicolumn{5}{c}{2.22668e-03} & \multicolumn{5}{c}{4.2073e-03} \\
\midrule
 & \multicolumn{1}{r}{} & \multicolumn{1}{c|}{\textbf{FG}}
& \multicolumn{1}{c}{$\hat{I}_R$}
& \multicolumn{1}{c}{Var red.}
& \multicolumn{1}{c}{log-err}
& \multicolumn{1}{c}{CoV}
& \multicolumn{1}{c|}{$\frac{\textrm{log-err}}{\textrm{Var red.}}$}
& \multicolumn{1}{c}{$\hat{I}_R$}
& \multicolumn{1}{c}{Var red.}
& \multicolumn{1}{c}{log-err}
& \multicolumn{1}{c}{CoV}
& \multicolumn{1}{c|}{$\frac{\textrm{log-err}}{\textrm{Var red.}}$}
& \multicolumn{1}{c}{$\hat{I}_R$}
& \multicolumn{1}{c}{Var red.}
& \multicolumn{1}{c}{log-err}
& \multicolumn{1}{c}{CoV}
& \multicolumn{1}{c}{$\frac{\textrm{log-err}}{\textrm{Var red.}}$}\\
\midrule
 & \hspace{0.1em}CMC &
 & 4.50e-06 & 1 & 1.43e+00 & \emph{3.92e-03} & 1.43e+00
 & 2.25e-03 & 1 & 4.14e-02 & \emph{2.46e-03} & 4.14e-02
 & 4.15e-03 & 1 & 4.25e-02 & \textbf{\emph{3.31e-03}} & 4.25e-02 \\

\multicolumn{3}{l|}{\emph{Ablation}} & \multicolumn{5}{c|}{} & \multicolumn{5}{c|}{} & \multicolumn{5}{c}{} \\
 & \hspace{0.1em}CFI-u &
 & 0 & \xmark & \xmark & \xmark & \xmark
 & 0 & \xmark & \xmark & \xmark & \xmark
 & 3.90e-06 & \textbf{4.57e+04} & 7.02e+00 & 1.15e-01 & 1.53e-04 \\
 & \hspace{0.1em}fRKL-c & \textcolor{green!60!black}{\cmark}
 & 4.80e-06 & 2.95e+04 & 1.59e-02 & 1.82e-02 & 5.38e-07
 & 2.22e-03 & 6.12e+01 & 5.18e-03 & 8.61e-03 & 8.46e-05
 & 4.21e-03 & 4.89e+01 & 5.98e-03 & 6.91e-03 & 1.22e-04 \\
 & \hspace{0.1em}fFKL-c & \textcolor{green!60!black}{\cmark}
 & 4.80e-06 & 1.41e+05 & 6.65e-03 & 8.31e-03 & 4.71e-08
 & 2.22e-03 & 1.89e+02 & 5.52e-03 & 4.91e-03 & 2.93e-05
 & 4.21e-03 & 1.20e+02 & 4.90e-03 & 4.41e-03 & 4.09e-05 \\

\multicolumn{3}{l|}{\emph{Baseline}} & \multicolumn{5}{c|}{} & \multicolumn{5}{c|}{} & \multicolumn{5}{c}{} \\
 & \hspace{0.1em}AMS &
 & 5.47e-06 & 6.39e-01 & 1.31e-01 & 1.34e+00 & 2.06e-01
 & 2.24e-03 & 5.47e-01 & 6.75e-03 & 7.41e-02 & 1.23e-02
 & 4.23e-03 & 3.73e-01 & 4.29e-03 & 6.77e-02 & 1.15e-02 \\
 & \hspace{0.1em}Subset &
 & 3.41e-06 & 1.48e+00 & 3.41e-01 & 1.45e+00 & 2.31e-01
 & 2.06e-03 & 7.81e-01 & 7.79e-02 & 8.42e-02 & 9.98e-02
 & 3.80e-03 & 7.93e-01 & 1.02e-01 & 5.41e-02 & 1.28e-01 \\

\multicolumn{3}{l|}{\emph{Our: CFI-based}} & \multicolumn{5}{c|}{} & \multicolumn{5}{c|}{} & \multicolumn{5}{c}{} \\
 & \hspace{0.1em}CFI-c & \textcolor{green!60!black}{\cmark}
 & 4.82e-06 & 9.85e+04 & 7.58e-03 & 9.87e-03 & 7.69e-08
 & 2.23e-03 & 5.57e+01 & 6.93e-03 & 9.00e-03 & 1.24e-04
 & 4.21e-03 & 5.76e+01 & \emph{3.77e-03} & 6.37e-03 & 6.55e-05 \\
 & \hspace{0.1em}CFI-c+RKL-c & \textcolor{green!60!black}{\cmark}
 & 4.79e-06 & \emph{1.87e+05} & 6.26e-03 & 7.24e-03 & 3.35e-08
 & 2.22e-03 & 1.44e+02 & 5.77e-03 & 5.62e-03 & 4.00e-05
 & \textbf{4.20e-03} & 1.05e+02 & \textbf{1.87e-03} & 4.71e-03 & \textbf{\emph{1.78e-05}} \\
 & \hspace{0.1em}CFI-c+FKL-c & \textcolor{green!60!black}{\cmark}
 & \emph{4.80e-06} & 1.86e+05 & \emph{6.23e-03} & 7.24e-03 & \emph{3.34e-08}
 & \emph{2.22e-03} & \emph{1.93e+02} & \emph{3.39e-03} & \emph{4.85e-03} & \emph{1.76e-05}
 & \emph{4.21e-03} & \emph{1.23e+02} & \emph{3.25e-03} & 4.35e-03 & \emph{2.63e-05} \\
 & \hspace{0.1em}CFI-c+RKL-c+F &
 & \textbf{\emph{4.79e-06}} & \textbf{7.77e+05} & \textbf{\emph{3.55e-03}} & \textbf{\emph{3.95e-03}} & \textbf{\emph{4.57e-09}}
 & \textbf{\emph{2.22e-03}} & \textbf{1.29e+03} & \textbf{\emph{2.87e-03}} & \textbf{1.86e-03} & \textbf{2.22e-06}
 & 4.20e-03 & 7.12e+01 & 3.51e-03 & \emph{4.15e-03} & 4.93e-05 \\
 & \hspace{0.1em}CFI-c+FKL-c+F &
 & \textbf{4.79e-06} & \textbf{\emph{7.51e+05}} & \textbf{3.16e-03} & \textbf{3.92e-03} & \textbf{4.21e-09}
 & \textbf{2.22e-03} & \textbf{\emph{6.54e+02}} & \textbf{2.22e-03} & \textbf{\emph{2.46e-03}} & \textbf{\emph{3.39e-06}}
 & \textbf{\emph{4.20e-03}} & \textbf{\emph{2.07e+02}} & \textbf{\emph{2.80e-03}} & \textbf{3.31e-03} & \textbf{1.35e-05} \\

\multicolumn{3}{c|}{\hspace{-0.9cm}\emph{\#seeds used}}
 & \multicolumn{5}{c|}{4/20}
 & \multicolumn{5}{c|}{20/20}
 & \multicolumn{5}{c}{20/20} \\
\bottomrule
 & \hspace{0.1em}Ref. results${}^{(*)}$ &
 & -- & -- & 1.10e-01 & -- & --
 & 2.23e-03 & -- & -- & 1.10e-01 & --
 & 4.20e-03 & -- & -- & 6.00e-02 & -- \\
\bottomrule
\end{tabular}
}
}
\end{table*}

%% file: tables/table_k16_g5_g6_g7_v2.tex
\begin{table*}[t!]
\centering
\scriptsize
\caption{Estimating $\Prob(g_i(\bfX)>\gamma), i=5,6,7$, with $K=16$ and $R=10^5$.\\
Ref. results${}^{(*)}$: results from \cite{gao2023rare} for $g_5,g_7$ and \cite{papaioannou2016sequential} for $g_6$.}
\label{tab:res_k16_g5g6g7}
\setlength{\tabcolsep}{2pt}
\rowcolors{3}{gray!11}{white}
\makebox[\textwidth][c]{%
\resizebox{1.15\textwidth}{!}{%
\begin{tabular}{c l c |
*{5}{r}
*{5}{r} |
*{5}{r}}
\toprule
 & & & \multicolumn{5}{c}{$g_5$ \cite{gao2023rare}} & \multicolumn{5}{c}{$g_6$ \cite{papaioannou2016sequential}} & \multicolumn{5}{c}{$g_7$ \cite{gao2023rare}} \\
 & \multicolumn{1}{r}{$I_{\rm true}=$} & & \multicolumn{5}{c}{3.1500e-05} & \multicolumn{5}{c}{2.3262e-04} & \multicolumn{5}{c}{6.00e-05} \\
\midrule
 & \multicolumn{1}{r}{} & \multicolumn{1}{c|}{\textbf{FG}}
& \multicolumn{1}{c}{$\hat{I}_R$}
& \multicolumn{1}{c}{Var red.}
& \multicolumn{1}{c}{log-err}
& \multicolumn{1}{c}{CoV}
& \multicolumn{1}{c|}{$\frac{\textrm{log-err}}{\textrm{Var red.}}$}
& \multicolumn{1}{c}{$\hat{I}_R$}
& \multicolumn{1}{c}{Var red.}
& \multicolumn{1}{c}{log-err}
& \multicolumn{1}{c}{CoV}
& \multicolumn{1}{c|}{$\frac{\textrm{log-err}}{\textrm{Var red.}}$}
& \multicolumn{1}{c}{$\hat{I}_R$}
& \multicolumn{1}{c}{Var red.}
& \multicolumn{1}{c}{log-err}
& \multicolumn{1}{c}{CoV}
& \multicolumn{1}{c}{$\frac{\textrm{log-err}}{\textrm{Var red.}}$}\\
\midrule
 & \hspace{0.1em}CMC &
 & 3.30e-05 & 1 & 4.59e-01 & 1.17e-01 & 4.59e-01
 & 2.27e-04 & 1 & 1.67e-01 & 1.80e-01 & 1.67e-01
 & 5.00e-05 & 1 & 3.35e-01 & 1.65e-02 & 3.35e-01 \\

\multicolumn{3}{l|}{\emph{Ablation}} & \multicolumn{5}{c|}{} & \multicolumn{5}{c|}{} & \multicolumn{5}{c}{} \\
 & \hspace{0.1em}fRKL-c & \textcolor{green!60!black}{\cmark} & \emph{3.39e-05} & \textbf{\emph{2.97e+02}} & \emph{7.31e-02} & \textbf{\emph{3.19e-02}} & \textbf{\emph{2.46e-04}}
 & \textbf{2.32e-04} & \emph{1.14e+03} & \textbf{5.90e-03} & \textbf{\emph{6.08e-03}} & \emph{5.20e-06}
 & 5.69e-05 & \emph{1.96e+03} & 5.26e-02 & \emph{8.88e-03} & \emph{2.69e-05} \\
 & \hspace{0.1em}fFKL-c & \textcolor{green!60!black}{\cmark} & 3.39e-05 & \emph{2.82e+02} & 7.32e-02 & \emph{3.27e-02} & \emph{2.60e-04}
 & 3.90e-05 & \textbf{3.82e+08} & 8.22e+00 & 6.36e-01 & \textbf{2.15e-08}
 & 1.19e-04 & 4.66e+01 & 1.60e+00 & 5.96e-01 & 3.42e-02 \\
 & \hspace{0.1em}CFI-u &
 & 5.24e-07 & 1.96e+02 & 5.33e+00 & 4.54e-01 & 2.72e-02
 & 0 & \xmark & \xmark & \xmark & \xmark
 & 0 & \xmark & \xmark & \xmark & \xmark \\

\multicolumn{3}{l|}{\emph{Baseline}} & \multicolumn{5}{c|}{} & \multicolumn{5}{c|}{} & \multicolumn{5}{c}{} \\
 & \hspace{0.1em}AMS &
 & 2.75e-05 & 9.26e-01 & 1.34e-01 & 9.51e-01 & 1.45e-01
 & 2.30e-04 & 6.96e-02 & 9.20e-03 & 6.61e-01 & 1.32e-01
 & 6.48e-05 & 1.50e-01 & 7.73e-02 & 8.15e-01 & 5.14e-01 \\
 & \hspace{0.1em}Subset &
 & 3.71e-05 & 7.07e-01 & 1.63e-01 & 5.93e-01 & 2.31e-01
 & 2.14e-04 & 1.78e-01 & 8.41e-02 & 3.92e-01 & 4.73e-01
 & \textbf{6.22e-05} & 5.44e-01 & \textbf{3.57e-02} & 4.93e-01 & 6.56e-02 \\

\multicolumn{3}{l|}{\emph{Our: CFI-based}} & \multicolumn{5}{c|}{} & \multicolumn{5}{c|}{} & \multicolumn{5}{c}{} \\
 & \hspace{0.1em}CFI-c & \textcolor{green!60!black}{\cmark} & 3.27e-05 & 4.24e+01 & 8.07e-02 & 8.89e-02 & 1.90e-03
 & 2.34e-04 & 8.33e+00 & 5.09e-02 & 5.81e-02 & 6.11e-03
 & \textbf{\emph{5.72e-05}} & 2.28e+02 & \textbf{\emph{5.00e-02}} & 2.42e-02 & 2.19e-04 \\
 & \hspace{0.1em}CFI-c+RKL-c & \textcolor{green!60!black}{\cmark} & \textbf{\emph{3.39e-05}} & 1.49e+02 & \textbf{\emph{7.28e-02}} & 4.30e-02 & 4.88e-04
 & \emph{2.32e-04} & 1.12e+03 & \emph{7.02e-03} & \emph{6.12e-03} & 6.26e-06
 & 5.68e-05 & \textbf{\emph{3.35e+03}} & 5.44e-02 & \textbf{\emph{6.80e-03}} & \textbf{\emph{1.62e-05}} \\
 & \hspace{0.1em}CFI-c+FKL-c & \textcolor{green!60!black}{\cmark} & 3.40e-05 & 7.60e+00 & 1.48e-01 & 1.80e-01 & 1.94e-02
 & 2.34e-04 & 4.14e-01 & 1.77e-01 & 1.88e-01 & 4.29e-01
 & \emph{5.70e-05} & 3.26e+02 & \emph{5.09e-02} & 2.14e-02 & 1.56e-04 \\
 & \hspace{0.1em}CFI-c+RKL-c+F &
 & \textbf{3.36e-05} & \textbf{3.15e+02} & \textbf{6.45e-02} & \textbf{2.99e-02} & \textbf{2.05e-04}
 & \textbf{\emph{2.33e-04}} & \textbf{\emph{1.42e+03}} & \textbf{\emph{5.94e-03}} & \textbf{5.43e-03} & \textbf{\emph{4.19e-06}}
 & 5.68e-05 & \textbf{3.97e+03} & 5.49e-02 & \textbf{6.24e-03} & \textbf{1.38e-05} \\
 & \hspace{0.1em}CFI-c+FKL-c+F &
 & 3.22e-05 & 1.32e+01 & 9.17e-02 & 1.17e-01 & 6.96e-03
 & 2.28e-04 & 5.48e-01 & 1.79e-01 & 1.80e-01 & 3.26e-01
 & 5.65e-05 & 5.72e+02 & 6.11e-02 & 1.65e-02 & 1.07e-04 \\

\multicolumn{3}{c|}{\hspace{-0.9cm}\emph{\#seeds used}}
 & \multicolumn{5}{c|}{19/20}
 & \multicolumn{5}{c|}{20/20}
 & \multicolumn{5}{c}{20/20} \\
\bottomrule
 & \hspace{0.1em}Ref. results${}^{(*)}$ &
 & -- & -- & 3.80e-01 & -- & --
 & 2.33e-04 & -- & -- & 2.80e-01 & --
 & -- & -- & 6.20e-01 & -- & -- \\
\bottomrule
\end{tabular}%
}%
}
\end{table*}

%% file: tables/table_k16_s10_v2.tex
\begin{table*}[t!]
\centering
\scriptsize
\caption{Estimating $\Prob(g_{10}(\bfX)>\gamma)$ for different thresholds, with $K=16$ and $R=10^5$. \\
CEVE${}^{(*)}$: results from \cite{asmussen2011efficient}.}
\label{tab:res_k16_s10}
\setlength{\tabcolsep}{2pt}
\rowcolors{3}{gray!11}{white}
\makebox[\textwidth][c]{%
\resizebox{1.15\textwidth}{!}{%
\begin{tabular}{c l c |
*{5}{r}
*{5}{r} |
*{5}{r}}
\toprule
 & & & \multicolumn{5}{c}{$\gamma=20{,}000$} & \multicolumn{5}{c}{$\gamma=40{,}000$} & \multicolumn{5}{c}{$\gamma=500{,}000$} \\
 & \multicolumn{1}{r}{$I_{\rm true}=$} & & \multicolumn{5}{c}{1.03526e-03} & \multicolumn{5}{c}{4.683e-04} & \multicolumn{5}{c}{1.77400e-05} \\
\midrule
 & \multicolumn{1}{r}{} & \multicolumn{1}{c|}{\textbf{FG}}
& \multicolumn{1}{c}{$\hat{I}_R$}
& \multicolumn{1}{c}{Var red.}
& \multicolumn{1}{c}{log-err}
& \multicolumn{1}{c}{CoV}
& \multicolumn{1}{c|}{$\frac{\textrm{log-err}}{\textrm{Var red.}}$}
& \multicolumn{1}{c}{$\hat{I}_R$}
& \multicolumn{1}{c}{Var red.}
& \multicolumn{1}{c}{log-err}
& \multicolumn{1}{c}{CoV}
& \multicolumn{1}{c|}{$\frac{\textrm{log-err}}{\textrm{Var red.}}$}
& \multicolumn{1}{c}{$\hat{I}_R$}
& \multicolumn{1}{c}{Var red.}
& \multicolumn{1}{c}{log-err}
& \multicolumn{1}{c}{CoV}
& \multicolumn{1}{c}{$\frac{\textrm{log-err}}{\textrm{Var red.}}$}\\
\midrule
 & \hspace{0.1em}CMC &
 & 1.03e-03 & 1 & 7.22e-02 & \emph{6.22e-03} & 7.22e-02
 & 4.69e-04 & 1 & 1.12e-01 & \emph{6.22e-03} & 1.12e-01
 & 2.30e-05 & 1 & 4.28e-01 & \textbf{\emph{1.68e-02}} & 4.28e-01 \\

\multicolumn{3}{l|}{\emph{Ablation}} & \multicolumn{5}{c|}{} & \multicolumn{5}{c|}{} & \multicolumn{5}{c}{} \\
 & \hspace{0.1em}CFI-u &
 & 0 & \xmark & \xmark & \xmark & \xmark
 & 0 & \xmark & \xmark & \xmark & \xmark
 & 0 & \xmark & \xmark & \xmark & \xmark \\
 & \hspace{0.1em}fRKL-c & \textcolor{green!60!black}{\cmark} & 1.03e-03 & 2.76e+01 & 1.43e-02 & 1.88e-02 & 5.18e-04
 & 4.61e-04 & 4.89e+01 & 2.27e-02 & 2.12e-02 & 4.64e-04
 & 1.78e-05 & \textbf{\emph{1.25e+03}} & 2.94e-02 & 2.53e-02 & \emph{2.34e-05} \\
 & \hspace{0.1em}fFKL-c & \textcolor{green!60!black}{\cmark} & \textbf{1.02e-03} & \textbf{3.02e+02} & \textbf{1.09e-02} & \textbf{5.70e-03} & \textbf{3.62e-05}
 & \textbf{\emph{4.63e-04}} & \textbf{6.07e+02} & \textbf{\emph{1.24e-02}} & \textbf{6.01e-03} & \textbf{2.04e-05}
 & \textbf{1.79e-05} & \textbf{1.30e+04} & \textbf{1.19e-02} & \textbf{7.81e-03} & \textbf{9.15e-07} \\

\multicolumn{3}{l|}{\emph{Baseline}} & \multicolumn{5}{c|}{} & \multicolumn{5}{c|}{} & \multicolumn{5}{c}{} \\
 & \hspace{0.1em}AMS &
 & \emph{1.02e-03} & 1.72e-01 & \emph{1.17e-02} & 2.02e-01 & 6.82e-02
 & \textbf{4.71e-04} & 2.49e-01 & \textbf{5.48e-03} & 3.31e-01 & 2.20e-02
 & 1.70e-05 & 8.09e-01 & 4.15e-02 & 1.00e+00 & 5.12e-02 \\
 & \hspace{0.1em}Subset &
 & 9.72e-04 & 9.51e-01 & 6.29e-02 & 1.08e-01 & 6.62e-02
 & 4.58e-04 & 1.78e+00 & 2.33e-02 & 1.21e-01 & 1.31e-02
 & 1.48e-05 & 2.96e+00 & 1.79e-01 & 5.47e-01 & 6.06e-02 \\

\multicolumn{3}{l|}{\emph{Our: CFI-based}} & \multicolumn{5}{c|}{} & \multicolumn{5}{c|}{} & \multicolumn{5}{c}{} \\
 & \hspace{0.1em}CFI-c & \textcolor{green!60!black}{\cmark} & 1.02e-03 & 5.28e+01 & 1.26e-02 & 1.33e-02 & 2.39e-04
 & 4.65e-04 & 8.76e+01 & 1.35e-02 & 1.37e-02 & 1.54e-04
 & 1.79e-05 & 2.16e+02 & 3.04e-02 & 3.25e-02 & 1.41e-04 \\
 & \hspace{0.1em}CFI-c+RKL-c & \textcolor{green!60!black}{\cmark} & 1.02e-03 & 7.63e+00 & 2.87e-02 & 3.12e-02 & 3.77e-03
 & 4.66e-04 & 5.33e+01 & 1.39e-02 & 1.98e-02 & 2.61e-04
 & 1.86e-05 & 6.26e+01 & 6.11e-02 & 6.36e-02 & 9.76e-04 \\
 & \hspace{0.1em}CFI-c+FKL-c & \textcolor{green!60!black}{\cmark} & 1.02e-03 & \emph{1.66e+02} & 1.24e-02 & 7.46e-03 & 7.47e-05
 & \emph{4.63e-04} & \textbf{\emph{4.68e+02}} & 1.14e-02 & 6.80e-03 & \textbf{\emph{2.44e-05}}
 & \textbf{\emph{1.78e-05}} & 1.19e+03 & \textbf{\emph{1.31e-02}} & \emph{1.80e-02} & \textbf{\emph{1.10e-05}} \\
 & \hspace{0.1em}CFI-c+RKL-c+F &
 & 1.01e-03 & 1.79e+01 & 2.52e-02 & 2.26e-02 & 1.41e-03
 & 4.59e-04 & 1.39e+02 & 2.25e-02 & \emph{1.14e-02} & 1.62e-04
 & 1.76e-05 & 1.15e+02 & 5.15e-02 & 3.22e-02 & 4.48e-04 \\
 & \hspace{0.1em}CFI-c+FKL-c+F &
 & \textbf{\emph{1.02e-03}} & \textbf{\emph{1.90e+02}} & \textbf{\emph{1.16e-02}} & \textbf{\emph{6.22e-03}} & \textbf{\emph{6.10e-05}}
 & 4.61e-04 & \emph{4.02e+02} & 1.61e-02 & \textbf{\emph{6.22e-03}} & \emph{3.99e-05}
 & \emph{1.78e-05} & \emph{3.69e+03} & \emph{1.46e-02} & \textbf{1.68e-02} & \textbf{3.96e-06} \\

\multicolumn{3}{c|}{\hspace{-0.9cm}\emph{\#seeds used}}
 & \multicolumn{5}{c|}{20/20}
 & \multicolumn{5}{c|}{20/20}
 & \multicolumn{5}{c}{18/20} \\
\bottomrule
 & \hspace{0.1em}CEVE${}^{(*)}$ &
 & 1.02e-03 & -- & -- & 1.28e-01 & --
 & 4.63e-04 & -- & -- & 8.60e-02 & --
 & 1.79e-05 & -- & -- & 4.46e-02 & -- \\
\bottomrule
\end{tabular}%
}%
}
\end{table*}

%% file: tables/table_new_k6_g1_g2_g3_avg_full_v2.tex
\begin{table*}[t!]
\centering
\scriptsize
\caption{Estimating  $\Prob(g_i(\bfX)>\gamma), i=1,2,3, \bfX\sim\mathcal{N}(\bfmu^{(A)},\bfSigma^{(A)})$, $K=6$,
$R=10^5$,  $d=2$. \\
Ref. results${}^{(*)}$: results from \cite{gao2023rare} for $g_1$ and \cite{papaioannou2016sequential} for $g_2,g_3$.}
\label{tab:res_k6_g1g2g3_full}
\setlength{\tabcolsep}{2pt}
\rowcolors{3}{gray!11}{white}
\makebox[\textwidth][c]{%
\resizebox{1.15\textwidth}{!}{%
\begin{tabular}{c l c |
*{5}{S[scientific-notation=true,table-format=1.3e2,table-number-alignment=right,table-text-alignment=right]}
*{5}{S[scientific-notation=true,table-format=1.3e2,table-number-alignment=right,table-text-alignment=right]} |
*{5}{S[scientific-notation=true,table-format=1.3e2,table-number-alignment=right,table-text-alignment=right]}}
\toprule
 & & & \multicolumn{5}{c}{$g_1$ \cite{gao2023rare}} & \multicolumn{5}{c}{$g_2$ \cite{papaioannou2016sequential}} & \multicolumn{5}{c}{$g_3$ \cite{papaioannou2016sequential}} \\
 & \multicolumn{1}{r}{$I_{\rm true}=$} &  & \multicolumn{5}{c}{4.7934e-06} & \multicolumn{5}{c}{2.22668e-03} & \multicolumn{5}{c}{4.2073e-03} \\
\midrule
 & \multicolumn{1}{r}{} & \multicolumn{1}{c|}{\textbf{FG}}
& \multicolumn{1}{c}{$\hat{I}_R$}
& \multicolumn{1}{c}{Var red.}
& \multicolumn{1}{c}{log-err}
& \multicolumn{1}{c}{CoV}
& \multicolumn{1}{c|}{$\frac{\textrm{log-err}}{\textrm{Var red.}}$}
& \multicolumn{1}{c}{$\hat{I}_R$}
& \multicolumn{1}{c}{Var red.}
& \multicolumn{1}{c}{log-err}
& \multicolumn{1}{c}{CoV}
& \multicolumn{1}{c|}{$\frac{\textrm{log-err}}{\textrm{Var red.}}$}
& \multicolumn{1}{c}{$\hat{I}_R$}
& \multicolumn{1}{c}{Var red.}
& \multicolumn{1}{c}{log-err}
& \multicolumn{1}{c}{CoV}
& \multicolumn{1}{c}{$\frac{\textrm{log-err}}{\textrm{Var red.}}$}\\

\midrule

 & \hspace{0.1em}CMC  && \multicolumn{1}{r}{4.50e-06} & \multicolumn{1}{r}{1} & \multicolumn{1}{r}{1.43e+00} & \multicolumn{1}{r}{3.92e-03} & \multicolumn{1}{r|}{1.43e+00} & \multicolumn{1}{r}{2.25e-03} & \multicolumn{1}{r}{1} & \multicolumn{1}{r}{4.14e-02} & \multicolumn{1}{r}{2.46e-03} & \multicolumn{1}{r|}{4.14e-02} & \multicolumn{1}{r}{4.15e-03} & \multicolumn{1}{r}{1} & \multicolumn{1}{r}{4.25e-02} & \multicolumn{1}{r}{3.31e-03} & \multicolumn{1}{r}{4.25e-02} \\

  \multicolumn{3}{l|}{\emph{Baselines (no CFI)}} & \multicolumn{5}{c|}{} & \multicolumn{5}{c|}{} & \multicolumn{5}{c}{} \\

 & \hspace{0.1em}fRKL-u  && \multicolumn{1}{r}{4.74e-06} & \multicolumn{1}{r}{1.96e+03} & \multicolumn{1}{r}{5.56e-02} & \multicolumn{1}{r}{6.95e-02} & \multicolumn{1}{r|}{2.83e-05} & \multicolumn{1}{r}{2.22e-03} & \multicolumn{1}{r}{6.43e+01} & \multicolumn{1}{r}{7.68e-03} & \multicolumn{1}{r}{8.40e-03} & \multicolumn{1}{r|}{1.19e-04} & \multicolumn{1}{r}{4.21e-03} & \multicolumn{1}{r}{4.30e+01} & \multicolumn{1}{r}{5.73e-03} & \multicolumn{1}{r}{7.02e-03} & \multicolumn{1}{r}{1.33e-04} \\
 & \hspace{0.1em}fFKL-u  && \multicolumn{1}{r}{\emph{4.80e-06}} & \multicolumn{1}{r}{\emph{3.15e+05}} & \multicolumn{1}{r}{\emph{5.65e-03}} & \multicolumn{1}{r}{\emph{5.56e-03}} & \multicolumn{1}{r|}{\emph{1.79e-08}} & \multicolumn{1}{r}{2.21e-03} & \multicolumn{1}{r}{2.69e+02} & \multicolumn{1}{r}{8.05e-03} & \multicolumn{1}{r}{4.10e-03} & \multicolumn{1}{r|}{3.00e-05} & \multicolumn{1}{r}{\textbf{4.21e-03}} & \multicolumn{1}{r}{\textbf{5.07e+02}} & \multicolumn{1}{r}{\textbf{1.38e-03}} & \multicolumn{1}{r}{\textbf{2.14e-03}} & \multicolumn{1}{r}{\textbf{2.72e-06}} \\
 \multicolumn{3}{l|}{\emph{Ablation}} & \multicolumn{5}{c|}{} & \multicolumn{5}{c|}{} & \multicolumn{5}{c}{} \\
 & \hspace{0.1em}CFI-u  && \multicolumn{1}{r}{0} & \multicolumn{1}{r}{\xmark} & \multicolumn{1}{r}{\xmark} & \multicolumn{1}{r}{\xmark} & \multicolumn{1}{r|}{\xmark} & \multicolumn{1}{r}{0} & \multicolumn{1}{r}{\xmark} & \multicolumn{1}{r}{\xmark} & \multicolumn{1}{r}{\xmark} & \multicolumn{1}{r|}{\xmark} & \multicolumn{1}{r}{3.90e-06} & \multicolumn{1}{r}{4.57e+04} & \multicolumn{1}{r}{7.02e+00} & \multicolumn{1}{r}{1.15e-01} & \multicolumn{1}{r}{1.53e-04} \\
 & \hspace{0.1em}fRKL-c & \textcolor{green!60!black}{\cmark} & \multicolumn{1}{r}{4.80e-06} & \multicolumn{1}{r}{2.95e+04} & \multicolumn{1}{r}{1.59e-02} & \multicolumn{1}{r}{1.82e-02} & \multicolumn{1}{r|}{5.38e-07} & \multicolumn{1}{r}{2.22e-03} & \multicolumn{1}{r}{6.12e+01} & \multicolumn{1}{r}{5.18e-03} & \multicolumn{1}{r}{8.61e-03} & \multicolumn{1}{r|}{8.46e-05} & \multicolumn{1}{r}{4.21e-03} & \multicolumn{1}{r}{4.89e+01} & \multicolumn{1}{r}{5.98e-03} & \multicolumn{1}{r}{6.91e-03} & \multicolumn{1}{r}{1.22e-04} \\
 & \hspace{0.1em}fFKL-c & \textcolor{green!60!black}{\cmark} & \multicolumn{1}{r}{4.80e-06} & \multicolumn{1}{r}{1.41e+05} & \multicolumn{1}{r}{6.65e-03} & \multicolumn{1}{r}{8.31e-03} & \multicolumn{1}{r|}{4.71e-08} & \multicolumn{1}{r}{2.22e-03} & \multicolumn{1}{r}{1.89e+02} & \multicolumn{1}{r}{5.52e-03} & \multicolumn{1}{r}{4.91e-03} & \multicolumn{1}{r|}{2.93e-05} & \multicolumn{1}{r}{4.21e-03} & \multicolumn{1}{r}{1.20e+02} & \multicolumn{1}{r}{4.90e-03} & \multicolumn{1}{r}{4.41e-03} & \multicolumn{1}{r}{4.09e-05} \\

\multicolumn{3}{l|}{\emph{Our: CFI-based}} & \multicolumn{5}{c|}{} & \multicolumn{5}{c|}{} & \multicolumn{5}{c}{} \\
 & \hspace{0.1em}CFI-c & \textcolor{green!60!black}{\cmark} & \multicolumn{1}{r}{4.82e-06} & \multicolumn{1}{r}{9.85e+04} & \multicolumn{1}{r}{7.58e-03} & \multicolumn{1}{r}{9.87e-03} & \multicolumn{1}{r|}{7.69e-08} & \multicolumn{1}{r}{2.23e-03} & \multicolumn{1}{r}{5.57e+01} & \multicolumn{1}{r}{6.93e-03} & \multicolumn{1}{r}{9.00e-03} & \multicolumn{1}{r|}{1.24e-04} & \multicolumn{1}{r}{4.21e-03} & \multicolumn{1}{r}{5.76e+01} & \multicolumn{1}{r}{3.77e-03} & \multicolumn{1}{r}{6.37e-03} & \multicolumn{1}{r}{6.55e-05} \\
 & \hspace{0.1em}CFI-c+RKL-c & \textcolor{green!60!black}{\cmark} & \multicolumn{1}{r}{4.79e-06} & \multicolumn{1}{r}{1.87e+05} & \multicolumn{1}{r}{6.26e-03} & \multicolumn{1}{r}{7.24e-03} & \multicolumn{1}{r|}{3.35e-08} & \multicolumn{1}{r}{2.22e-03} & \multicolumn{1}{r}{1.44e+02} & \multicolumn{1}{r}{5.77e-03} & \multicolumn{1}{r}{5.62e-03} & \multicolumn{1}{r|}{4.00e-05} & \multicolumn{1}{r}{\emph{4.20e-03}} & \multicolumn{1}{r}{\emph{1.05e+02}} & \multicolumn{1}{r}{\emph{1.87e-03}} & \multicolumn{1}{r}{\emph{4.71e-03}} & \multicolumn{1}{r}{\emph{1.78e-05}} \\
 & \hspace{0.1em}CFI-c+FKL-c & \textcolor{green!60!black}{\cmark} & \multicolumn{1}{r}{4.80e-06} & \multicolumn{1}{r}{1.86e+05} & \multicolumn{1}{r}{6.23e-03} & \multicolumn{1}{r}{7.24e-03} & \multicolumn{1}{r|}{3.34e-08} & \multicolumn{1}{r}{\emph{2.22e-03}} & \multicolumn{1}{r}{\emph{1.93e+02}} & \multicolumn{1}{r}{\emph{3.39e-03}} & \multicolumn{1}{r}{\emph{4.85e-03}} & \multicolumn{1}{r|}{\emph{1.76e-05}} & \multicolumn{1}{r}{4.21e-03} & \multicolumn{1}{r}{1.23e+02} & \multicolumn{1}{r}{3.25e-03} & \multicolumn{1}{r}{4.35e-03} & \multicolumn{1}{r}{2.63e-05} \\
 & \hspace{0.1em}CFI-c+RKL-c+F  && \multicolumn{1}{r}{\textbf{\emph{4.79e-06}}} & \multicolumn{1}{r}{\textbf{\emph{7.77e+05}}} & \multicolumn{1}{r}{\textbf{\emph{3.55e-03}}} & \multicolumn{1}{r}{\textbf{\emph{3.95e-03}}} & \multicolumn{1}{r|}{\textbf{\emph{4.57e-09}}} & \multicolumn{1}{r}{\textbf{2.22e-03}} & \multicolumn{1}{r}{\textbf{1.29e+03}} & \multicolumn{1}{r}{\textbf{2.87e-03}} & \multicolumn{1}{r}{\textbf{1.86e-03}} & \multicolumn{1}{r|}{\textbf{2.22e-06}} & \multicolumn{1}{r}{4.20e-03} & \multicolumn{1}{r}{7.12e+01} & \multicolumn{1}{r}{3.51e-03} & \multicolumn{1}{r}{4.15e-03} & \multicolumn{1}{r}{4.93e-05} \\
 & \hspace{0.1em}CFI-c+FKL-c+F  && \multicolumn{1}{r}{\textbf{4.79e-06}} & \multicolumn{1}{r}{\textbf{7.51e+05}} & \multicolumn{1}{r}{\textbf{3.16e-03}} & \multicolumn{1}{r}{\textbf{3.92e-03}} & \multicolumn{1}{r|}{\textbf{4.21e-09}} & \multicolumn{1}{r}{\textbf{\emph{2.22e-03}}} & \multicolumn{1}{r}{\textbf{\emph{6.54e+02}}} & \multicolumn{1}{r}{\textbf{\emph{2.22e-03}}} & \multicolumn{1}{r}{\textbf{\emph{2.46e-03}}} & \multicolumn{1}{r|}{\textbf{\emph{3.39e-06}}} & \multicolumn{1}{r}{\textbf{\emph{4.20e-03}}} & \multicolumn{1}{r}{\textbf{\emph{2.07e+02}}} & \multicolumn{1}{r}{\textbf{\emph{2.80e-03}}} & \multicolumn{1}{r}{\textbf{\emph{3.31e-03}}} & \multicolumn{1}{r}{\textbf{\emph{1.35e-05}}} \\
\multicolumn{3}{c|}{\hspace{-0.2cm}\emph{\#seeds used}} & \multicolumn{5}{c|}{4/20} & \multicolumn{5}{c|}{20/20} & \multicolumn{5}{c}{20/20} \\

 \midrule
  & \hspace{0.1em}Ref. results${}^{(*)}$   &&
    \multicolumn{1}{r}{--} & \multicolumn{1}{r}{--}  & \multicolumn{1}{r}{1.10e-01} &
    \multicolumn{1}{r}{--} & \multicolumn{1}{r|}{--} &
    \multicolumn{1}{r}{2.23e-03}  & \multicolumn{1}{r}{--} & \multicolumn{1}{r}{--} & \multicolumn{1}{r}{1.10e-01} &\multicolumn{1}{r|}{--}&
    \multicolumn{1}{r}{4.20e-03} & \multicolumn{1}{r}{--} & \multicolumn{1}{r}{--} & \multicolumn{1}{r}{6.00e-02} & \multicolumn{1}{r}{--}\\
   \bottomrule
\end{tabular}%
}%
}
\end{table*}

%% file: tables/table_new_k6_g5_g6_g7_avg_full_v2.tex
\begin{table*}[t!]
\centering
\scriptsize
\caption{Estimating $\Prob(g_i(\bfX)>\gamma), i=5,6,7, \bfX\sim\mathcal{N}(\bfmu^{(A)},\bfSigma^{(A)})$, $K=6$,
$R=10^5$.\\
Ref. results${}^{(*)}$: results from \cite{gao2023rare} for $g_5,g_7$ and \cite{papaioannou2016sequential} for $g_6$.
}

\label{tab:res_k6_g5g6g7_full}
\setlength{\tabcolsep}{2pt}
\rowcolors{3}{gray!11}{white}
\makebox[\textwidth][c]{%
\resizebox{1.15\textwidth}{!}{%
\begin{tabular}{c l c |
*{5}{S[scientific-notation=true,table-format=1.3e2,table-number-alignment=right,table-text-alignment=right]}
*{5}{S[scientific-notation=true,table-format=1.3e2,table-number-alignment=right,table-text-alignment=right]} |
*{5}{S[scientific-notation=true,table-format=1.3e2,table-number-alignment=right,table-text-alignment=right]}}
\toprule
 & & & \multicolumn{5}{c}{$g_5$ \cite{gao2023rare}} & \multicolumn{5}{c}{$g_6$ \cite{papaioannou2016sequential}} & \multicolumn{5}{c}{$g_7$ (ResNet) \cite{gao2023rare}} \\
 & \multicolumn{1}{r}{dimension $d=$} & & \multicolumn{5}{c}{40} & \multicolumn{5}{c}{100} & \multicolumn{5}{c}{62} \\
 & \multicolumn{1}{r}{$I_{\rm true}=$} &  & \multicolumn{5}{c}{3.1500e-05} & \multicolumn{5}{c}{2.3262e-04} & \multicolumn{5}{c}{6.00e-05} \\
\midrule
 & \multicolumn{1}{r}{} & \multicolumn{1}{c|}{\textbf{FG}}
& \multicolumn{1}{c}{$\hat{I}_R$}
& \multicolumn{1}{c}{Var red.}
& \multicolumn{1}{c}{log-err}
& \multicolumn{1}{c}{CoV}
& \multicolumn{1}{c|}{$\frac{\textrm{log-err}}{\textrm{Var red.}}$}
& \multicolumn{1}{c}{$\hat{I}_R$}
& \multicolumn{1}{c}{Var red.}
& \multicolumn{1}{c}{log-err}
& \multicolumn{1}{c}{CoV}
& \multicolumn{1}{c|}{$\frac{\textrm{log-err}}{\textrm{Var red.}}$}
& \multicolumn{1}{c}{$\hat{I}_R$}
& \multicolumn{1}{c}{Var red.}
& \multicolumn{1}{c}{log-err}
& \multicolumn{1}{c}{CoV}
& \multicolumn{1}{c}{$\frac{\textrm{log-err}}{\textrm{Var red.}}$}\\

 \midrule

& \hspace{0.1em}CMC  && \multicolumn{1}{r}{3.30e-05} & \multicolumn{1}{r}{1} & \multicolumn{1}{r}{4.59e-01} & \multicolumn{1}{r}{1.17e-01} & \multicolumn{1}{r|}{4.59e-01} & \multicolumn{1}{r}{2.27e-04} & \multicolumn{1}{r}{1} & \multicolumn{1}{r}{1.67e-01} & \multicolumn{1}{r}{1.80e-01} & \multicolumn{1}{r|}{1.67e-01} & \multicolumn{1}{r}{5.00e-05} & \multicolumn{1}{r}{1} & \multicolumn{1}{r}{3.35e-01} & \multicolumn{1}{r}{1.65e-02} & \multicolumn{1}{r}{3.35e-01} \\

\multicolumn{3}{l|}{\emph{Baselines (no CFI)}} & \multicolumn{5}{c|}{} & \multicolumn{5}{c|}{} & \multicolumn{5}{c}{} \\

 & \hspace{0.1em}fRKL-u  && \multicolumn{1}{r}{3.63e-05} & \multicolumn{1}{r}{1.25e+00} & \multicolumn{1}{r}{2.31e-01} & \multicolumn{1}{r}{2.41e-01} & \multicolumn{1}{r|}{1.84e-01} & \multicolumn{1}{r}{\textbf{2.33e-04}} & \multicolumn{1}{r}{\textbf{\emph{5.87e+03}}} & \multicolumn{1}{r}{\textbf{2.12e-03}} & \multicolumn{1}{r}{\textbf{2.67e-03}} & \multicolumn{1}{r|}{\textbf{3.61e-07}} & \multicolumn{1}{r}{\emph{5.68e-05}} & \multicolumn{1}{r}{\emph{2.06e+03}} & \multicolumn{1}{r}{\emph{5.53e-02}} & \multicolumn{1}{r}{\emph{8.67e-03}} & \multicolumn{1}{r}{\emph{2.68e-05}} \\
 & \hspace{0.1em}fFKL-u  && \multicolumn{1}{r}{\textbf{3.39e-05}} & \multicolumn{1}{r}{\textbf{4.03e+03}} & \multicolumn{1}{r}{\textbf{7.37e-02}} & \multicolumn{1}{r}{\textbf{8.64e-03}} & \multicolumn{1}{r|}{\textbf{1.83e-05}} & \multicolumn{1}{r}{0} & \multicolumn{1}{r}{\xmark} & \multicolumn{1}{r}{\xmark} & \multicolumn{1}{r}{\xmark} & \multicolumn{1}{r|}{\xmark} & \multicolumn{1}{r}{0} & \multicolumn{1}{r}{\xmark} & \multicolumn{1}{r}{\xmark} & \multicolumn{1}{r}{\xmark} & \multicolumn{1}{r}{\xmark} \\

 \multicolumn{3}{l|}{\emph{Ablation}} & \multicolumn{5}{c|}{} & \multicolumn{5}{c|}{} & \multicolumn{5}{c}{} \\
 & \hspace{0.1em}CFI-u  && \multicolumn{1}{r}{5.24e-07} & \multicolumn{1}{r}{1.96e+02} & \multicolumn{1}{r}{5.33e+00} & \multicolumn{1}{r}{4.54e-01} & \multicolumn{1}{r|}{2.72e-02} & \multicolumn{1}{r}{0} & \multicolumn{1}{r}{\xmark} & \multicolumn{1}{r}{\xmark} & \multicolumn{1}{r}{\xmark} & \multicolumn{1}{r|}{\xmark} & \multicolumn{1}{r}{0} & \multicolumn{1}{r}{\xmark} & \multicolumn{1}{r}{\xmark} & \multicolumn{1}{r}{\xmark} & \multicolumn{1}{r}{\xmark} \\
 & \hspace{0.1em}fRKL-c & \textcolor{green!60!black}{\cmark} & \multicolumn{1}{r}{\emph{3.39e-05}} & \multicolumn{1}{r}{\emph{2.97e+02}} & \multicolumn{1}{r}{\emph{7.31e-02}} & \multicolumn{1}{r}{\emph{3.19e-02}} & \multicolumn{1}{r|}{\emph{2.46e-04}} & \multicolumn{1}{r}{\emph{2.32e-04}} & \multicolumn{1}{r}{1.14e+03} & \multicolumn{1}{r}{\emph{5.90e-03}} & \multicolumn{1}{r}{\emph{6.08e-03}} & \multicolumn{1}{r|}{\emph{5.20e-06}} & \multicolumn{1}{r}{5.69e-05} & \multicolumn{1}{r}{1.96e+03} & \multicolumn{1}{r}{5.26e-02} & \multicolumn{1}{r}{8.88e-03} & \multicolumn{1}{r}{2.69e-05} \\
 & \hspace{0.1em}fFKL-c & \textcolor{green!60!black}{\cmark} & \multicolumn{1}{r}{3.39e-05} & \multicolumn{1}{r}{2.82e+02} & \multicolumn{1}{r}{7.32e-02} & \multicolumn{1}{r}{3.27e-02} & \multicolumn{1}{r|}{2.60e-04} & \multicolumn{1}{r}{7.50e-06} & \multicolumn{1}{r}{\textbf{1.16e+04}} & \multicolumn{1}{r}{4.17e+00} & \multicolumn{1}{r}{5.76e-01} & \multicolumn{1}{r|}{3.59e-04} & \multicolumn{1}{r}{4.81e-05} & \multicolumn{1}{r}{1.70e+01} & \multicolumn{1}{r}{5.33e-01} & \multicolumn{1}{r}{2.54e-01} & \multicolumn{1}{r}{3.13e-02} \\

\multicolumn{3}{l|}{\emph{Our: CFI-based}} & \multicolumn{5}{c|}{} & \multicolumn{5}{c|}{} & \multicolumn{5}{c}{} \\
 & \hspace{0.1em}CFI-c & \textcolor{green!60!black}{\cmark} & \multicolumn{1}{r}{3.27e-05} & \multicolumn{1}{r}{4.24e+01} & \multicolumn{1}{r}{8.07e-02} & \multicolumn{1}{r}{8.89e-02} & \multicolumn{1}{r|}{1.90e-03} & \multicolumn{1}{r}{2.34e-04} & \multicolumn{1}{r}{8.33e+00} & \multicolumn{1}{r}{5.09e-02} & \multicolumn{1}{r}{5.81e-02} & \multicolumn{1}{r|}{6.11e-03} & \multicolumn{1}{r}{5.72e-05} & \multicolumn{1}{r}{2.28e+02} & \multicolumn{1}{r}{5.00e-02} & \multicolumn{1}{r}{2.42e-02} & \multicolumn{1}{r}{2.19e-04} \\
 & \hspace{0.1em}CFI-c+RKL-c & \textcolor{green!60!black}{\cmark} & \multicolumn{1}{r}{3.39e-05} & \multicolumn{1}{r}{1.49e+02} & \multicolumn{1}{r}{7.28e-02} & \multicolumn{1}{r}{4.30e-02} & \multicolumn{1}{r|}{4.88e-04} & \multicolumn{1}{r}{2.32e-04} & \multicolumn{1}{r}{1.12e+03} & \multicolumn{1}{r}{7.02e-03} & \multicolumn{1}{r}{6.12e-03} & \multicolumn{1}{r|}{6.26e-06} & \multicolumn{1}{r}{\textbf{\emph{5.68e-05}}} & \multicolumn{1}{r}{\textbf{\emph{3.35e+03}}} & \multicolumn{1}{r}{\textbf{\emph{5.44e-02}}} & \multicolumn{1}{r}{\textbf{\emph{6.80e-03}}} & \multicolumn{1}{r}{\textbf{\emph{1.62e-05}}} \\
 & \hspace{0.1em}CFI-c+FKL-c & \textcolor{green!60!black}{\cmark} & \multicolumn{1}{r}{3.40e-05} & \multicolumn{1}{r}{7.60e+00} & \multicolumn{1}{r}{1.48e-01} & \multicolumn{1}{r}{1.80e-01} & \multicolumn{1}{r|}{1.94e-02} & \multicolumn{1}{r}{2.34e-04} & \multicolumn{1}{r}{4.14e-01} & \multicolumn{1}{r}{1.77e-01} & \multicolumn{1}{r}{1.88e-01} & \multicolumn{1}{r|}{4.29e-01} & \multicolumn{1}{r}{5.70e-05} & \multicolumn{1}{r}{3.26e+02} & \multicolumn{1}{r}{5.09e-02} & \multicolumn{1}{r}{2.14e-02} & \multicolumn{1}{r}{1.56e-04} \\
 & \hspace{0.1em}CFI-c+RKL-c+F  && \multicolumn{1}{r}{\textbf{\emph{3.36e-05}}} & \multicolumn{1}{r}{\textbf{\emph{3.15e+02}}} & \multicolumn{1}{r}{\textbf{\emph{6.45e-02}}} & \multicolumn{1}{r}{\textbf{\emph{2.99e-02}}} & \multicolumn{1}{r|}{\textbf{\emph{2.05e-04}}} & \multicolumn{1}{r}{\textbf{\emph{2.33e-04}}} & \multicolumn{1}{r}{\emph{1.42e+03}} & \multicolumn{1}{r}{\textbf{\emph{5.94e-03}}} & \multicolumn{1}{r}{\textbf{\emph{5.43e-03}}} & \multicolumn{1}{r|}{\textbf{\emph{4.19e-06}}} & \multicolumn{1}{r}{\textbf{5.68e-05}} & \multicolumn{1}{r}{\textbf{3.97e+03}} & \multicolumn{1}{r}{\textbf{5.49e-02}} & \multicolumn{1}{r}{\textbf{6.24e-03}} & \multicolumn{1}{r}{\textbf{1.38e-05}} \\
 & \hspace{0.1em}CFI-c+FKL-c+F  && \multicolumn{1}{r}{3.22e-05} & \multicolumn{1}{r}{1.32e+01} & \multicolumn{1}{r}{9.17e-02} & \multicolumn{1}{r}{1.17e-01} & \multicolumn{1}{r|}{6.96e-03} & \multicolumn{1}{r}{2.28e-04} & \multicolumn{1}{r}{5.48e-01} & \multicolumn{1}{r}{1.79e-01} & \multicolumn{1}{r}{1.80e-01} & \multicolumn{1}{r|}{3.26e-01} & \multicolumn{1}{r}{5.65e-05} & \multicolumn{1}{r}{5.72e+02} & \multicolumn{1}{r}{6.11e-02} & \multicolumn{1}{r}{1.65e-02} & \multicolumn{1}{r}{1.07e-04} \\
\multicolumn{3}{l|}{\emph{\#seeds used}} & \multicolumn{5}{c|}{19/20} & \multicolumn{5}{c|}{20/20} & \multicolumn{5}{c}{20/20} \\

 \midrule

\bottomrule
  & \hspace{0.5em}Ref. results${}^{(*)}$   &&
  \multicolumn{1}{r}{--} & \multicolumn{1}{r}{--} & \multicolumn{1}{r}{3.80e-01} & \multicolumn{1}{r}{--} & \multicolumn{1}{r|}{--} &
  \multicolumn{1}{r}{2.33e-04}  & \multicolumn{1}{r}{--} & \multicolumn{1}{r}{--} & \multicolumn{1}{r}{2.80e-01} & \multicolumn{1}{r}{--} &
  \multicolumn{1}{r}{--} & \multicolumn{1}{r}{--} & \multicolumn{1}{r}{6.20e-01} & \multicolumn{1}{r}{--} & \multicolumn{1}{r}{--}\\
 \bottomrule
\end{tabular}%
}%
}
\end{table*}

%% file: tables/table_new_k6_s10_avg_full_v2.tex
\begin{table*}[t!]
\centering
\scriptsize
\caption{Estimating $\Prob(g_{10}(\bfX)>\gamma), \bfX\sim\mathcal{N}(\bfmu^{(B)},\bfSigma^{(B)})$, $K=6$,
$R=10^5$. \\ CEVE${}^{(*)}$: results from \cite{asmussen2011efficient}.
}
\label{tab:res_k6_s10_full}
\setlength{\tabcolsep}{2pt}
\rowcolors{3}{gray!11}{white}
\makebox[\textwidth][c]{%
\resizebox{1.15\textwidth}{!}{%
\begin{tabular}{c l c |
*{5}{S[scientific-notation=true,table-format=1.3e2,table-number-alignment=right,table-text-alignment=right]}
*{5}{S[scientific-notation=true,table-format=1.3e2,table-number-alignment=right,table-text-alignment=right]} |
*{5}{S[scientific-notation=true,table-format=1.3e2,table-number-alignment=right,table-text-alignment=right]}}
\toprule
 & \multicolumn{1}{r}{$\gamma=$} & & \multicolumn{5}{c}{20,000} & \multicolumn{5}{c}{40,000} & \multicolumn{5}{c}{500,000} \\
 & \multicolumn{1}{r}{$I_{\rm true}=$} &  & \multicolumn{5}{c}{1.03526e-03} & \multicolumn{5}{c}{4.683e-04} & \multicolumn{5}{c}{1.77400e-05} \\

 \midrule
 & \multicolumn{1}{r}{} & \multicolumn{1}{c|}{\textbf{FG}}
& \multicolumn{1}{c}{$\hat{I}_R$}
& \multicolumn{1}{c}{Var red.}
& \multicolumn{1}{c}{log-err}
& \multicolumn{1}{c}{CoV}
& \multicolumn{1}{c|}{$\frac{\textrm{log-err}}{\textrm{Var red.}}$}
& \multicolumn{1}{c}{$\hat{I}_R$}
& \multicolumn{1}{c}{Var red.}
& \multicolumn{1}{c}{log-err}
& \multicolumn{1}{c}{CoV}
& \multicolumn{1}{c|}{$\frac{\textrm{log-err}}{\textrm{Var red.}}$}
& \multicolumn{1}{c}{$\hat{I}_R$}
& \multicolumn{1}{c}{Var red.}
& \multicolumn{1}{c}{log-err}
& \multicolumn{1}{c}{CoV}
& \multicolumn{1}{c}{$\frac{\textrm{log-err}}{\textrm{Var red.}}$}\\

  \midrule
 & \hspace{0.1em}CMC  && \multicolumn{1}{r}{1.03e-03} & \multicolumn{1}{r}{1} & \multicolumn{1}{r}{7.22e-02} & \multicolumn{1}{r}{6.97e-03} & \multicolumn{1}{r|}{7.22e-02} & \multicolumn{1}{r}{4.69e-04} & \multicolumn{1}{r}{1} & \multicolumn{1}{r}{1.12e-01} & \multicolumn{1}{r}{7.98e-03} & \multicolumn{1}{r|}{1.12e-01} & \multicolumn{1}{r}{2.30e-05} & \multicolumn{1}{r}{1} & \multicolumn{1}{r}{4.28e-01} & \multicolumn{1}{r}{1.36e-02} & \multicolumn{1}{r}{4.28e-01} \\

 \multicolumn{3}{l|}{\emph{Ablation}} & \multicolumn{5}{c|}{} & \multicolumn{5}{c|}{} & \multicolumn{5}{c}{} \\

 & \hspace{0.1em}CFI-u  && \multicolumn{1}{r}{4.37e-05} & \multicolumn{1}{r}{1.60e+01} & \multicolumn{1}{r}{3.26e+00} & \multicolumn{1}{r}{3.65e-01} & \multicolumn{1}{r|}{2.04e-01} & \multicolumn{1}{r}{3.40e-06} & \multicolumn{1}{r}{1.01e+03} & \multicolumn{1}{r}{5.10e+00} & \multicolumn{1}{r}{3.30e-01} & \multicolumn{1}{r|}{5.05e-03} & \multicolumn{1}{r}{0} & \multicolumn{1}{r}{\xmark} & \multicolumn{1}{r}{\xmark} & \multicolumn{1}{r}{\xmark} & \multicolumn{1}{r}{\xmark} \\
\multicolumn{3}{l|}{\emph{Our: CFI-based}} & \multicolumn{5}{c|}{} & \multicolumn{5}{c|}{} & \multicolumn{5}{c}{} \\
 & \hspace{0.1em}fRKL-c & \textcolor{green!60!black}{\cmark} & \multicolumn{1}{r}{1.02e-03} & \multicolumn{1}{r}{2.48e+01} & \multicolumn{1}{r}{1.66e-02} & \multicolumn{1}{r}{2.00e-02} & \multicolumn{1}{r|}{6.70e-04} & \multicolumn{1}{r}{4.59e-04} & \multicolumn{1}{r}{5.17e+01} & \multicolumn{1}{r}{2.46e-02} & \multicolumn{1}{r}{2.07e-02} & \multicolumn{1}{r|}{4.76e-04} & \multicolumn{1}{r}{1.78e-05} & \multicolumn{1}{r}{7.70e+02} & \multicolumn{1}{r}{2.97e-02} & \multicolumn{1}{r}{3.24e-02} & \multicolumn{1}{r}{3.86e-05} \\
 & \hspace{0.1em}fFKL-c & \textcolor{green!60!black}{\cmark} & \multicolumn{1}{r}{\textbf{1.02e-03}} & \multicolumn{1}{r}{\textbf{2.29e+02}} & \multicolumn{1}{r}{\textbf{1.04e-02}} & \multicolumn{1}{r}{\textbf{6.55e-03}} & \multicolumn{1}{r|}{\textbf{4.53e-05}} & \multicolumn{1}{r}{\textbf{\emph{4.62e-04}}} & \multicolumn{1}{r}{\textbf{\emph{4.20e+02}}} & \multicolumn{1}{r}{\textbf{\emph{1.36e-02}}} & \multicolumn{1}{r}{\textbf{\emph{7.23e-03}}} & \multicolumn{1}{r|}{\textbf{\emph{3.24e-05}}} & \multicolumn{1}{r}{\textbf{1.79e-05}} & \multicolumn{1}{r}{\textbf{7.71e+03}} & \multicolumn{1}{r}{\textbf{9.87e-03}} & \multicolumn{1}{r}{\textbf{1.01e-02}} & \multicolumn{1}{r}{\textbf{1.28e-06}} \\

 & \hspace{0.1em}CFI-c & \textcolor{green!60!black}{\cmark} & \multicolumn{1}{r}{1.02e-03} & \multicolumn{1}{r}{3.51e+01} & \multicolumn{1}{r}{2.06e-02} & \multicolumn{1}{r}{1.59e-02} & \multicolumn{1}{r|}{5.85e-04} & \multicolumn{1}{r}{4.62e-04} & \multicolumn{1}{r}{8.88e+01} & \multicolumn{1}{r}{1.69e-02} & \multicolumn{1}{r}{1.55e-02} & \multicolumn{1}{r|}{1.90e-04} & \multicolumn{1}{r}{\textbf{\emph{1.80e-05}}} & \multicolumn{1}{r}{\textbf{\emph{3.09e+03}}} & \multicolumn{1}{r}{\textbf{\emph{1.62e-02}}} & \multicolumn{1}{r}{\textbf{\emph{1.56e-02}}} & \multicolumn{1}{r}{\textbf{\emph{5.24e-06}}} \\
 & \hspace{0.1em}CFI-c+RKL-c & \textcolor{green!60!black}{\cmark} & \multicolumn{1}{r}{9.98e-04} & \multicolumn{1}{r}{1.09e+00} & \multicolumn{1}{r}{7.10e-02} & \multicolumn{1}{r}{4.50e-02} & \multicolumn{1}{r|}{6.53e-02} & \multicolumn{1}{r}{4.67e-04} & \multicolumn{1}{r}{9.51e+00} & \multicolumn{1}{r}{3.31e-02} & \multicolumn{1}{r}{3.88e-02} & \multicolumn{1}{r|}{3.48e-03} & \multicolumn{1}{r}{1.80e-05} & \multicolumn{1}{r}{9.39e+02} & \multicolumn{1}{r}{2.65e-02} & \multicolumn{1}{r}{2.75e-02} & \multicolumn{1}{r}{2.83e-05} \\
 & \hspace{0.1em}CFI-c+FKL-c & \textcolor{green!60!black}{\cmark} & \multicolumn{1}{r}{\textbf{\emph{1.02e-03}}} & \multicolumn{1}{r}{\textbf{\emph{1.52e+02}}} & \multicolumn{1}{r}{\textbf{\emph{1.31e-02}}} & \multicolumn{1}{r}{\textbf{\emph{8.03e-03}}} & \multicolumn{1}{r|}{\textbf{\emph{8.64e-05}}} & \multicolumn{1}{r}{\textbf{4.63e-04}} & \multicolumn{1}{r}{\textbf{4.08e+02}} & \multicolumn{1}{r}{\textbf{1.12e-02}} & \multicolumn{1}{r}{\textbf{7.30e-03}} & \multicolumn{1}{r|}{\textbf{2.75e-05}} & \multicolumn{1}{r}{1.81e-05} & \multicolumn{1}{r}{6.96e+02} & \multicolumn{1}{r}{2.03e-02} & \multicolumn{1}{r}{1.83e-02} & \multicolumn{1}{r}{2.92e-05} \\
 & \hspace{0.1em}CFI-c+RKL-c+F  && \multicolumn{1}{r}{9.90e-04} & \multicolumn{1}{r}{2.70e+01} & \multicolumn{1}{r}{4.49e-02} & \multicolumn{1}{r}{1.90e-02} & \multicolumn{1}{r|}{1.66e-03} & \multicolumn{1}{r}{4.56e-04} & \multicolumn{1}{r}{3.84e+00} & \multicolumn{1}{r}{6.28e-02} & \multicolumn{1}{r}{3.81e-02} & \multicolumn{1}{r|}{1.63e-02} & \multicolumn{1}{r}{\emph{1.80e-05}} & \multicolumn{1}{r}{\emph{2.81e+03}} & \multicolumn{1}{r}{\emph{1.47e-02}} & \multicolumn{1}{r}{\emph{1.25e-02}} & \multicolumn{1}{r}{\emph{5.24e-06}} \\
 & \hspace{0.1em}CFI-c+FKL-c+F  && \multicolumn{1}{r}{\emph{1.02e-03}} & \multicolumn{1}{r}{\emph{1.61e+02}} & \multicolumn{1}{r}{\emph{1.40e-02}} & \multicolumn{1}{r}{\emph{6.97e-03}} & \multicolumn{1}{r|}{\emph{8.74e-05}} & \multicolumn{1}{r}{\emph{4.63e-04}} & \multicolumn{1}{r}{\emph{2.47e+02}} & \multicolumn{1}{r}{\emph{1.22e-02}} & \multicolumn{1}{r}{\emph{7.98e-03}} & \multicolumn{1}{r|}{\emph{4.92e-05}} & \multicolumn{1}{r}{1.79e-05} & \multicolumn{1}{r}{2.35e+03} & \multicolumn{1}{r}{1.37e-02} & \multicolumn{1}{r}{1.36e-02} & \multicolumn{1}{r}{5.83e-06} \\
\multicolumn{3}{c|}{\hspace{-0.2cm}\emph{\#seeds used}} & \multicolumn{5}{c|}{20/20} & \multicolumn{5}{c|}{20/20} & \multicolumn{5}{c}{18/20} \\

 \bottomrule
  & \hspace{0.5em}CEVE${}^{(*)}$   &&
  \multicolumn{1}{r}{1.02e-03} & \multicolumn{1}{r}{--} & \multicolumn{1}{r}{--} & \multicolumn{1}{r}{1.28e-01} & \multicolumn{1}{r|}{--} &
  \multicolumn{1}{r}{4.63e-04}  & \multicolumn{1}{r}{--} & \multicolumn{1}{r}{--} & \multicolumn{1}{r}{8.60e-02} & \multicolumn{1}{r}{--}  &
  \multicolumn{1}{r}{1.79e-05} & \multicolumn{1}{r}{--} & \multicolumn{1}{r}{--} & \multicolumn{1}{r}{4.46e-02}& \multicolumn{1}{r}{--}  \\
  \bottomrule
\end{tabular}%
}%
}
\end{table*}

%% file: tables/tab_defensive_lambda_sweep.tex
\begin{table}[H]
\centering
\small
\caption{Sensitivity of the Hesterberg-style defensive baseline on $g_1$
with $K=16$, $R=10^5$, and 20 independent evaluation runs.}
\label{tab:defensive_lambda_sweep}
\setlength{\tabcolsep}{6pt}
\begin{tabular}{lrrrr}
\toprule
Method & Estimate & Var. reduction & log-error & CoV \\
\midrule
fRKL-d, $\lambda=0.1$ & 4.773e-06 & 1.73e+03 & 5.06e-02 & 7.53e-02 \\
fRKL-d, $\lambda=0.2$ & 4.810e-06 & 1.71e+03 & 6.52e-02 & 7.55e-02 \\
fRKL-d, $\lambda=0.3$ & 4.866e-06 & 1.07e+03 & 8.54e-02 & 9.33e-02 \\
fRKL-d, $\lambda=0.4$ & 4.828e-06 & 1.03e+03 & 6.24e-02 & 9.49e-02 \\
fRKL-d, $\lambda=0.5$ & 4.955e-06 & 8.29e+02 & 8.66e-02 & 9.76e-02 \\
\bottomrule
\end{tabular}
\end{table}

%% file: 2026_NeurIPS_flowce_appendix_times.tex
\subsection{Indicative running times}
\label{app:running_times}

We report indicative training times for $g_1$ ($d=2$), $g_6$ ($d=100$), and
$g_7$ ($d=62$, ResNet). The measurements use our shared PyTorch
implementation, but wall-clock time depends on implementation details such as
batching, parallelization, and hardware. We did not optimize the code for
wall-clock time and used largely shared, conservative epoch counts across
benchmarks.

\begin{table}[h!]
\centering
\small
\caption{Indicative training times in seconds. Panel (a) reports total training
time; panel (b) reports separately measured operations on $g_7$.}
\label{tab:runtime_comparison}
\begin{tabular}{lrr}
\toprule
\multicolumn{3}{c}{(a) Total training time} \\
\midrule
Method & $g_1$ ($d=2$) & $g_6$ ($d=100$) \\
\midrule
fFKL-c      & $\sim 3000$ & $\sim 1570$ \\
fRKL-c      & $\sim 55$   & $\sim 340$ \\
CFI-c       & $\sim 5$    & $\sim 8$ \\
CFI-c+FKL-c & $\sim 60$   & $\sim 234$ \\
\bottomrule
\end{tabular}

\vspace{0.7em}

\begin{tabular}{lr}
\toprule
\multicolumn{2}{c}{(b) Measured operations on $g_7$ ($d=62$, ResNet)} \\
\midrule
Operation & Time \\
\midrule
Full fRKL-c training          & $\sim 130$ \\
CFI-c                         & $\sim 76$ \\
FKL-c refinement after CFI-c & $\sim 13$ \\
RKL-c refinement after CFI-c & $\sim 52$ \\
Flow after CFI-c+FKL-c       & $\sim 38$ \\
Flow after CFI-c+RKL-c       & $\sim 42$ \\
\bottomrule
\end{tabular}
\end{table}

CFI is inexpensive because each intermediate level requires only an unweighted
GMM fit to the current elite sample. Full adaptive KL/CE baselines instead
repeat proposal optimization, including many gradient-based epochs, at every
level. For example, on $g_6$, CFI-c+FKL-c takes approximately $234$ seconds,
compared with $1570$ seconds for fFKL-c, while CFI-c alone takes approximately
$8$ seconds.

On $g_7$, flow refinement after CFI-c+RKL-c adds approximately $42$ seconds
and reduces CoV from $6.80\times10^{-3}$ to $6.24\times10^{-3}$. After
CFI-c+FKL-c, it adds approximately $38$ seconds and reduces CoV from
$2.14\times10^{-2}$ to $1.65\times10^{-2}$. These gains are useful but
moderate relative to the extra training time, which motivates treating flow
refinement as optional. The final KL and flow stages could likely use fewer
epochs than the conservative shared settings used here.

%% file: 2026_NeurIPS_flowce_appendix_scalability.tex
\section{Scalability: higher-dimensional ResNet experiment}
\label{app:resnet174}

To further assess scalability, we consider a higher-dimensional variant of the
ResNet-based rare-event benchmark $g_7$. The mechanism discussed in the main
text becomes especially relevant in higher dimensions: in rare-event regimes,
the conditional distribution $f(\x\mid g(\x)>\gamma)$ may concentrate near
lower-dimensional regions or along a small number of dominant directions. This
can lead to increasingly ill-conditioned covariance structures, making
unconstrained KL-based optimization more prone to covariance collapse and
degenerate proposals.

We therefore repeat the ResNet experiment with a finer grouping of network
weights, increasing the perturbation dimension from $d=62$ to $d=174$. The setup
is otherwise analogous to the $g_7$ benchmark described in
Appendix~\ref{app:benchmark_settings}. Results are averaged over 20 independent
seeds and use a GMM proposal with $K=16$ components.

\begin{table}[h!]
\centering
\small
\caption{Scalability experiment on the high-dimensional ResNet benchmark
($d=174$, $K=16$). Results are averaged over 20 independent seeds.}
\label{tab:resnet174_scalability}
\begin{tabular}{lccc}
\toprule
Method & log-err & CoV & log-err/var\_red \\
\midrule
CMC             & 3.89e-01 & 1.45e-01 & 3.89e-01 \\
CFI-c           & 1.42e-01 & 1.34e-01 & 3.48e-02 \\
CFI-c+RKL-c     & 1.57e-02 & 6.79e-03 & 5.37e-06 \\
CFI-c+FKL-c     & 1.76e-01 & 1.45e-01 & 9.89e-02 \\
\bottomrule
\end{tabular}
\end{table}

All CFI-based methods outperform the CMC baseline on this higher-dimensional
problem. In particular, CFI-c+RKL-c achieves the strongest performance, reducing
the composite metric log-err/var\_red by approximately four orders of magnitude
relative to CMC. These results are consistent with the lower-dimensional
ResNet experiment and indicate that the proposed constrained, coverage-first
pipeline remains effective as the perturbation dimension increases.

%% file: 2026_NeurIPS_flowce_appendix_sensitivity.tex
\section{Hyperparameters and training details}
\label{app:hyperparameters}

All experiments are run using a largely shared set of hyperparameters.
Thanks to the automatic selection of the smoothing parameter $\tau$ and
$\beta$-annealing for the covariance constraint, no manual tuning is required across benchmarks, and the same configuration is used throughout unless stated otherwise.

\paragraph{Common hyperparameters.}
For all GMM-based stages (CFI, constrained FKL, constrained RKL), we use:
\begin{itemize}
  \item initial sample size for model initialization: $10{,}000$,
  \item surrogate batch size: $M=25{,}000$,
  \item number of gradient steps: $500$,
  \item learning rate: $5\times10^{-3}$,
  \item smoothing factor: $\alpha=0.75$,
  \item maximum number of outer iterations: $100$,
  \item elite fraction (when applicable): $\rho=0.2$.
\end{itemize}

\paragraph{Sensitivity to the smoothing parameter $\tau$.}
The reverse-KL objectives are sensitive to the selection of smoothing parameter $\tau$. This sensitivity is illustrated in Fig.~\ref{fig:tau_ablation} for the $g_1$ benchmark with $K=6$ and $\bfSigma=\bfSigma^{(A)}$, where variance reduction relative to CMC is reported as a function of~$\tau$.

\begin{figure}[t]
  \centering
  \includegraphics[width=0.48\textwidth]{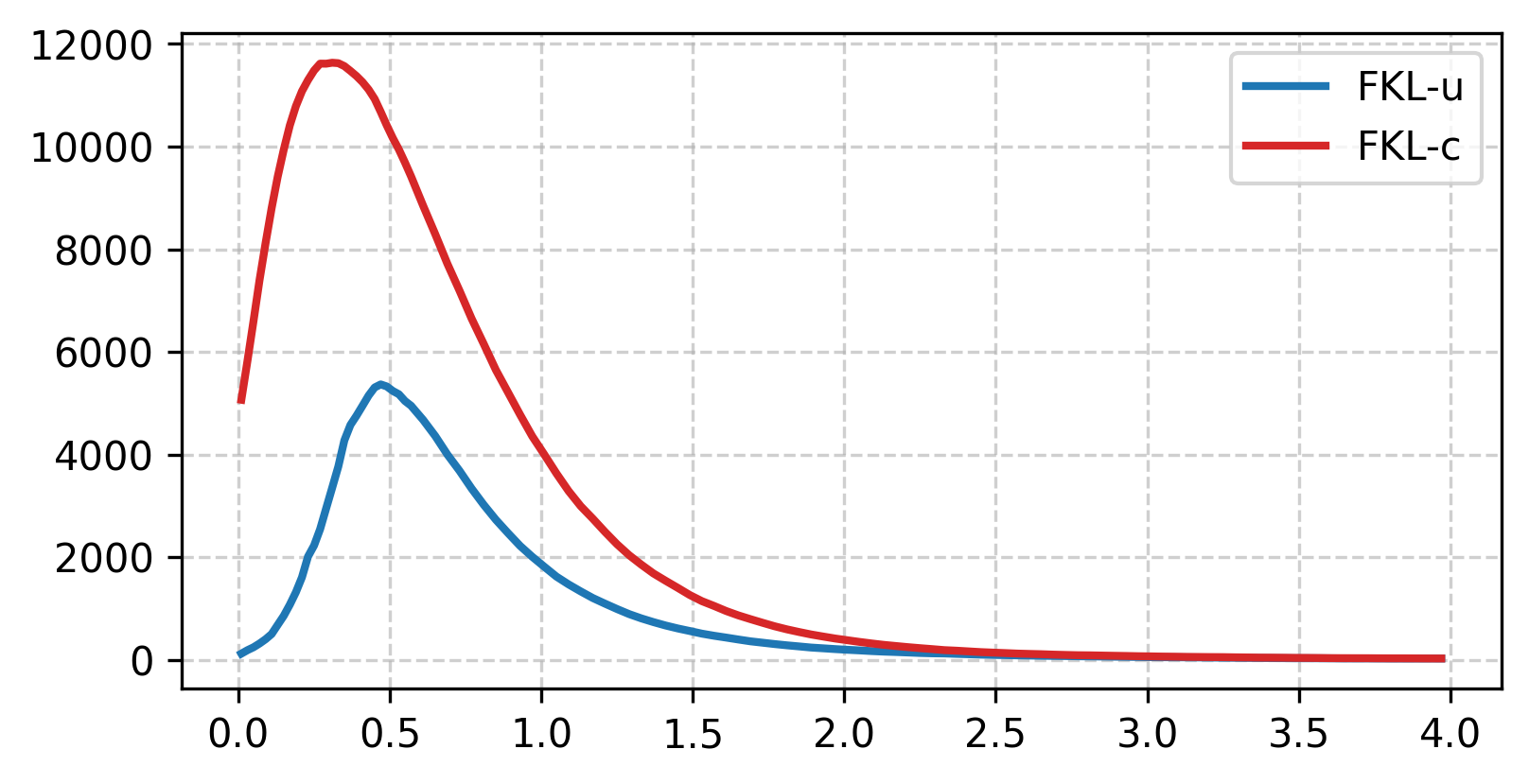}
  \caption{Variance reduction ($y$-axis) relative to CMC, as a function of the smoothing parameter $\tau$ ($x$-axis)
  for RKL-c (red) and RKL-u (blue).}
  \label{fig:tau_ablation}
\end{figure}

  As seen in the figure, good performance is attained only within a relatively narrow range of~$\tau$, while overly small or large values lead to a noticeable degradation. Our adaptive procedure stabilizes early training and automatically selects values of~$\tau$ within the effective range  without any problem-specific tuning. In all experiments reported in this paper, a single choice of~$c=0.6$ and bounds $[\tau_{\min},\tau_{\max}]=[0.02,0.8]$ was used for all benchmarks.

In addition, $\beta$-annealing gradually increases $\beta$ to avoid premature
variance collapse and recovers the covariance floor required for the guarantee
in the final constrained-GMM proposal.

\paragraph{Flow-specific hyperparameters.}
For the RealNVP refinement stage, we use the same training protocol across all
benchmarks, with the following settings:
\begin{itemize}
  \item initial sample size: $25{,}000$,
  \item surrogate batch size: $M=25{,}000$,
  \item number of gradient steps: $1{,}000$,
  \item learning rate: $10^{-5}$.
\end{itemize}

The only differences across benchmarks concern the architecture of the coupling
networks.
For low-dimensional problems ($g_1$, $g_2$, $g_3$, with $d=2$), we use $48$
coupling layers with hidden dimension $1024$.
For all higher-dimensional benchmarks, we use $32$ coupling layers with hidden dimension $32$. All other flow hyperparameters are kept fixed.

\paragraph{Parameter smoothing.}
After each refitting step in CFI, we smooth the newly fitted parameters
with their previous values. Let primes denote the newly fitted,
pre-smoothed quantities and let $\alpha\in(0,1]$ be the weight assigned
to the new fit.

Means and mixture weights are updated as
\[
\bfmu_k^{(t)}
=
\alpha\bfmu_k'
+
(1-\alpha)\bfmu_k^{(t-1)},
\qquad
\pi_k^{(t)}
=
\alpha\pi_k'
+
(1-\alpha)\pi_k^{(t-1)}.
\]
The mixture logits are then set to
$\ell_k^{(t)}=\log(\pi_k^{(t)}+\varepsilon)$,
with a small $\varepsilon>0$ for numerical stability.

For unconstrained GMMs, we smooth the covariance matrices and recompute
their Cholesky factors:
\[
\bfSigma_k^{(t)}
=
\alpha\bfSigma_k'
+
(1-\alpha)\bfSigma_k^{(t-1)},
\qquad
\bfL_k^{(t)}
=
\operatorname{Cholesky}(\bfSigma_k^{(t)}).
\]

For constrained GMMs, we smooth only the free covariance component.
Writing
$\bfS_k'=\bfL_k'\bfL_k'^\top$ and
$\bfS_k^{(t-1)}
=\bfL_k^{(t-1)}(\bfL_k^{(t-1)})^\top$,
we set
\[
\bfS_k^{(t)}
=
\alpha\bfS_k'
+
(1-\alpha)\bfS_k^{(t-1)},
\qquad
\bfL_k^{(t)}
=
\operatorname{Cholesky}(\bfS_k^{(t)}),
\]
and form
\[
\bfSigma_k^{(t)}
=
\bfL_k^{(t)}(\bfL_k^{(t)})^\top
+
\beta\bfSigma.
\]
This preserves the covariance floor at the current value of $\beta$.

\section{Hyperparameter sensitivity analysis}
\label{app:sensitivity}

This section studies the sensitivity of the proposed method to the main CFI
hyperparameters. The parameter $\rho$ controls the fraction of samples used in
the level update and therefore affects the exploration--exploitation trade-off:
smaller values use a more selective elite set, while larger values produce more
conservative level updates. The smoothing parameter $\alpha$ regularizes the
parameter updates: smaller values lead to slower, more stable updates, while
larger values make the proposal adapt more aggressively.

We evaluate sensitivity on the $g_5$ benchmark ($d=40$) using CFI-c+RKL-c. We
report the coefficient of variation (CoV) and the composite metric
$\text{log-err}/\text{var\_red}$ used in the main experiments.

\paragraph{Sensitivity to $\rho$.}
Table~\ref{tab:rho_sensitivity} varies $\rho$ while keeping $\alpha=0.75$ fixed.

\begin{table}[h]
\centering
\small
\caption{Sensitivity to $\rho$ for CFI-c+RKL-c on $g_5$ ($d=40$), with fixed $\alpha=0.75$.}
\label{tab:rho_sensitivity}
\begin{tabular}{ccc}
\toprule
$\rho$ & CoV & $\text{log-err}/\text{var\_red}$ \\
\midrule
0.05 & $4.28\cdot 10^{-2}$ & $6.40\cdot 10^{-4}$ \\
0.10 & $4.02\cdot 10^{-2}$ & $4.23\cdot 10^{-4}$ \\
0.25 & $4.30\cdot 10^{-2}$ & $4.88\cdot 10^{-4}$ \\
0.30 & $3.76\cdot 10^{-2}$ & $3.02\cdot 10^{-4}$ \\
0.50 & $3.81\cdot 10^{-2}$ & $4.10\cdot 10^{-4}$ \\
\bottomrule
\end{tabular}
\end{table}

The performance varies only mildly across a broad range of values. We use
$\rho=0.25$ as a representative default value: it is well within the stable
region and avoids relying on the best value from this particular ablation.

\paragraph{Sensitivity to $\alpha$.}
Table~\ref{tab:alpha_sensitivity} varies $\alpha$ while keeping $\rho=0.25$
fixed.

\begin{table}[h]
\centering
\small
\caption{Sensitivity to $\alpha$ for CFI-c+RKL-c on $g_5$ ($d=40$), with fixed $\rho=0.25$.}
\label{tab:alpha_sensitivity}
\begin{tabular}{ccc}
\toprule
$\alpha$ & CoV & $\text{log-err}/\text{var\_red}$ \\
\midrule
0.25 & $4.77\cdot 10^{-2}$ & $4.44\cdot 10^{-4}$ \\
0.50 & $4.62\cdot 10^{-2}$ & $4.90\cdot 10^{-4}$ \\
0.75 & $4.30\cdot 10^{-2}$ & $4.88\cdot 10^{-4}$ \\
0.90 & $3.86\cdot 10^{-2}$ & $3.95\cdot 10^{-4}$ \\
\bottomrule
\end{tabular}
\end{table}

Again, the results are stable across moderate values of $\alpha$. 
The value $\alpha=0.75$ used in our experiments lies in this stable region and provides a
balanced trade-off between update stability and adaptation speed. This choice is
also consistent with standard cross-entropy practice, where smoothing is commonly
used to stabilize parameter updates, with typical values around $\alpha=0.7$ and
recommended ranges such as $0.4\leq \alpha \leq 0.9$
reported in~\cite{rubinstein1999cross,rubinstein2004cross}.

\paragraph{Role of the sample size $M$.}
The surrogate sample size $M$ plays a different role from $\rho$ and $\alpha$.
It is primarily a simulation budget rather than a stabilization hyperparameter:
increasing $M$ reduces Monte Carlo variability but increases computational cost,
without changing the structure of the algorithm. For this reason, and to ensure
fair comparison across methods, we keep $M$ fixed throughout the experiments.

\paragraph{Summary.}
Overall, the ablation indicates that the method is not highly sensitive to the
precise choice of $\rho$ or $\alpha$. This supports the use of a shared
hyperparameter configuration across benchmarks.

%% file: 2026_NeurIPS_flowce_appendix_alternative.tex
\section{Alternative divergence measures}
\label{app:divergence_measures}

While alternative discrepancies such as Wasserstein distance and kernel Stein
discrepancy (KSD) have been proposed for comparing distributions, their use in
rare-event importance sampling is less direct. In our setting, the optimal
target distribution is
\[
h^{\rm opt}(\x) \propto \mathbf{1}(g(\x)>\gamma)f(\x),
\]
which is not directly sampleable. KL-based objectives naturally lead to
tractable estimators such as
\[
\Exp\!\left[
\mathbf{1}(g(\bfX)>\gamma)\log p_{\bftheta}(\bfX)
\right],
\]
which can be approximated using samples from $f$ or from an importance proposal.

In contrast, Wasserstein-based approaches require solving an optimal transport
problem between $p_{\bftheta}$ and an empirical approximation of
$h^{\rm opt}$, leading to nested optimization procedures such as Sinkhorn
iterations. These can be computationally demanding and statistically unstable in
rare-event regimes, where the number of effective samples from the target region
is small. Similarly, KSD-based methods require access to the score function
$\nabla \log h^{\rm opt}(\x)$, which is not directly available because
$h^{\rm opt}$ contains a hard indicator. Smoothing the indicator is possible, but
introduces an additional temperature parameter and corresponding sensitivity.

For these reasons, KL-based formulations remain the most natural choice in our
setting: they connect directly to cross-entropy and variational objectives,
admit simple Monte Carlo estimators, and are compatible with the constrained
proposal families used in this work.

\section{Alternative rare-event sampling methods}
\label{app:alternative_sampling_methods}

A number of alternative approaches have been proposed for rare-event estimation.
Large-deviation-based methods exploit problem-specific structure to construct
asymptotically optimal proposals, but typically rely on strong assumptions and
are not easily applicable in general black-box settings.

Sequential Monte Carlo (SMC) and related splitting methods construct a sequence
of intermediate distributions or level sets to gradually reach the rare-event
region. These methods can be effective and are natural baselines for rare-event
simulation, but they often require careful design of intermediate levels,
resampling, and rejuvenation steps. Neural SMC variants further increase
proposal flexibility, at the cost of additional modeling and optimization
complexity.

Adjoint-based sampling methods provide another route to rare-event estimation by
using gradient or sensitivity information to guide sampling toward important
regions. Such methods can be powerful when accurate derivatives and suitable
problem structure are available, but they are less directly applicable in
general black-box settings and may require substantially more algorithmic
machinery.

In contrast, our approach focuses on a lightweight importance-sampling framework
with explicit variance control and minimal structural assumptions. By combining
coverage-first initialization with constrained KL refinement, we obtain a robust
and computationally efficient pipeline applicable across a wide range of
rare-event problems.

\section{CFI vs gradient ascent: detailed comparison}
\label{sec:grad_appendix}
A natural question is whether the coverage stage of CFI can be replaced by gradient-based optimization of $g(\x)$, for example via multiple gradient ascent restarts. We show that this approach suffers from two qualitatively different failure modes:
\begin{enumerate}[label=(\roman*)]
\item \textbf{Local failure in multimodal functions}, where some modes are systematically missed;
\item \textbf{Global failure for unbounded (``runaway'') functions}, where trajectories diverge to infinity.
\end{enumerate}
\textbf{(i) Multimodal functions.} 
We consider a two-dimensional function with three modes located at
$\bfmu_1 = (4,0)$, $\bfmu_2 = (0,4)$, and $\bfmu_3 = (3.3, 3.3)$.
The function is defined as
\[
g(\x) = \max\{h_1(\x), h_2(\x), h_3(\x)\},
\]
where
\[
h_1(\x) = \exp\left(-\frac{(x_1-4)^2 + x_2^2}{8}\right), \quad
h_2(\x) = \exp\left(-\frac{x_1^2 + (x_2-4)^2}{8}\right),
\]
and
\[
h_3(\x) = 2.8 \exp\left(-\frac{(x_1-3.3)^2 + (x_2-3.3)^2}{1.6}\right)
        - 0.9 \exp\left(-\frac{x_1^2}{0.7}\right)
        - 0.9 \exp\left(-\frac{x_2^2}{0.7}\right).
\]
This construction ensures that, in a neighborhood of the origin, the gradient points toward either $\bfmu_1$ or $\bfmu_2$, making $\bfmu_3$ difficult to reach.
Figure~\ref{fig:multimodal_grad} illustrates the geometry of this example. The gradient field directs trajectories toward $\bfmu_1$ and $\bfmu_2$, while $\bfmu_3$ has a very small basin of attraction.
\begin{figure}[t]
\centering
\begin{minipage}{0.48\linewidth}
\centering
\includegraphics[width=\linewidth]{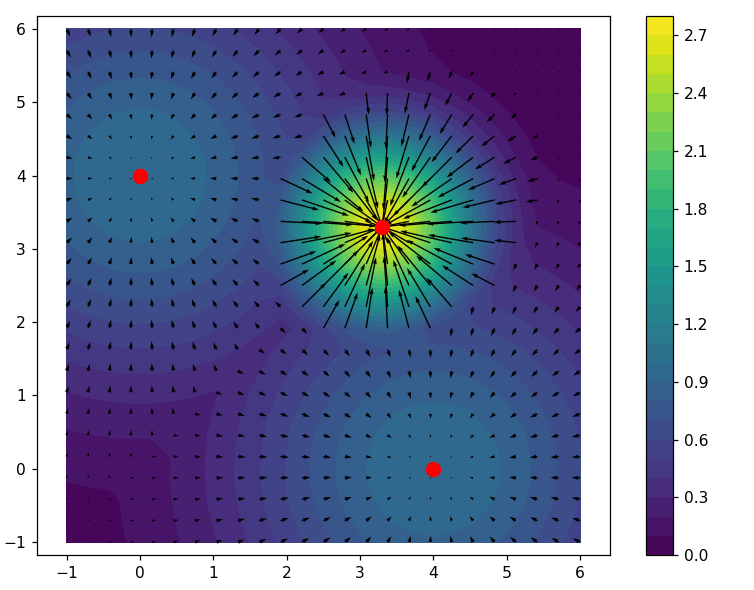}
\end{minipage}
\hfill
\begin{minipage}{0.48\linewidth}
\centering
\includegraphics[width=\linewidth]{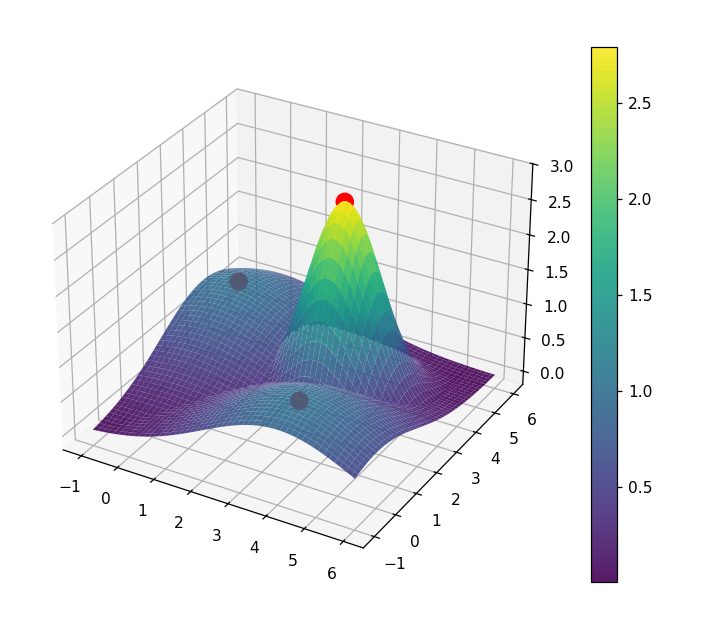}
\end{minipage}
\caption{
Multimodal example: contour plot with gradient field (left) and surface plot (right).
Gradient ascent trajectories initialized near the origin are attracted to $\bfmu_1$ and $\bfmu_2$, while $\bfmu_3$ has a very small basin of attraction.
}
\label{fig:multimodal_grad}
\end{figure}
We perform 1000 gradient ascent runs initialized from $\bfX \sim \mathcal{N}(\mathbf{0}, \mathbf{I})$ and assign each final point to the nearest mode. We compare this with CFI-c using Gaussian mixture proposals. For gradient ascent, the entries below count final points among the 1000 restarts; for CFI-c, they count fitted mixture components assigned to each mode.
\begin{center}
\begin{tabular}{lccc}
\toprule
Method & $\bfmu_1$ & $\bfmu_2$ & $\bfmu_3$ \\
\midrule
Gradient ascent (1000 restarts) & 499 & 501 & 0 \\
CFI-c (GMM6)  & 1 & 1 & 4 \\
CFI-c (GMM16) & 4 & 3 & 9 \\
CFI-c (GMM18) & 8 & 5 & 5 \\
\bottomrule
\end{tabular}
\end{center}
Gradient ascent never discovers $\bfmu_3$, while CFI-c identifies all three modes in each reported configuration. This highlights a fundamental limitation of gradient ascent: even with multiple restarts, it produces a collection of local optima rather than a proposal distribution covering the relevant regions.

\textbf{(ii) Unbounded (``runaway'') functions (global failure).}
We now consider the function
\[
g(x_1,x_2)=\max(x_1-4,\,-x_1x_2-4,\;x_1x_2-4).
\]
\begin{wrapfigure}{l}{0.35\linewidth}
\centering
\includegraphics[width=\linewidth]{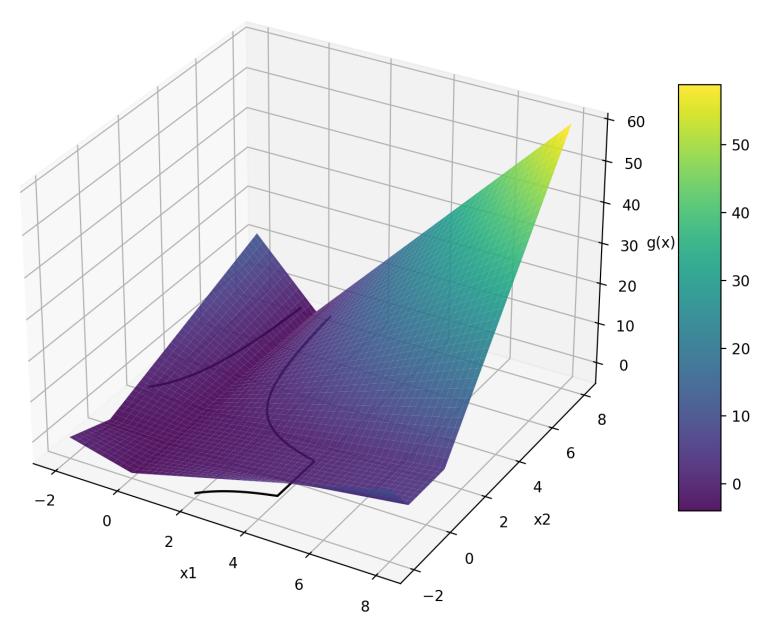}
\caption{
Runaway example: the function grows unbounded in certain directions, causing gradient ascent trajectories to diverge instead of approaching relevant level sets.
}
\label{fig:runaway_surface}
\vspace{-7ex}
\end{wrapfigure}
This function grows unbounded in multiple directions, making direct maximization ill-posed. Gradient ascent trajectories initialized from $\bfX \sim \mathcal{N}(\mathbf{0}, \mathbf{I})$ diverge rapidly, with $\lVert\x_t\rVert \to \infty$, rather than converging to regions near the level set $g(\x) \approx \gamma$.

Figure~\ref{fig:runaway_surface} illustrates the geometry of this example. The absence of a well-defined maximizer prevents gradient ascent from producing a meaningful proposal distribution or notion of coverage.

In contrast, CFI-c targets successive level sets rather than a maximizer, so unboundedness of $g$ alone does not make its objective ill-posed.

\textbf{Summary.}
These examples show that gradient ascent may fail either locally (missing modes) or globally (diverging), whereas CFI-c provides a stable distributional mechanism for covering the rare-event region.

%% file: 2026_NeurIPS_flowce_appendix_safeice.tex
\section{Comparison with Safe-ICE}
\label{app:safeice}

We additionally compare our method with Safe-ICE, a recent safe
cross-entropy-based importance sampling method for rare-event simulation
\cite{gao2025safeice}. We used the publicly available implementation from
\cite{safeice_github}. Since this implementation was computationally expensive
in our setting, with some low-dimensional runs already requiring several hours
for $K_0=16$ initial components, we restrict the comparison to representative
benchmarks rather than running Safe-ICE across the full experimental suite.

The comparison is motivated by the close connection between Safe-ICE and
cross-entropy-based adaptive importance sampling, as well as by related
dimension-reduction-based CE approaches for rare-event simulation
\cite{uribe2021cross}. We consider three cases: a low-dimensional benchmark
($g_2$, $d=2$), a moderate-dimensional version of $g_5$ with $d=4$, and the
higher-dimensional $g_5$ benchmark with $d=40$.

For the two-dimensional benchmark $g_2$, both methods produce accurate estimates.
The true value is $2.226\cdot 10^{-3}$; Safe-ICE gives
$2.18\cdot 10^{-3}$, while our method gives $2.225\cdot 10^{-3}$, with lower
computational cost in our implementation. Thus, in low dimension, both methods
are accurate.

For the moderate-dimensional case $g_5$ with $d=4$, Safe-ICE also gives a
reasonable estimate, $2.78\cdot 10^{-5}$, compared with the reference value
$3.15\cdot 10^{-5}$. This indicates that Safe-ICE remains effective for
moderately complex settings, although the error is slightly larger than for our
method.

The behavior differs sharply in the higher-dimensional benchmark $g_5$ with
$d=40$. In this case, our method remains stable and accurate, giving
$3.40\cdot 10^{-5}$ compared with the reference value $3.15\cdot 10^{-5}$.
In contrast, Safe-ICE collapses and returns a degenerate estimate,
$9.14\cdot 10^{-24}$. This behavior is consistent with the high-dimensional
concentration and covariance-collapse effects discussed in the main text:
cross-entropy-based updates can become unstable when the learned proposal
concentrates too strongly around a small part of the rare-event region.

\paragraph{Interpretation.}
These results suggest that Safe-ICE is effective in low-dimensional or
moderately complex settings, but its stability may degrade in higher dimensions.
By contrast, our constrained-GMM variants combine
coverage-first initialization with covariance constraints that provide the
finite-variance guarantee of Theorem~\ref{thm:gmm_finite_var}; the robustness of
the flow-refined variants is an empirical finding. From a methodological perspective, the difference is
structural: Safe-ICE follows a CE/ICE-style proposal adaptation based on
cross-entropy updates, whereas our constrained CFI-based pipeline explicitly
separates coverage from fitting and enforces covariance floors that prevent the
variance-collapse mechanism identified in Proposition~\ref{propp:kl_collapse}
and Theorem~\ref{thm:gmm_finite_var}.

%% file: 2026_NeurIPS_flowce_appendix_benchmark.tex
\section{Benchmark function definitions}
\label{app:benchmark_settings}

Here we  provide  explicit definitions of the benchmark functions used
in the experimental evaluation.
All benchmarks are standard in the rare--event simulation literature and were
adopted from previous studies, as summarized in the main text.

For $\bfx=(x_1,\ldots,x_d)^\top$ and $\mathbf{c}=(3.8,\ldots,3.8)^\top$, the
functions are defined as follows:
\[
\begin{aligned}
g_1(\bfx)
&= 1 - \min\!\left\{\|\bfx+\mathbf{c}\|^2,\ \|\bfx-\mathbf{c}\|^2\right\},
\\[2pt]
g_2(\bfx)
&= \min\Big\{
0.1(x_1-x_2)^2 - \tfrac{x_1+x_2}{\sqrt{2}} + 3,\;
0.1(x_1-x_2)^2 + \tfrac{x_1+x_2}{\sqrt{2}} + 3, x_1 - x_2 + \tfrac{7}{\sqrt{2}},\;
x_2 - x_1 + \tfrac{7}{\sqrt{2}}
\Big\},
\\[4pt]
g_3(\bfx)
&= -\!\left(
\sum_{i\ \mathrm{odd}}^{d-1} 0.1\,(x_i-x_{i+1})^2
- \tfrac{1}{\sqrt{d}}\sum_{i=1}^d x_i + 2.5
\right),
\\[6pt]
g_5(\bfx)
&= \sum_{i=1}^{m}
\Big[
(x_{4i-3}+10x_{4i-2})^2
+5(x_{4i-1}-x_{4i})^2
+(x_{4i-2}-2x_{4i-1})^4
+10(x_{4i-3}-x_{4i})^4
\Big],
\\[6pt]
g_6(\bfx)
&= -\frac{1}{\sqrt{d}}\sum_{i=1}^d x_i,
\quad
g_7(\bfx)
= \mathrm{ResNet}(\bfx),
\quad
g_{10}(\bfx)
= \sum_{i=1}^{10} \exp(x_i).
\end{aligned}
\]

The functions $g_1$, $g_2$, and $g_3$ are used in the two--dimensional setting
($d=2$) and represent, respectively, a two--mode configuration (``Leaf''),
a radial benchmark with piecewise boundary, and an anisotropic problem with a
linear tilt.
Functions $g_5$ and $g_6$ correspond to higher--dimensional test problems commonly
used in rare--event estimation, while $g_{10}$ models the tail probability of a
sum of correlated lognormal random variables.

\paragraph{ResNet-based benchmark ($g_7$).}

We adopt the ResNet-based rare-event formulation introduced in~\cite{gao2023rare}, which studies rare but significant performance degradation of deep neural networks under stochastic parameter perturbations. Let $\bfw= \{w_i\}_{i=1}^{d}$ denote the parameters of a trained torchvision ResNet-18 model with \texttt{IMAGENET1K\_V1} weights~\cite{torchvision_resnet18_weights}, grouped in a layerwise manner according to the network architecture, where $d=62$
corresponds to the number of layerwise parameter groups.
A perturbation vector $\bfX \in \mathbb{R}^{d}$ is drawn from a standard Gaussian distribution
(i.e., $\bfX \sim \calN(\bf{0}, \bf{I})$) and induces multiplicative noise applied independently to each parameter group.

In practice, the perturbed parameters are constructed as
\begin{center}
\begin{verbatim}
for i, param in enumerate(model.parameters()):
    param.data = param.data * (1 + x[i] * noise_std)
\end{verbatim}
\end{center}
which corresponds to the   componentwise definition $w_i(\bfx) = w_i \bigl( 1 + \kappa x_i \bigr), i = 1, \dots, d,$
where $\kappa = 0.2$ controls the perturbation magnitude.
Let $\bfw$ denote  the nominal (unperturbed) network parameters, while $\bfw(\bfx)$ denotes their multiplicatively perturbed version corresponding to the realization~$\bfx$ (thus  $\bfw\equiv\bfw(\mathbf{0})$).
Let $L(\bfw)$ denote the average cross-entropy loss evaluated on a fixed validation set of 24 images from
the \texttt{imagenetv2-matched-frequency-format} ImageNetV2 dataset~\cite{recht2019imagenetv2}. The accepted $g_7$ evaluation uses the fixed prepared
\texttt{imagenetv2-matched-frequency-format-val\_first24.pt} tensor and
torchvision ResNet-18 \texttt{IMAGENET1K\_V1} weights. They are external,
user-supplied local inputs and are neither bundled nor hosted by the release.
The evaluator checks the documented fixed filename, size, and SHA-256
identities and runs offline without download or cache fallback. The legacy
preparation script does not exactly reproduce the accepted tensor. The function is defined as
\begin{equation}
g_7(\bfx) = L(\bfw(\bfx)) - L(\bfw).
\label{eqn:g7_function}
\end{equation}
Rare events correspond to $g_7(\bfx) > \gamma$, where $\gamma = 5.15715$ is chosen via extensive CMC calculations such that $\mathbb{P}(g_7(\bfX) > \gamma) \approx 6 \times 10^{-5}$, matching the reference value reported in~\cite{gao2023rare}.

Evaluating $g_7(\bfx)$ is computationally expensive, as each function call requires inference with a perturbed network. To enable efficient evaluation while preserving the structure of the rare-event problem, a first-order approximation of the loss around the nominal parameters $\bfw$ is employed. A first-order Taylor expansion of (\ref{eqn:g7_function}) yields
\[
g_7(\bfx) \approx  \kappa \sum_{i=1}^{d} x_i \left\langle \nabla_{w_i} L(\bfw), w_i \right\rangle,
\]
where $w_i$ denotes the $i$-th layerwise parameter group  and
$\langle \cdot, \cdot \rangle$ denotes the Euclidean inner product. This linearized model enables rapid evaluation of $g_7(\bfx)$ for a large number of perturbation samples and facilitates the application of importance-sampling-based rare-event estimation methods.

\paragraph{Summary of functions.}
References and experimental settings (thresholds $\gamma$, dimensions, and input
distributions) follow the original works cited in the main text,
they are summarized in Table~\ref{tab:benchmark_summary}.
Input distribution  $A$ denotes $\calN(\bfmu^{(A)},\bfSigma^{(A)})$ and
$B$ denotes $\calN(\bfmu^{g_{10}},\bfSigma^{g_{10}})$.

\begin{table}[t]
\centering
\caption{Summary of benchmark settings used in the experiments.}
\label{tab:benchmark_summary}
\begin{tabular}{lcccc}
\toprule
Benchmark & Dimension $d$ & Input & Threshold $\gamma$
          & Reference $\Prob(g(\bfX)>\gamma)$ \\
\midrule
$g_1$    & 2   & A & 0         & 4.7934e-6  \\
$g_2$    & 2   & A & 0         & 2.22668e-3 \\
$g_3$    & 2   & A & 0         & 4.2073e-3  \\
$g_5$    & 40  & A & -403.6636 & 3.1500e-5  \\
$g_6$    & 100 & A & 3.5       & 2.3262e-4  \\
$g_7$    & 62  & A & 5.15715   & 6.00e-5    \\
\midrule
$g_{10}$ & 10  & B & 20k       & 1.03526e-3 \\
$g_{10}$ & 10  & B & 40k       & 4.683e-4   \\
$g_{10}$ & 10  & B & 500k      & 1.77400e-5 \\
\bottomrule
\end{tabular}
\end{table}

\section{Normalizing flow architecture (RealNVP)}
\label{app:realnvp_architecture}

The normalizing flow used in the final refinement stage follows the RealNVP
formulation introduced by~\cite{DinhSB17}.
The flow defines an invertible
transformation
$T_{\bftheta}:\mathbb{R}^d\to\mathbb{R}^d$, which induces the density
\[
p^{\rm flow}_{\bftheta}(\bfX)
=
p^{\rm KL}\!\big(T_{\bftheta}(\bfX)\big)
\left|\det J_{\bftheta}(\bfX)\right|,
\]
where $p^{\rm KL}$ denotes the base distribution obtained after KL fitting
(i.e., the constrained GMM), and $J_{\bftheta}$ is the Jacobian of
$T_{\bftheta}$.
The transformation $T_{\bftheta}$ is constructed as a composition of $K$ affine
coupling layers,
\[
T_{\bftheta} = T_K \circ \cdots \circ T_1.
\]
In each coupling layer, the input vector is partitioned into two subvectors
$(\mathbf{x}_a,\mathbf{x}_b)$, and the transformation is defined as
\[
  \begin{aligned}
  \mathbf{x}_a &\mapsto \mathbf{x}_a, \\
  \mathbf{x}_b &\mapsto \mathbf{x}_b \odot \exp\left( s(\mathbf{x}_a) \right) + t(\mathbf{x}_a),
  \end{aligned}
  \]
where $s(\cdot)$ and $t(\cdot)$ are neural networks.
The partition alternates across successive layers, ensuring that all input
dimensions are transformed and that the overall mapping is sufficiently
expressive.
This structure yields a triangular Jacobian, making the determinant
$\det J_{\bftheta}(\bfX)$ computationally tractable.

Throughout training, the base distribution $p^{\rm KL}$ remains fixed, and the
flow acts purely as a refinement layer on top of the KL-fitted proposal.

%% file: 2026_NeurIPS_flowce_appendix_other.tex
\begin{figure*}[t!]
\centering
\begin{tabular}{ccc}
\includegraphics[width=120pt]{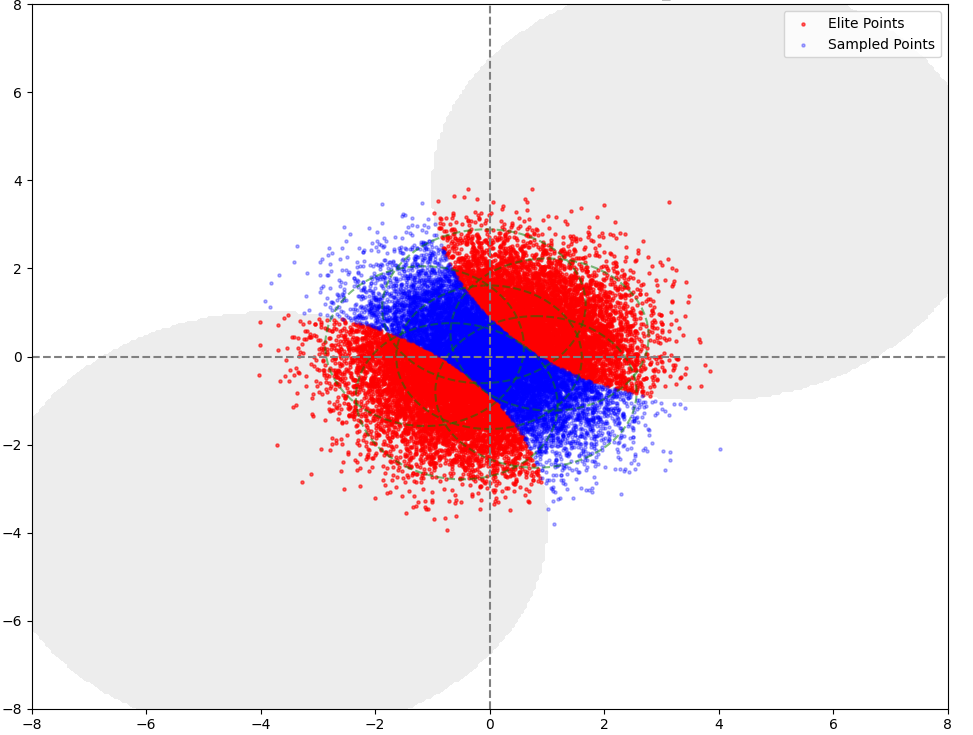} &
\includegraphics[width=120pt]{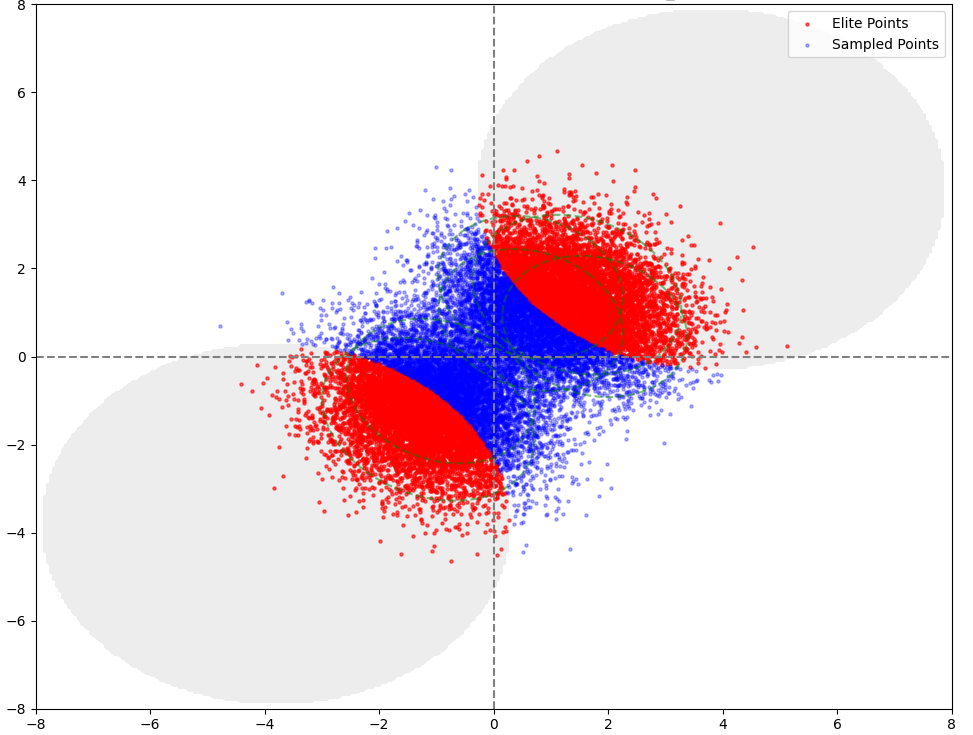} &
\includegraphics[width=120pt]{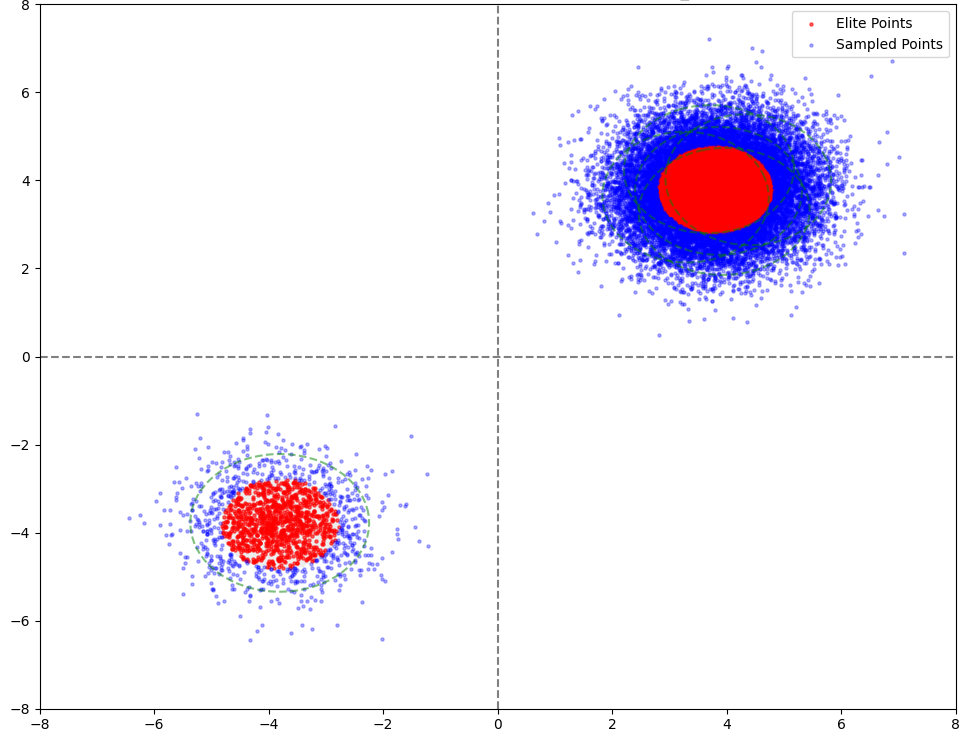} \\
iter=0, $\gamma_0=-22.15$ &
iter=2, $\gamma_2=-15.71$ &
iter=97, $\gamma_{97}=-0.06$ \\[5pt]
\includegraphics[width=120pt]{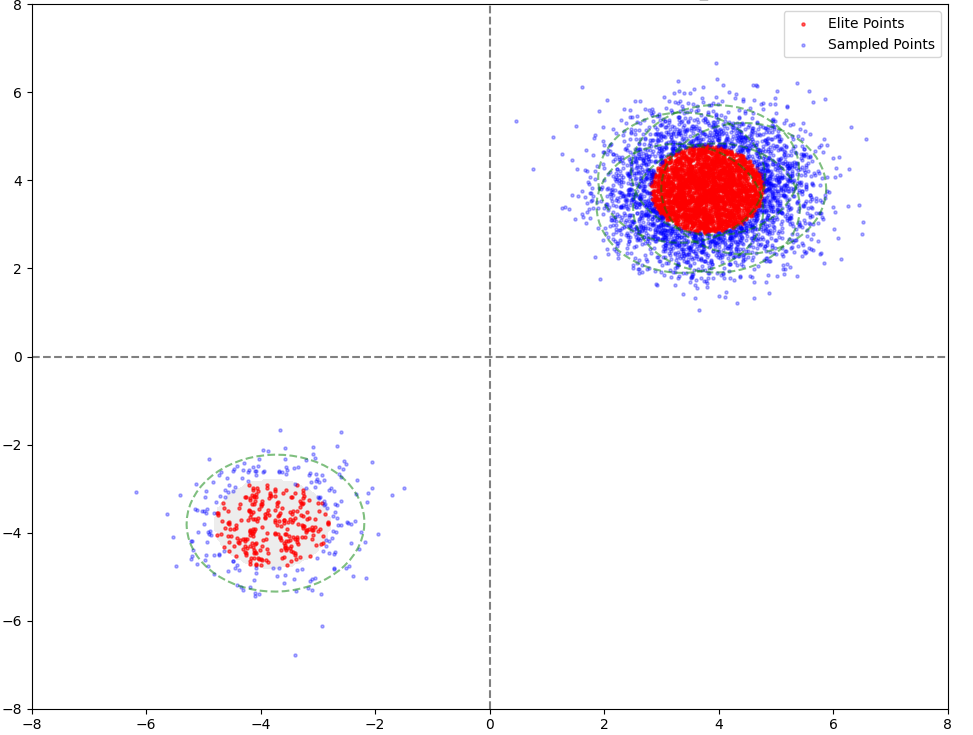} &
\includegraphics[width=120pt]{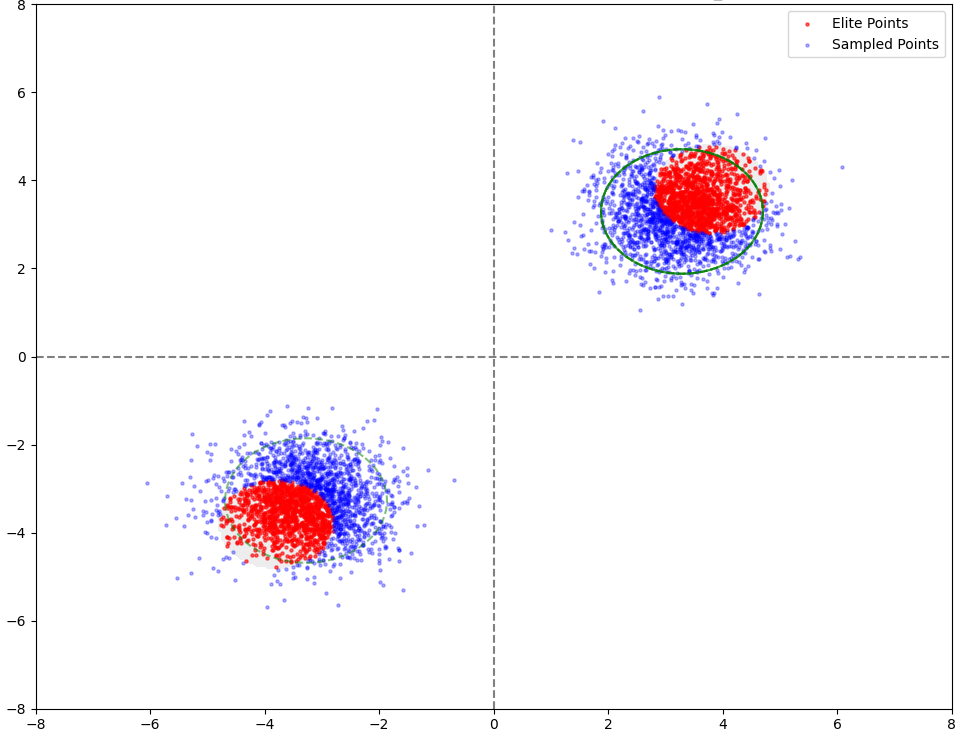} &
\includegraphics[width=120pt]{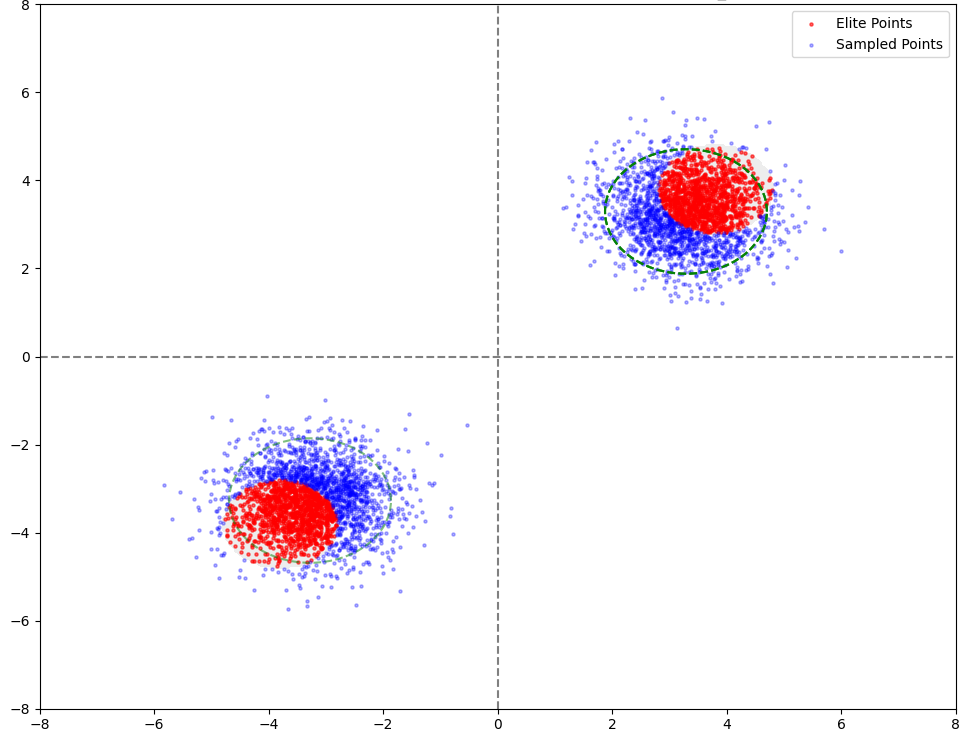} \\
iter=0, $\gamma_0=0.0$ &
iter=200, $\gamma_{200}=0.0$ &
iter=400, $\gamma_{400}=0.0$ \\[5pt]
\includegraphics[width=120pt]{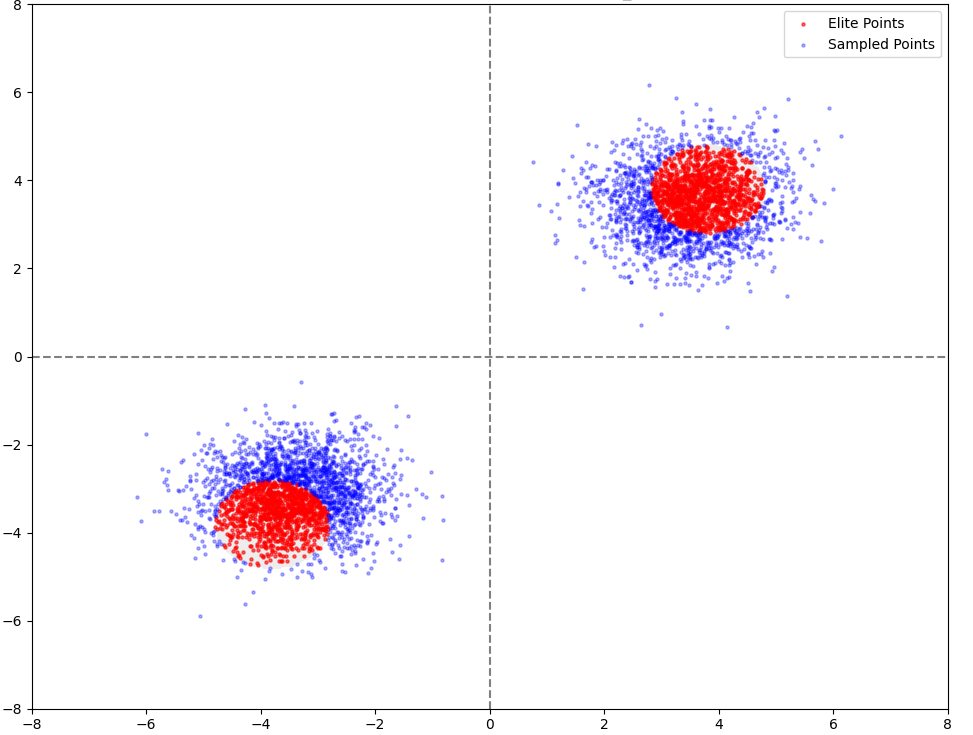} &
\includegraphics[width=120pt]{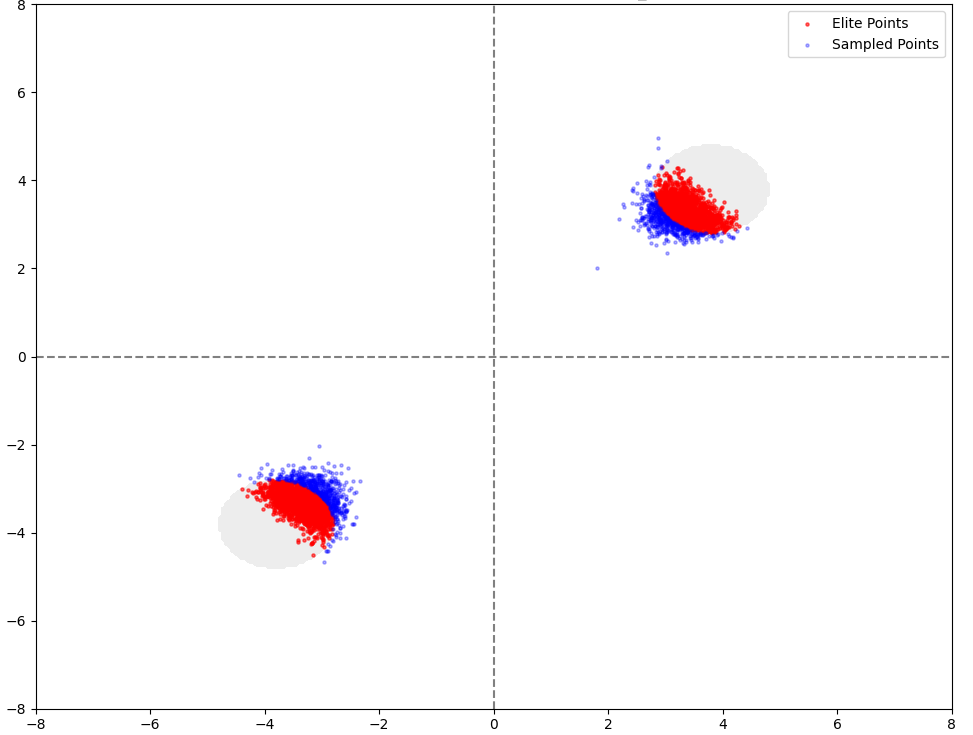} &
\includegraphics[width=120pt]{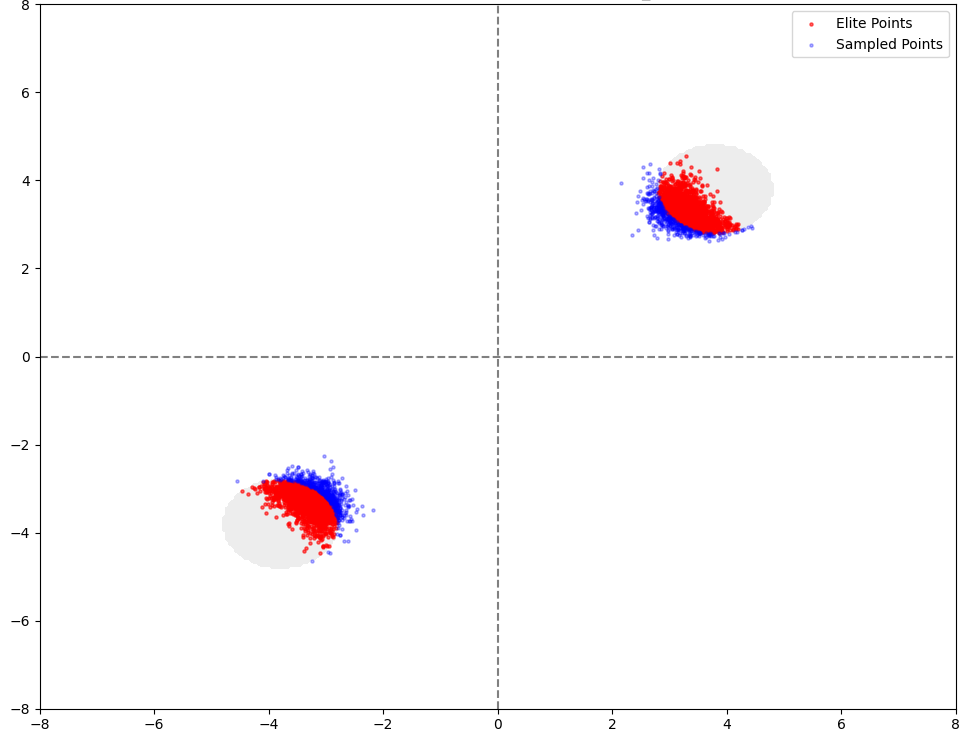} \\
iter=0, $\gamma_0=0.0$ &
iter=200, $\gamma_{200}=0.0$ &
iter=400, $\gamma_{400}=0.0$
\end{tabular}
\caption{
Evolution of the proposal distribution during training for estimating
$\Prob(g_1(\bfX)>0)$ with $\bfX\sim\calN(\bfmu^{(A)},\bfSigma^{g_{10}})$ and $K=6$.
Top to bottom: CFI-c, FKL-c initialized from CFI-c, and the subsequent
flow-based refinement (+Flow).
}
\label{fig:g1}
\end{figure*}
\begin{figure}[t!]
\centering
\begin{tabular}{ccc}
\includegraphics[width=120pt]{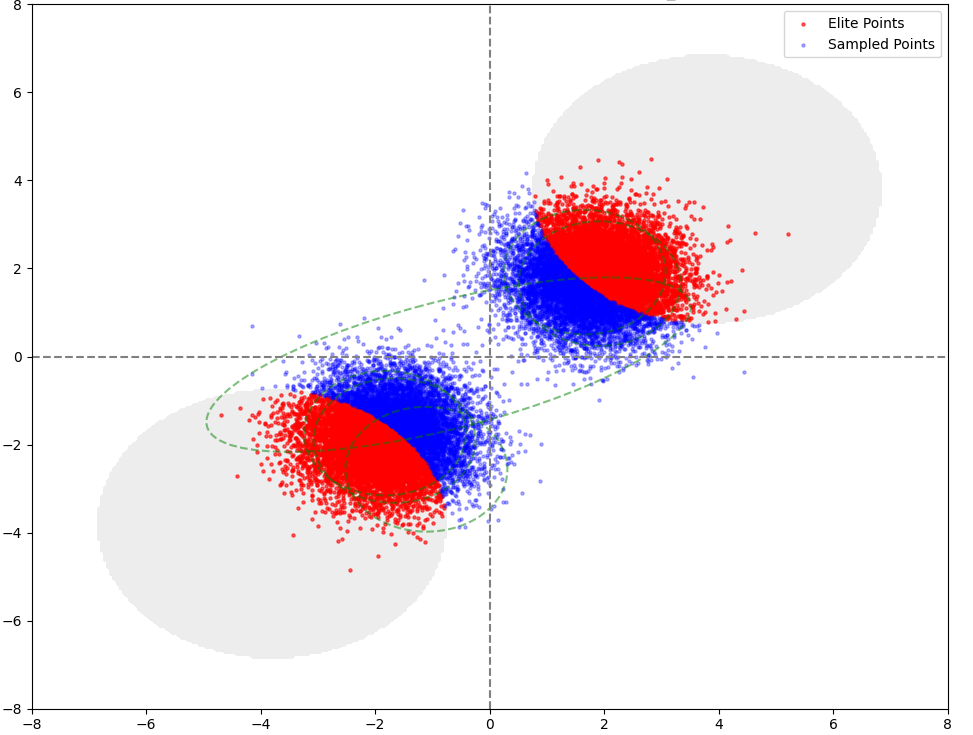} &
\includegraphics[width=120pt]{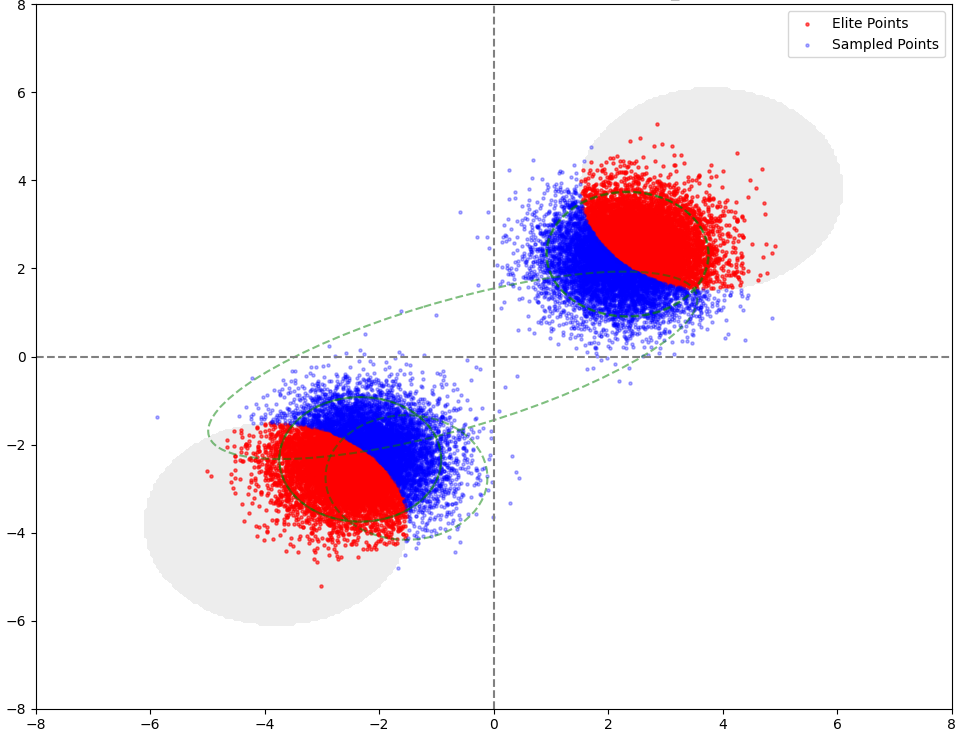} &
\includegraphics[width=120pt]{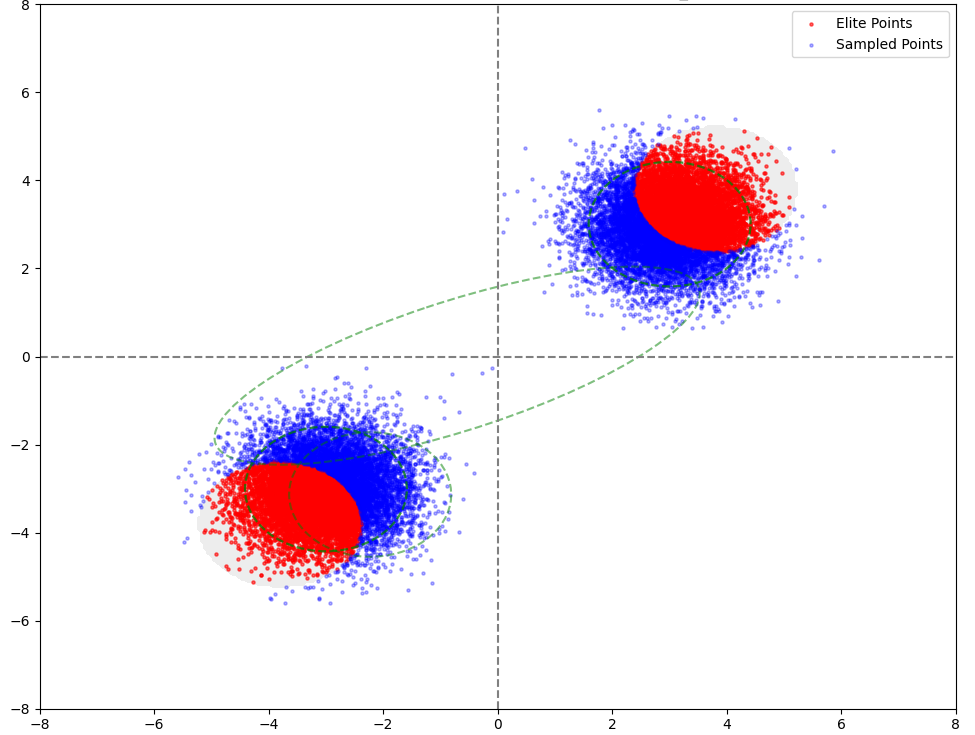} \\
iter=5, $\gamma_5=-8.26$ &
iter=9, $\gamma_9=-4.26$ &
iter=29, $\gamma_{29}=-1.03$
\end{tabular}
\caption{
Selected snapshots from fFKL-c training for estimating
$\Prob(g_1(\bfX)>0)$ with $\bfX\sim\calN(\mathbf{0},I_2)$.
}
\label{fig:g1-face}
\end{figure}

\section{Additional visualizations for $g_1$}
\label{app:graph_g1}

This section provides additional graphical illustrations for the task of
estimating $\Prob(g_1(\bfX)>0)$ with $\bfX\sim\calN(\bfmu^{(A)},\bfSigma^{g_{10}})$
in the two--dimensional setting.
The visualizations complement the numerical results reported in the main body
and illustrate the qualitative behavior of the proposed pipeline across
iterations, model classes, and refinement strategies.

Figure~\ref{fig:g1} shows representative snapshots of the learned proposal
distributions for $K=6$ mixture components.
From top to bottom, the rows correspond to:
(i) the constrained coverage-first initialization (CFI-c),
(ii) constrained forward-KL refinement (FKL-c) initialized from CFI-c, and
(iii) the subsequent flow-based refinement (+Flow) built on top of FKL-c.
Each row displays selected iterations or epochs from the corresponding training
procedure.

\paragraph{Comparison with fully adaptive CE.}
For comparison, Fig.~\ref{fig:g1-face} shows selected snapshots from the fully
adaptive constrained CE method (fFKL-c), in which the proposal is re-optimized
at each intermediate level using importance-weighted likelihood.
On $g_6$ and $g_7$, all fFKL-c evaluations produce positive finite estimates.
The method nevertheless typically requires more iterations and remains
sensitive to hyperparameter choices and finite-sample variability; the $K=16$
estimates in particular vary substantially across evaluation seeds. Thus,
a finite numerical output should not be interpreted as uniformly accurate or
low-variance performance. These observations motivate the two-stage
pipeline based on CFI-c followed by constrained KL refinement.